\documentclass{article}

\usepackage{multirow}
\usepackage[utf8]{inputenc}
\usepackage[english]{babel}
\usepackage{amssymb,amsfonts}
\usepackage{float}
\usepackage{booktabs}
\usepackage{color}
\usepackage{listings}
\usepackage{rotating}
\usepackage[titletoc]{appendix}
\usepackage{pifont}
\usepackage{stmaryrd}
\usepackage{array}
\usepackage{subcaption}

\usepackage{graphicx}
\usepackage{multicol}
\usepackage{framed}
\usepackage[
	operators,
	sets,
	landau,
	probability,
	notions,
	logic,
	ff,
	mm,
	advantage,
	events,
	complexity,
	asymptotics,
	keys,
	lambda]{cryptocode}
\usepackage{adjustbox}
\usepackage{varioref}
\usepackage{hyperref}
\usepackage{breakcites}
\usepackage[section]{placeins}
\usepackage{orcidlink}
\usepackage{tcolorbox}

\usepackage{tikz}
\usetikzlibrary{arrows.meta,decorations.pathmorphing,decorations.footprints,fadings,calc,trees,mindmap,shadows,decorations.text,patterns,positioning,shapes,matrix,fit}

\usepackage{url} %
\usepackage{framed}
\usepackage{comment}
\usepackage[nameinlink,capitalize]{cleveref}

\usepackage{pgf, tikz}
\usetikzlibrary{arrows, automata, positioning}

\setlength{\parindent}{0em}
\floatstyle{boxed}
\setcounter{tocdepth}{2}

\usepackage{xcolor}
\definecolor{cadmiumorange}{rgb}{0.93, 0.53, 0.18}
\definecolor{emerald}{rgb}{0.31, 0.78, 0.47}
\definecolor{amaranth}{rgb}{0.9, 0.17, 0.31}
\definecolor{candypink}{rgb}{0.89, 0.44, 0.48}
\definecolor{caribbeangreen}{rgb}{0.0, 0.8, 0.6}
\definecolor{cornflowerblue}{rgb}{0.39, 0.58, 0.93}
\definecolor{limegreen}{rgb}{0.2, 0.8, 0.2}
\definecolor{mayablue}{rgb}{0.21,0.49,0.74}

\usepackage{utfsym}

\newcommand{\introtakeawaybox}[2][]{%
  \begin{tcolorbox}[
    colback=cornflowerblue!10, %
    colframe=cornflowerblue!90,
    float=t,
    boxrule=0.5pt,
    arc=0mm,
    left=5pt,
    right=5pt,
    top=2pt,
    bottom=2pt,
    fontupper={\fontsize{9.5pt}{11.75pt}\selectfont}
  ]
    \IfNoValueTF{#1}{\textbf{Empirical Takeaway:} #2}{\textbf{Empirical Takeaways #1:} #2} %
  \end{tcolorbox}%
  \vspace*{-0.1cm}
}

\newcommand{\takeawaybox}[2][]{%
  \begin{tcolorbox}[
    colback=cornflowerblue!10, %
    colframe=cornflowerblue!90,
    boxrule=0.5pt,
    arc=0mm,
    left=5pt,
    right=5pt,
    top=2pt,
    bottom=2pt,
    fontupper={\fontsize{9.5pt}{11.75pt}\selectfont}
  ]
    \IfNoValueTF{#1}{\textbf{Takeaway:} #2}{\textbf{Takeaway on #1:} #2} %
  \end{tcolorbox}%
  \vspace*{-0.1cm}
}
\hypersetup{
  linkcolor  = amaranth,
    filecolor=magenta,      
    urlcolor=teal,
    citecolor=mayablue, %
    colorlinks = true,
    breaklinks = true,
    bookmarksopen=true
} 

\usepackage{epigraph}

\newcommand{\ug}[1]{\textbf{\color{limegreen}{\bf\sf [Udaya: #1]}}}

\newcommand{\ignore}[1]{}

\newcommand{\tr}{\mathrm{tr}}

\newcommand{\vB}{\mathbf{v}}
\newcommand{\wB}{\mathbf{w}}
\newcommand{\xB}{\mathbf{x}}

\newcommand{\BB}{\mathbf{B}}
\newcommand{\CB}{\mathbf{C}}

\newcommand{\HB}{\mathbf{H}}
\newcommand{\IB}{\mathbf{I}}

\newcommand{\etaB}{\bm{\eta}}

\newcommand{\Ebb}{\mathbb{E}}

\newcommand{\Ncal}{\mathcal{N}}

\newcommand{\Rcal}{\mathcal{R}}

\usepackage{arxiv}
\usepackage{authblk}
\usepackage{enumitem} %
\usepackage[titletoc]{appendix}
\usepackage{minitoc}

\usepackage{subcaption}

\newtheorem{definition}{Definition}
\newtheorem{theorem}{Theorem}
\newtheorem*{theorem*}{Theorem}

\newtheorem{corollary}{Corollary}

\author[1]{Hanlin Zhang}
\author[1,2]{Depen Morwani}
\author[1]{Nikhil Vyas}
\author[3]{Jingfeng Wu}
\author[4]{\\ Difan Zou}
\author[5]{Udaya Ghai}
\author[5]{Dean Foster}
\author[1,2]{Sham Kakade}

\affil[1]{Harvard University}
\affil[2]{Kempner Institute at Harvard University}
\affil[3]{University of California, Berkeley}
\affil[4]{The University of Hong Kong}
\affil[5]{Amazon}

\addtocontents{toc}{\protect\setcounter{tocdepth}{0}} 

\title{How Does Critical Batch Size Scale in Pre-training?}

\begin{document}

\doparttoc
\faketableofcontents
\maketitle

\begin{abstract}
Training large-scale models under given resources requires careful design of parallelism strategies.
In particular, the efficiency notion of critical batch size (CBS), concerning the compromise between time and compute, marks the threshold beyond which greater data parallelism leads to diminishing returns.
To operationalize it, we propose a measure of CBS and pre-train a series of auto-regressive language models, ranging from 85 million to 1.2 billion parameters, on the C4 dataset. Through extensive hyper-parameter sweeps and careful control of factors such as batch size, momentum, and learning rate along with its scheduling, we systematically investigate the impact of scale on CBS. Then we fit scaling laws with respect to model and data sizes to decouple their effects. Overall, our results demonstrate that CBS scales primarily with data size rather than model size, a finding we justify theoretically through the analysis of infinite-width limits of neural networks and infinite-dimensional least squares regression. Of independent interest, we highlight the importance of common hyper-parameter choices and strategies for studying large-scale pre-training beyond fixed training durations.\footnote{Code available at \url{https://github.com/hlzhang109/critical-batch-size}.}
\end{abstract}

\section{Introduction}
\label{sec:intro}
Efficient optimization is critical in pre-training large models (LMs) at scale \citep{mccandlish2018empirical, shoeybi2019megatron, kaplan2020scaling}. 
In particular, large-batch training is key to accelerating training, as it enables more efficient parallelism across hardware accelerators \citep{you2017large, goyal2017accurate}. 
Specifically, understanding the scaling behavior of the \textit{critical batch size} (CBS) is essential for optimizing data parallelism, as it defines the point beyond which increasing the batch size may result in computational efficiency degradation.
Below the CBS, approximately \textit{linear scaling} is achievable—doubling the batch size can proportionally reduce the number of optimization steps required to reach a target loss. However, beyond this threshold, further increases in batch size would lead to diminishing returns, making it essential to balance computational efficiency with model performance \citep{shallue2019measuring, mccandlish2018empirical}.
This trade-off presents a challenge for studying pre-training given resource constraints as practitioners are compelled to navigate difficult decisions in balancing compute, data, and training time.

We investigate the scaling laws governing CBS in the context of autoregressive transformer-based language modeling \citep{vaswani2017attention, alec2018gpt1}. 
Analyzing CBS in pre-training is challenging due to the absence of a precise formalism relating it to model and data sizes in the literature \citep{kaplan2020scaling, mccandlish2018empirical}. Moreover, the interwined effects of scaling model and data sizes proportionally \citep{hoffmann2022empirical} further complicate this analysis.
Although previous works study the effects of batch size on optimization performance \citep{deepseekai2024deepseek, besiroglu2024chinchilla, porian2024resolving},
two crucial differences are (1) they do not decouple model size and data size; (2) they focus on optimal batch size that reaches the minimum loss instead of critical batch size.
We measure critical batch size as a metric, which represents the batch size that results in certain overhead compared to linear scaling: given a certain target validation loss, we measure the number of steps to reach it for different batch sizes; we derive the transition points that incur certain overhead when doubling the batch size; then we fit scaling laws \textit{w.r.t.} model size and data size to systematically study the scaling of CBS.
\ignore{
Our contributions are three-fold: 
(i) Conceptually, we formalize the notion of critical batch size in terms of both model and data size to independently study their effects. We further propose scaling laws to decouple its growth \textit{w.r.t.} data size and model size through controlled studies. %
(ii) Theoretically, focusing on a simple least square regression via mini-batch SGD, we provide a theoretical justification for the scaling up of critical batch size with increase in dataset size. \ug{Some mention of infinite width NN theorem?}
(iii) Empirically, we adaptively select optimal hyper-parameters by training small 151M proxy models to understand their impact on optimization. 
Of independent interest, our controlled experiments provide insights into a range of common optimization configurations including the hyper-parameters of the Adam optimizer, learning rate warmup, transformer context length adjustments, scaling strategies based on width versus depth, etc. 
Our finding on CBS depends primarily on data size implying that when scaling up data, one can trade compute for efficiency to reduce serial time through greater data parallelism due to the increase of CBS.
}
\begin{figure}[h]
    \centering
    \includegraphics[width=.999\linewidth]{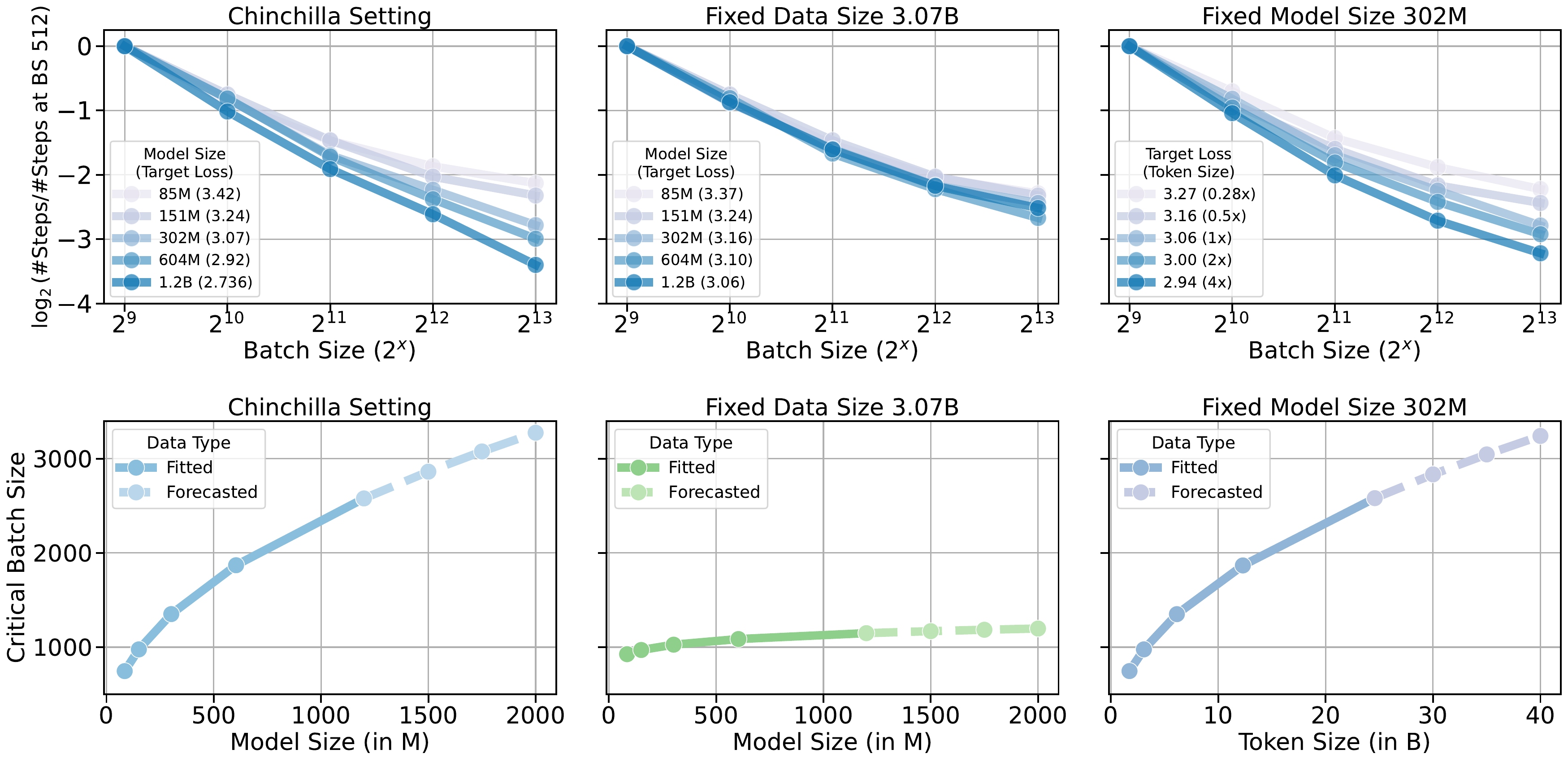}
    \caption{\textbf{Optimization efficiency and scaling of critical batch size in Chinchilla (left) and controlled (middle, right) settings}. 
    To study the effect of CBS across different model sizes, we track the relative number of steps required to reach a certain target validation loss. %
    In the Chinchilla setting (left), we keep the data-to-model size ratio $D/N=C_{\text{Chin}}$ constant and observe that CBS increases with scale.
    However, when controlling for either \textbf{model size (middle)} or \textbf{data size (right)}, the growth in target losses becomes mostly dependent on data size rather than model size (\Cref{sec:scaling_laws}). 
}
    \label{fig:teaser}
\end{figure}

\subsection{Empirical Takeaways}
\vspace{-1mm}
Conceptually, we formalize the notion of critical batch size and examine the independent effects of both model and data size. We start by scaling up data size in tandem with model size, as suggested in the Chinchilla compute-optimal framework \citep{hoffmann2022empirical}. Through controlled studies, we propose scaling laws that decouple the growth of critical batch size from model and data size, leading to the following takeaways — an aspect underexplored in previous research. %

\introtakeawaybox{
\begin{enumerate}[leftmargin=*]
    \item In Chinchilla settings, CBS increases when model size $N$ and data size $D$ (or training duration thereafter, which we will use interchangeably) are jointly scaled up (\Cref{fig:teaser}, left).
    \item If we scale up training duration $D$  while keeping $N$ fixed (\Cref{fig:teaser}, middle), the critical batch size increases to a similar degree.
    \item However, we find that CBS remains nearly invariant when scaling up $N$ while keeping $D$ fixed (\Cref{fig:teaser}, right), suggesting that CBS weakly depends on model size $N$ but more strongly depends on data size $D$.
    \item 
    Our experiments on small 151M proxy models provide insights into a range of common hyper-parameters and optimization configurations, including transformer context length adjustments, and scaling strategies based on width versus depth, among others. %
\end{enumerate}
}

Overall, our empirical finding that \textbf{CBS scales primarily with data size} implies that when scaling up data, one can reduce serial training time through greater data parallelism due to the increase of CBS, without a loss in computational efficiency that can be measured by floating point operations (FLOPs).

\subsection{Theoretical Implications}
Theoretically, maximal update parameterization suggests that, beyond a certain point, increasing the width of the neural network (while keeping data size fixed) does not further increase the critical batch size. In contrast, by analyzing a simple least-squares regression with mini-batch SGD, we provide a theoretical basis for how the critical batch size continues to scale with increasing data size. Specifically, we introduce two informal theorems here and refer readers to \Cref{sec:theory} for more details.

\begin{theorem}[Informal version of \Cref{thm:mup}]
In infinite width regimes \citep{yang21}, training dynamics and performance of the networks become effectively independent of the model size. Consequently, the critical batch size remains nearly invariant when scaling up the model size beyond this point, indicating that larger models do not require proportionally larger batch sizes to achieve optimal training efficiency.
\label{thm:informal}
\end{theorem}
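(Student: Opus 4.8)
The plan is to realize this informal statement as a limit theorem: under the maximal update parameterization ($\mu$P), the entire training trajectory of the finite-width network converges to a deterministic, width-independent limit, and the critical batch size is a continuous functional of that trajectory. First I would fix the $\mu$P abc-parameterization so that the dynamics admit a nontrivial feature-learning limit rather than collapsing to the lazy/NTK regime: initialization variances, per-layer learning rates, and the output multiplier are scaled with the width $n$ according to the recipe of \citep{yang21}. With this parameterization in place, the optimizer, learning rate schedule, data distribution, and crucially the batch size $B$ are all held \emph{fixed} as $n \to \infty$.

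The technical core is the Tensor Programs Master Theorem \citep{yang21}. For any fixed number of SGD steps $T$ and any fixed sequence of mini-batches of size $B$, it guarantees that the pre-activations, activations, and backpropagated gradients at every step converge almost surely to deterministic limits as $n \to \infty$. I would use this to push convergence through to the observable of interest: the per-step loss $L_t^{(n)}(B)$ converges to a limit $L_t^\infty(B)$ that depends on $B$, the depth, the schedule, and the data, but \emph{not} on $n$. This is the precise sense in which training dynamics and performance become effectively independent of model size.

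Next I would translate trajectory convergence into convergence of the derived quantity. Define the steps-to-target map $T^{(n)}(B) = \min\{t : L_t^{(n)}(B) \le L^\star\}$ for a target loss $L^\star$ interior to the achievable range of the limiting dynamics, and let $T^\infty(B)$ be its infinite-width analogue. Since the finite set of batch sizes under consideration is fixed and the loss curves converge at each step, $T^{(n)}(B) \to T^\infty(B)$ for each $B$ (invoking strict monotone decrease of $L_t^\infty$ near the crossing, or a mild continuity assumption, to rule out boundary pathologies). The critical batch size is then recovered as a continuous functional of the map $B \mapsto T(B)$ — namely the batch size beyond which doubling $B$ fails to halve $T(B)$ up to the prescribed overhead — so $B_{\mathrm{crit}}^{(n)} \to B_{\mathrm{crit}}^\infty$, a width-independent constant. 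Hence for all widths above some threshold, $B_{\mathrm{crit}}^{(n)} \approx B_{\mathrm{crit}}^\infty$, which is exactly the claimed near-invariance.

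The main obstacle I anticipate is not the existence of the infinite-width limit — that is handed to us by the Master Theorem — but rather the passage from convergence of the loss at each fixed step to convergence of the CBS. The CBS functional involves both a $\min$-over-$t$ (a first hitting time) and a discrete finite-difference in $B$, neither of which is automatically continuous: a flat region of the limiting loss curve, or a target loss exactly at the infimum of what the dynamics can reach, would make $T^{(n)}$ unstable. Controlling this requires either strict monotonicity of the limiting loss trajectory around the target, or an explicit smoothing of the steps-to-loss curve together with uniformity of the Master Theorem convergence across the finitely many batch sizes and up to the horizon $T$. A secondary caveat worth flagging is that this argument is genuinely about \emph{width}: the clean limit theorem applies to widening a fixed-depth network, so the invariance is with respect to the width-driven notion of model size, and extending it to depth scaling would require the separate depth-$\mu$P machinery.
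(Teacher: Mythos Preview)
Your approach is essentially the paper's: invoke the Tensor Programs Master Theorem to obtain a deterministic, width-independent infinite-width limit of the training trajectory, and conclude that width-dependent observables stabilize. You are, however, considerably more careful than the paper --- the formal \Cref{thm:mup} only asserts that the loss at a fixed $(t,B)$ is within $\epsilon$ for all sufficiently wide networks, and its proof is a two-line appeal to the definition of a limit once the Master Theorem supplies convergence; the CBS invariance itself is never formally established, only stated in the surrounding text as something ``we would expect.'' The hitting-time and continuity obstacles you flag are real and are simply not addressed in the paper, so your proposal strictly exceeds what the paper actually proves; your caveat that the argument is width-specific and that depth requires separate machinery also matches a remark the paper makes.
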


\begin{corollary}[Informal version of \Cref{thm:cbs-linear-regression}]
Consider mini-batch SGD with $D$ samples in the least square problems under power-law source and capacity conditions. The CBS, which enables mini-batch SGD to achieve the minimal expected excess risk while ensuring the fastest possible serial runtime, is given by $B^*(D) = \Theta(D^c)$, where the exponent $c\ge 0$ is determined by the exponents of the source and capacity conditions. In the regime where the variance error tends to be dominant, we have $ 0 < c < 1/2$, indicating CBS grows with data size.
\label{cor:informal}
\end{corollary}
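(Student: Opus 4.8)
The plan is to cast the problem as infinite‑dimensional least squares and to track the excess risk of one‑pass mini‑batch SGD as an explicit function of the data budget $D$, the batch size $B$, and the step size $\gamma$, and then to extract the largest admissible $B$ that still preserves the minimax‑optimal statistical rate (fewest serial steps at optimal risk). Concretely, I would let $\Sigma = \E[x x^\top]$ be the trace‑class population covariance with eigenvalues $\lambda_i \asymp i^{-a}$, $a>1$ (capacity condition), and let $w^*$ obey a source condition with an exponent $b$ controlling how its energy distributes across the eigenbasis. With total budget $D$ and batch size $B$, mini‑batch SGD runs $t = D/B$ steps, each using a fresh batch to form the stochastic gradient; the per‑step noise is reduced by a factor $1/B$, while stability permits a proportionally larger step. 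I would then use the standard bias–variance decomposition of the (tail‑averaged) iterate, $R(B,\gamma,D) \lesssim \mathrm{Bias}(\gamma,t) + \mathrm{Var}(\gamma,B,t)$, where the bias contracts the $i$‑th coordinate like $(1-\gamma\lambda_i)^{t}$ and hence depends on the effective horizon $\gamma t = \gamma D/B$, while the variance scales as $\tfrac{\gamma\sigma^2}{B}\sum_i \min\{\lambda_i,\,1/(\gamma t)\}$, an effective‑dimension sum over the spectrum.

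\textbf{Step-size optimization and the linear-scaling window.} For each $B$ I would minimize over $\gamma$. The key observation is that raising $\gamma \propto B$ holds both the effective horizon $\gamma D/B$ and the variance invariant while cutting the step count by $1/B$ — this is exactly linear scaling. It persists until $\gamma$ saturates the stability ceiling $\gamma \lesssim 1/\lambda_1$ relative to the currently active part of the spectrum, or equivalently until the bias term, which batch averaging cannot reduce, would be forced to degrade the optimal rate. This saturation point is what defines the critical batch size.

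\textbf{Identifying CBS and the exponent.} I would define $B^*(D)$ as the largest $B$ for which the $\gamma$‑optimized risk still matches the minimax rate $R^\star(D)$ of the power‑law problem. Balancing bias against variance along the optimal trajectory — equivalently, evaluating the gradient‑noise scale $\mathrm{tr}(\Sigma C_t)/(g_t^\top \Sigma g_t)$ at the optimal stopping time, where $C_t$ is the per‑example gradient covariance and $g_t$ the full gradient — and substituting $\lambda_i \asymp i^{-a}$ together with the source exponent $b$, the cutoff eigen‑index and the effective dimension grow as explicit powers of $D$. This yields $B^*(D)=\Theta(D^{c})$ with $c$ an explicit function of $a$ and $b$; in the regime where label‑noise (variance) error dominates the bias, a short computation should show that this ratio satisfies $0<c<1/2$, giving the claimed growth.

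\textbf{Main obstacle.} The crux is the bias analysis under mini‑batching: unlike the variance, the bias is not reduced by averaging, and its decay couples $\gamma$, the step count $D/B$, and the full eigenvalue profile through the operator product $\prod_s (I-\gamma\Sigma)$. Obtaining matching upper and lower bounds on this term — tight enough to pin down the exponent $c$ rather than merely its sign — is the delicate part, since a loose bound would blur the boundary of the linear‑scaling window and hence of CBS itself. A secondary difficulty is that the noise scale is trajectory‑dependent, so CBS must be evaluated at the correct data‑dependent stopping time; I would address this by first passing to the continuous‑time (gradient‑flow) limit of the recursion to obtain clean closed forms, and only then discretizing to recover the $B$‑dependence.
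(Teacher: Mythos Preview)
Your overall strategy—bias–variance decomposition of one-pass mini-batch SGD, then optimizing $\gamma$ for each $B$ and reading off the largest $B$ that preserves the optimal rate—matches the paper's, and your linear-scaling observation that the risk depends on $(\gamma,B)$ only through $\gamma/B$ (and hence through the effective horizon $\gamma D/B$) is exactly the reduction the paper exploits. Your variance formula $\tfrac{\gamma\sigma^2}{B}\sum_i\min\{\lambda_i,1/(\gamma t)\}$ also agrees, after substituting $\lambda_i\asymp i^{-a}$, with the paper's effective-dimension term $\tfrac{\sigma^2}{D}\big(k^*+(D\gamma/B)^2\sum_{i>k^*}\lambda_i^2\big)$ where $k^*=\max\{k:\lambda_k\ge B/(D\gamma)\}$.

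Where you diverge is in the machinery. The paper does not go through the gradient-noise scale heuristic or a continuous-time limit; instead it invokes a tight excess-risk characterization for mini-batch SGD (their \Cref{thm:mini-batch}), obtained by adapting the operator calculus of \citet{zou2023benign}: one writes the bias and variance covariance recursions via the operators $\mathcal{T}^{\mathrm{B}}$ and $\mathcal{M}^{\mathrm{B}}$, observes that for Gaussian data $(\mathcal{M}^{\mathrm{B}}-\widetilde{\mathcal{M}})\circ\AB=\tfrac{2}{B}\tr(\HB\AB)\HB$, and then reads off matching upper and lower bounds by replacing $\alpha\mapsto 2/B$, $n\mapsto D/B$, $\sigma^2\mapsto\sigma^2/B$ in the batch-size-one results. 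Under the SNR assumption $\|\wB_0-\wB^*\|_{\HB}^2\lesssim\sigma^2$ the bias–variance cross term is absorbed, and the risk becomes an explicit function of $D\gamma/B$ alone. Plugging in the power laws then gives $\Ebb\Rcal(\bar\wB)-\sigma^2\eqsim\max\{(D\gamma/B)^{(1-b)/a},(D\gamma/B)^{-2}\}+\tfrac{1}{D}(D\gamma/B)^{1/a}$, and a three-case discussion in $(a,b)$ yields $B^*\eqsim D^{1-a/\min\{b,2a+1\}}$ directly.

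The obstacle you flag—the stochastic bias under mini-batching—is genuine, but your proposed fix (pass to gradient flow, then discretize) is both a detour and risky for the goal: the exponent $c$ is pinned down by the \emph{lower} bound on the bias, and a gradient-flow approximation will not by itself certify tightness at the discrete level. Your written bias contraction $(1-\gamma\lambda_i)^t$ is the deterministic GD rate and omits the fourth-moment correction $\tfrac{2\gamma^2}{B}\tr(\HB\BB_{t-1})\HB$ that mini-batch SGD introduces; the paper's route handles this exactly rather than approximately. If you want to keep your framing, the cleanest patch is to replace the noise-scale and continuous-time steps by directly establishing (or citing) the tight risk identity in $D\gamma/B$ and $k^*$, after which the optimization over $(\gamma,B)$ is the short computation you already outlined.
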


\section{Experimental Design and Empirical Findings}
\label{sec:exp}

We describe the experimental settings and refer readers to \Cref{app:exp_details} for extra details.
Throughout this paper, we use the abbreviations `M' for million, `B' for billion, and `T' for trillion.

\subsection{Experimental Settings}

\textbf{Model and training details}.
We train a series of autoregressive LMs with a context length of 512 in different sizes ranging from 85M, 151M, 302M, 604M to 1.2B (\Cref{app:exp_details} \Cref{tab:model_arch}) on C4 \citep{raffel2020exploring} using Adam \citep{kingma2014adam} with optimizer-specific hyper-parameters reported in \Cref{tab:sweep_setting}. We adopt the tokenizer of Eleuther AI's gpt-neox-20b that has a vocabulary of size 50280. We use small 151M proxy models to analyze hyper-parameters in most ablation studies. We set the micro batch size smaller than the global one and use gradient accumulation to simulate the effects of large global batch sizes. We focus on fully synchronized distributed data parallelism scenarios where communication is frequent, which simplifies the evaluation and abstracts actual wall clock savings into a total number of optimization steps. 
More details on optimizer configurations and evaluation strategies are included in \Cref{app:exp_details}.

\textbf{Experimental design and outline}. 
To study CBS in Chinchilla settings, we need to consider target loss on a holdout validation set and measure the number of optimization steps required to reach it.
We consider the optimal batch size as $B_{\text{opt}}$ in the linear scaling regime that incurs no efficiency overhead as detailed in \Cref{app:all_bs}.
We consider the validation loss of an optimal batch size $B_{\text{opt}}=256$ at step $t_{\textrm{Chin}}=C_{\textrm{Chin}}\times N/(\textrm{ctx\_len}\times B)$ for each model size $N$ and batch size $B$ with context length $\textrm{ctx\_len}$ set to be 512, where $C_{\textrm{Chin}}$ is the Chinchilla coefficient.

When scaling up model size jointly with data size, the above implies that each model size would have a different target loss. 
Achieving such a goal through the procedure above is challenging not only due to the combinatorially many hyper-parameters but also the unknown training dynamics of each model size and batch size.
Below, we outline several key aspects to approach the goal:
\vspace{-2mm}
\begin{enumerate}[leftmargin=*]
    \item As we focus on the number of training steps needed to achieve a target validation loss, learning rate decay strategies typically require predefining the total training duration \citep{loshchilov2022sgdr, hu2024minicpm, hägele2024scaling, defazio2024road}. To address this, we propose using \textit{exponential weight averaging} (EWA) \citep{polyak1992acceleration} to achieve the desired target validation loss, a simple approach that matches other popular choices (\Cref{fig:scheduler_all}). This enables training beyond fixed durations or data size, allowing to resume training from checkpoints until the target validation loss is achieved (\Cref{sec:scheduling}).
    \item Training with proper hyper-parameters: ensuring proper sweeps of momentum and learning rate (\Cref{app:training_dynamics}); adopting well-tuned values for the $\beta_2$ parameter and the exponential weight averaging decay rate $\tau$, tailored to each batch size (\Cref{app:ablation}).
\end{enumerate}

\subsection{Training Beyond Fixed Durations for Reaching Target Validation Loss}
\label{sec:scheduling}

\begin{wrapfigure}{r}{0.5\textwidth} %
    \centering
    \begin{subfigure}[b]{0.49\textwidth}
    \centering
    \includegraphics[width=\textwidth]{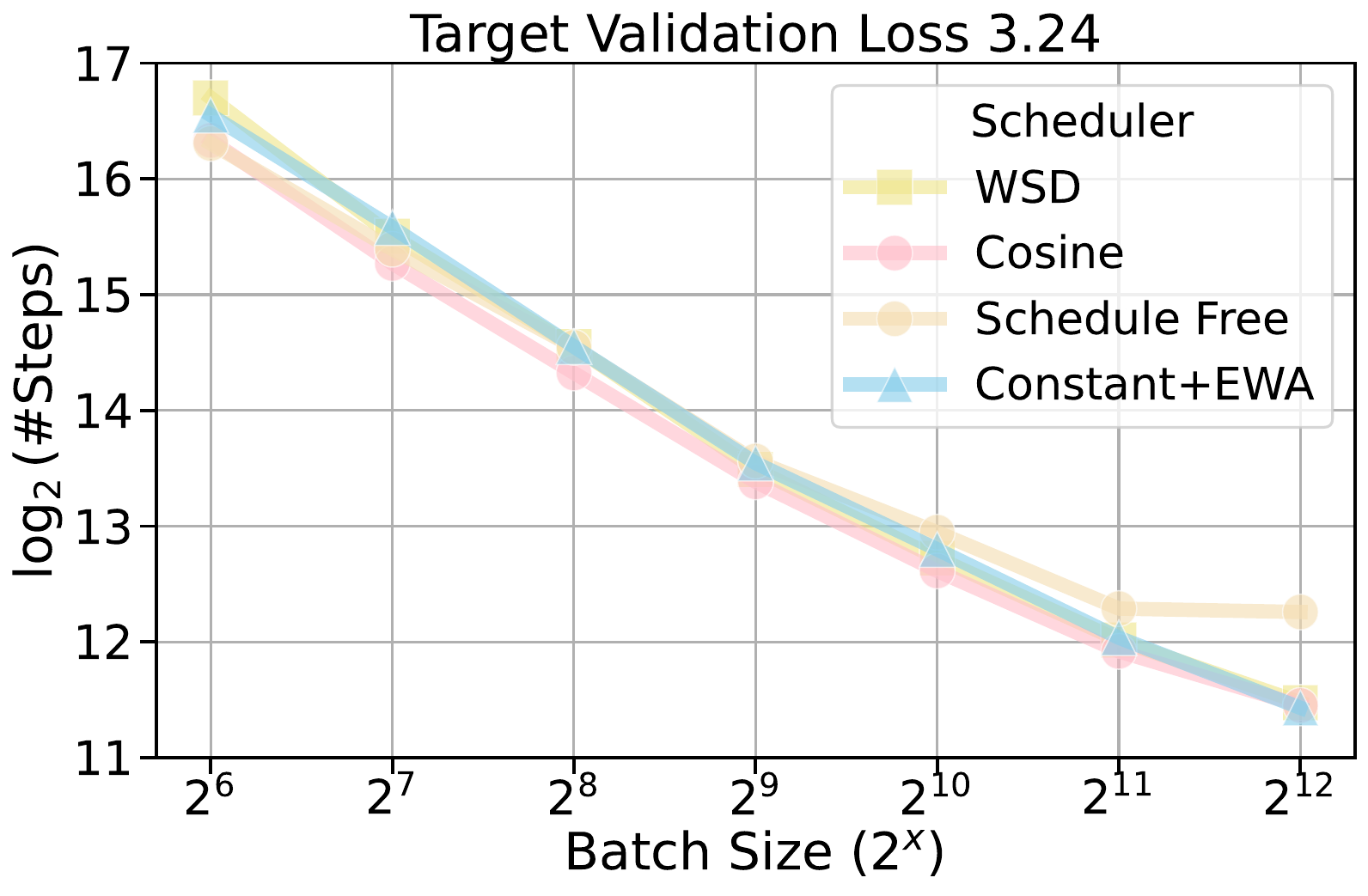}
    \label{fig:scheduler}
    \end{subfigure}
    \caption{\textbf{Comparing and accounting for training dynamics.}
    Throughout, we adopt \textbf{Constant+EWA} since it performs the best for large batch sizes and avoids setting a fixed training duration beforehand for reaching a target loss. 
    }
    \label{fig:scheduler_all}
\end{wrapfigure}

\textbf{Benchmarking learning rate schedulers.}
In practice, LMs are usually trained using fixed token budgets \citep{hoffmann2022empirical}, which can determine the total number of iterations the training would undergo. This training process can be easily decomposed into learning rate warmup and decay phases so that a lower learning rate is kept at the end of training to enable better optimization.
However, our goal is to find the optimal performing run under various hyper-parameters and optimization conditions. 
This implies a non-trivial decision regarding selecting the maximum training duration \citep{defazio2024road, hägele2024scaling}.
As training beyond fixed durations is particularly favorable in many large-scale pre-training scenarios, we benchmark recently proposed methods like schedule-free optimizer \citep{defazio2024road}, cosine, warmup-stable-decay schedule (WSD) \citep{hu2024minicpm} (or trapezoidal \citep{zhai2022scaling}) and our proposed constant+EWA strategy which maintains a running average of model weights $\xi_{t+1}= \tau \cdot \xi_{t}+(1-\tau) \cdot \theta_t$ to improve optimization, where $\theta_t$ is the actual model parameter at step $t$ and we use $\xi$ for evaluation.
To ensure the baselines are close to optimal, we first get the optimal step counts from our constant+EWA runs, we evaluate WSD scheduling by testing [0.1, 0.2, 0.3] multiples of those step counts. In parallel, for the cosine scheduler, we explore [0.9, 1.0, 1.1] times the same step counts.
We show that our constant+EWA strategy can match the efficiency of cosine scheduling and WSD, especially for large batch sizes (\Cref{fig:scheduler_all}). The connections are explained in detail in \citep{morwani2025connections}.

Prior research has shown the generalization \citep{izmailov2018averaging} and optimization \citep{karras2024analyzing} benefits of EWA, our findings further reveal that EWA can trade memory for optimization efficiency in LM pre-training, especially in large-batch regimes. 
This is useful in scenarios where a target loss must be achieved, but practitioners are uncertain of the exact maximum data size to set up the learning rate schedule for training. 

\takeawaybox[learning rate scheduling]{
EWA consistently improves model training efficiency compared to using a constant learning rate without it. EWA proves to be an effective approach compared to other baselines with decaying schemes, offering competitive performance while eliminating the need to predefine training durations.
}

\subsection{Ablation on Model Context Length}
We adopt 512 as the context length of LMs for all of our experiments but it is unclear how it would impact the efficiency of training and whether the scaling of CBS would vary when we enlarge the context length.
So we sweep over several larger windows $2^{10}, 2^{11}, 2^{12}$ (\Cref{fig:ctx_len})
Overall all models in four different context lengths have very similar relative optimization efficiency across various batch sizes and thus justifies our use of 512 for all the experiments.

\takeawaybox[model context length]{Different context lengths $(2^9\sim2^{12})$ have similar scaling \textit{w.r.t.} batch size. }

\subsection{Ablation on Model Width and Depth}
\label{sec:width-depth}
Model sizes can typically be scaled up in two main ways: by increasing the width, which involves enlarging the hidden size of the multilayer perceptron (MLP), or by increasing the depth, which entails adding more layers to the network. As the main result in \Cref{fig:teaser} only involves a single way for scaling up models (\Cref{tab:model_arch}), e.g. 604M model has $2\times$ width than the 151M one. 
To explore alternative scaling strategies, we investigate how the model behavior changes when we scale the 604M model by increasing the depth by $4\times$ instead (detailed configurations in \Cref{tab:width-depth-config}).

\begin{figure}[h]
    \centering
    \includegraphics[width=.999\textwidth]{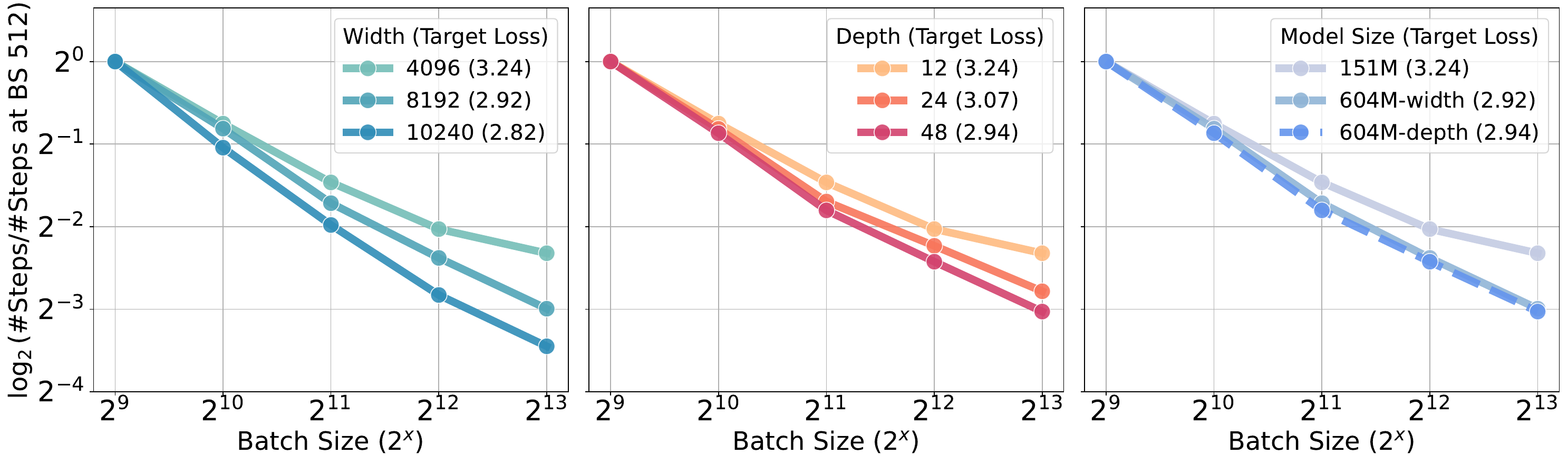}
    \caption{Scaling up width and depth shares similar efficiency gain for compute-optimal training.
    }
    \label{fig:depth-width}
\end{figure}

\begin{wrapfigure}{r}{0.4\textwidth}
    \setlength{\intextsep}{0pt}  %
    \includegraphics[width=.4\textwidth]{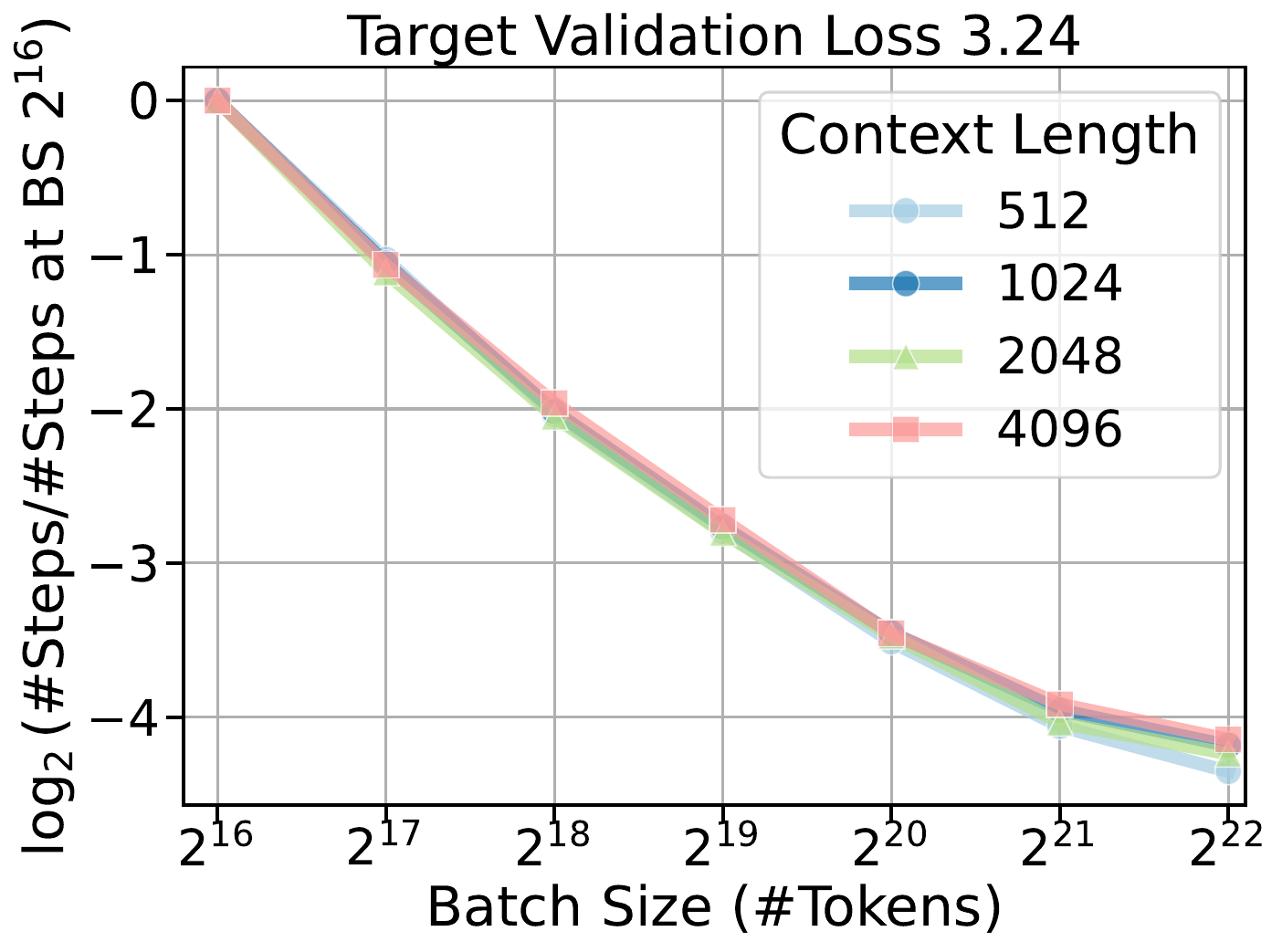}\label{fig:ctx_len}
    \caption{Ablation results on \textbf{context length} using 151M models.} 
\end{wrapfigure}

Firstly, as shown in \Cref{fig:depth-width} (left, middle), under Chinchilla scaling—where data and model size grow \textbf{proportionally}— increasing model depth or width has a similar impact on CBS. Notably, according to previous results, the rise in CBS in compute-optimal scaling should be attributed to the increased data size.
Then through controlled comparison (\Cref{fig:depth-width}, right), we see that using two different ways to scale 151M models to 604M ones is equivalent in efficiency since both curves overlap.
Our findings may offer practical insights for scaling models under a fixed token budget that is allocated in proportion to model size. This is particularly relevant because scaling model width is often favored over increasing depth, as wider models tend to be more amenable to parallelization without incurring additional latency overhead \citep{shoeybi2019megatron, touvron2023llama, epoch2024datamovement, mcleish2025gemstones}.

\takeawaybox[scaling transformer width and depth in compute-optimal regimes]{Increasing width and depth has similar effects in critical batch size for compute-optimal pre-training. }

\section{Critical Batch Size Scaling Law}
\label{sec:scaling_laws}
\subsection{Formal Definition of Critical Batch Size}

Recall that CBS is the transition point where increasing the batch size by a factor of $k$, leads to a reduction in the required number of training steps by a factor that is less than $k$. We now define CBS as the batch size that leads to a $20\%$ overhead compared to linear scaling.

\begin{wrapfigure}{r}{0.45\textwidth}
  \begin{center}
    \includegraphics[width=0.44\textwidth] {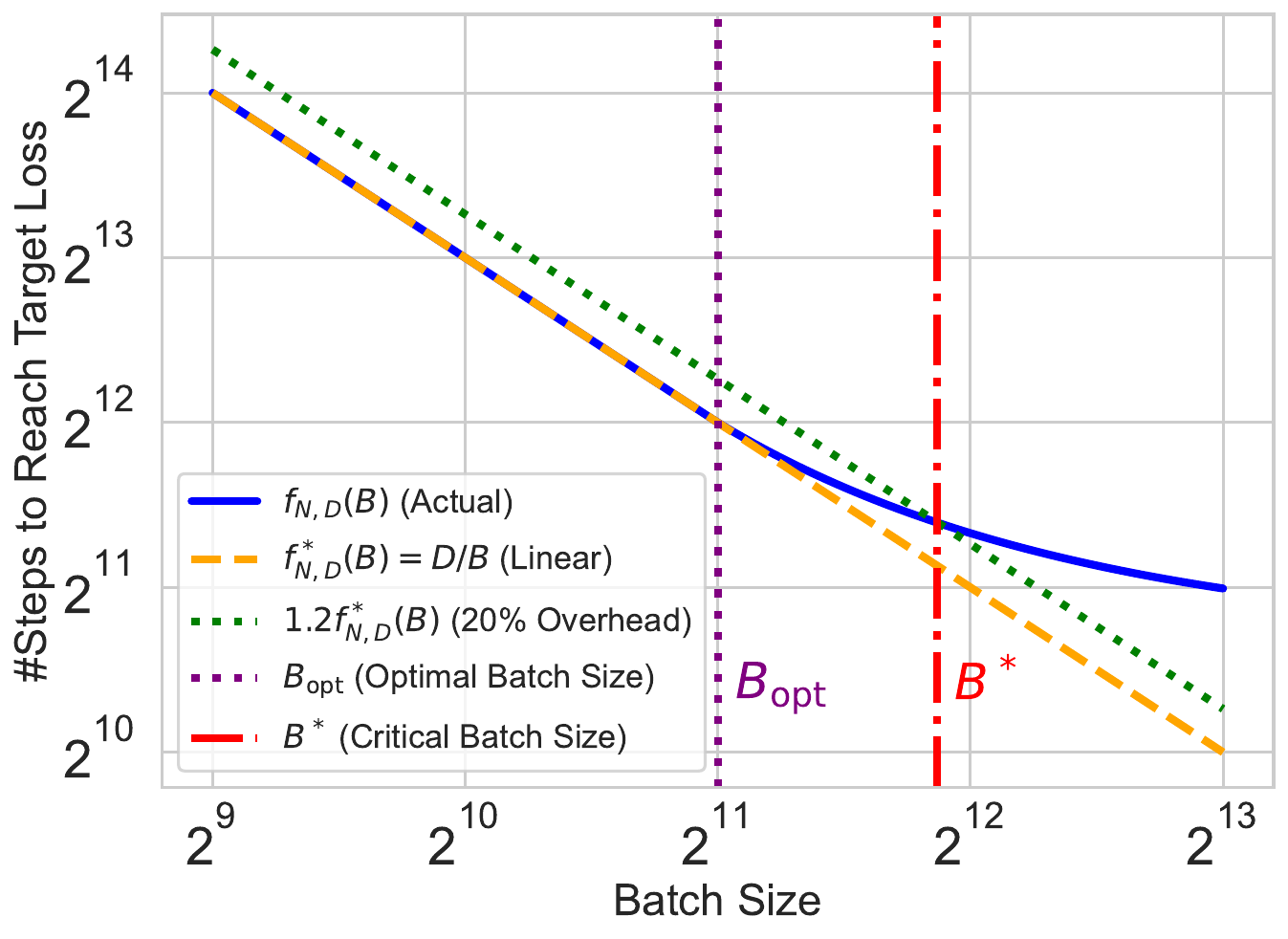}
  \end{center}
  \caption{Illustration of critical batch size, where $B^*=2^{11.87}$ and context length is 512 by default. }
  \label{fig:cbs_intro}
\end{wrapfigure}

First of all, define $\mathcal{R}(N, D, B)$ as the best loss achievable for a model of size $N$ using a single pass on $D$ tokens with a batch size $B$. This would be obtained by optimally tuning all other parameters of the optimizer, while keeping $N, D, B$ fixed. Below is the formal definition of CBS:

\begin{definition}
Define $\mathcal{R}_{\text{opt}}(N, D) = \min_{B} \\ \mathcal{R}(N, D, B), B_{\text{opt}}(N, D) = \argmin_{B} \mathcal{R}(N, D, B),$
as the minimal loss achieved optimizing over batch size and the optimal batch size, respectively. 
We define $f_{N, D}(B)$ to be the number of steps required to reach $\mathcal{R}_{\text{opt}}(N, D)$  as a function of batch size $B$. Clearly $f_{N, D}(B_{\text{opt}}) = D/B_{\text{opt}}$. To define the Critical Batch Size, $B^*(N, D)$, we can define a linear scaling curve $f^*_{N, D}(B) = D/B$. $f^*$ matches $f$ at $B_{\text{opt}}$ and then scales down linearly as batch size goes up.
$B^*(N, D)$ is defined as the maximum batch size $B' > B_{\text{opt}}(N, D)$ such that
$f_{N, D}(B') \leq 1.2 f^*_{N, D}(B').$
\end{definition}
As illustrated in \Cref{fig:cbs_intro}, $B^*(N, D)$ is the batch size at which the number of steps is 20\% higher than what is predicted by the linear extrapolation from the optimal batch size. Note here that $20\%$ can be replaced by any other suitable measure of increase from linear scaling. 

\vspace{-3mm}
\subsection{Scaling Laws \textit{w.r.t.} Model Size for Chinchilla-optimal Pre-training}
\label{sec:closed_form}
As observed in all the results above, doubling the batch size for larger models allows them to more efficiently reduce the relative number of steps needed to reach the target loss.
We ask whether those increased efficiencies can be predicted via a scaling law.

We begin our first step by fitting a power law of batch size ($B$) to the \textbf{absolute} number of steps ($Y$) to reach the target loss  $\log(Y) = \log(a + \frac{b}{B^{\alpha}})$ and then derive the critical batch size. Then we derive the CBS via $B^* = \frac{b}{5a} + 1.2B_{\text{opt}}$, which is implied by a transition point where the total amount of data under this batch size would incur 20\% overhead compared to linear scaling: $D_{\text{total}} = (a + b/B_{\text{opt}}^{\alpha}) *1.2B_{\text{opt}} = (a + b/B^{\alpha}) * B$ 
, where $\alpha=1$, $B_{\text{opt}}$ is set to be 256 chosen to lie within the linear scaling regime as suggested in \Cref{app:all_bs}.
We report the parameters fitted to the power law relationship between the number of steps $Y$ and batch size $B$ in Appendix \Cref{tab:scaling_cbs}. We adopt the fixed $\alpha=1$ solution, as both strategies yield nearly identical forecasting results.

Secondly, we fit a power law $\log(B^*) = \log(c+\frac{d}{N^\beta})$ %
with respect to the model size $N$ (in million). The constant term $c$ is set to be $0$ by default (as $B^*$ should be $0$ at $N=0$), which leads to $B^* = 93.20 * N^{0.47}$.
We visualize the curve fitted in \Cref{fig:teaser} (left) and report more forecasts in \Cref{tab:more_forecast} in the Appendix.

Overall, we observe an increase in CBS when scaling up in compute-optimal training: In \Cref{fig:teaser} (left), we have target losses selected according to chinchilla steps and we fit the power law of critical batch size with respect to model sizes $N$ (in million) as $B^* = 93.20 * N^{0.47}$.
Our results suggest that a critical batch size around $2^{9}$ to $2^{11}$ would be helpful to efficiently optimize models below 1B on Chinchilla-optimal amount of tokens to study other empirical problems. 
However, it is common for the number of tokens trained to scale proportionally with the model's parameter count. 
So it is unclear whether the growth of CBS is because of the increase in (1) model size or (2) the data size/training duration, a question we explore in the next subsection.
\vspace{-3mm}
\subsection{Decoupling CBS Scaling Laws \textit{w.r.t.} Data Size and Model Size}

\textbf{Controlled comparison with the same data size.}
Firstly, we use the Chinchilla token size 3.072B of 151M models $t_{\textrm{Chin}}$ to record the target validation loss for each model size and train all the 302M, 604M, 1.2B models with a smaller duration again to reach these target losses. To optimize for performance when training on fewer tokens, we also tune the warmup steps accordingly. 
\Cref{fig:teaser} (top right) shows that all the curves behave similarly and we observe almost no increase in CBS when enlarging the model size.
Moreover, we fit a scaling law with respect to model size thereafter \Cref{fig:teaser} (bottom right): keeping the data size fixed leads to a scaling law $B^*=621.341 * N^{0.087}$ weakly dependent on model size. 

\textbf{Controlled comparison with the same model size.} Moreover, focusing on the 302M models, we conduct additional experiments by selecting target losses at $0.28\times$, $0.5\times$, $2\times$, and $4\times$ the Chinchilla step for batch size 256 runs. This setup results in two under-training and two over-training configurations. To achieve optimal performance in the over-training scenarios, we increase the warm-up ratio accordingly, while for the under-training cases, we reduce the warm-up ratio proportionally.
Results in \Cref{fig:teaser} (middle) show that as we enlarge the number of tokens being trained on, we see an increase of CBS, similar to what we have observed for training large models on chinchilla target loss. This can also be seen in the forecasted CBS curves shown in \Cref{fig:teaser} (middle) which shows that as we enlarge the number of tokens being trained on, we see an increase of CBS, similar to what we have observed in the Chinchilla setting where model and data size are scaled up proportionally.

We also plot the results for scaling both $N$ and $D$ (\Cref{fig:teaser}, left) and only scaling $D$ (\Cref{fig:teaser}, right) together in \Cref{fig:contrl4time}. 
In the side-by-side comparison, we observe the following trends: (i) In the Chinchilla setting (indicated by the first column in the legend), models of various sizes (85M, 151M, 604M, 1.2B) trained on different token amounts exhibit an increase in critical batch size as scale grows. (ii) Additionally, each pair of curves with the same color overlaps significantly, indicating that models of different sizes trained on the same token quantity tend to have similar critical batch sizes. (iii) Finally, when model size is held constant and only the data size (second column in the legend) varies, we also observe an increase in critical batch size with scale.
Therefore, we can qualitatively understand that the increase of CBS is likely to be agnostic to model sizes but due to the increase in training duration.

\begin{figure}[ht]
    \centering
    \begin{subfigure}[b]{\textwidth}
    \centering  
    \includegraphics[width=.6\textwidth]{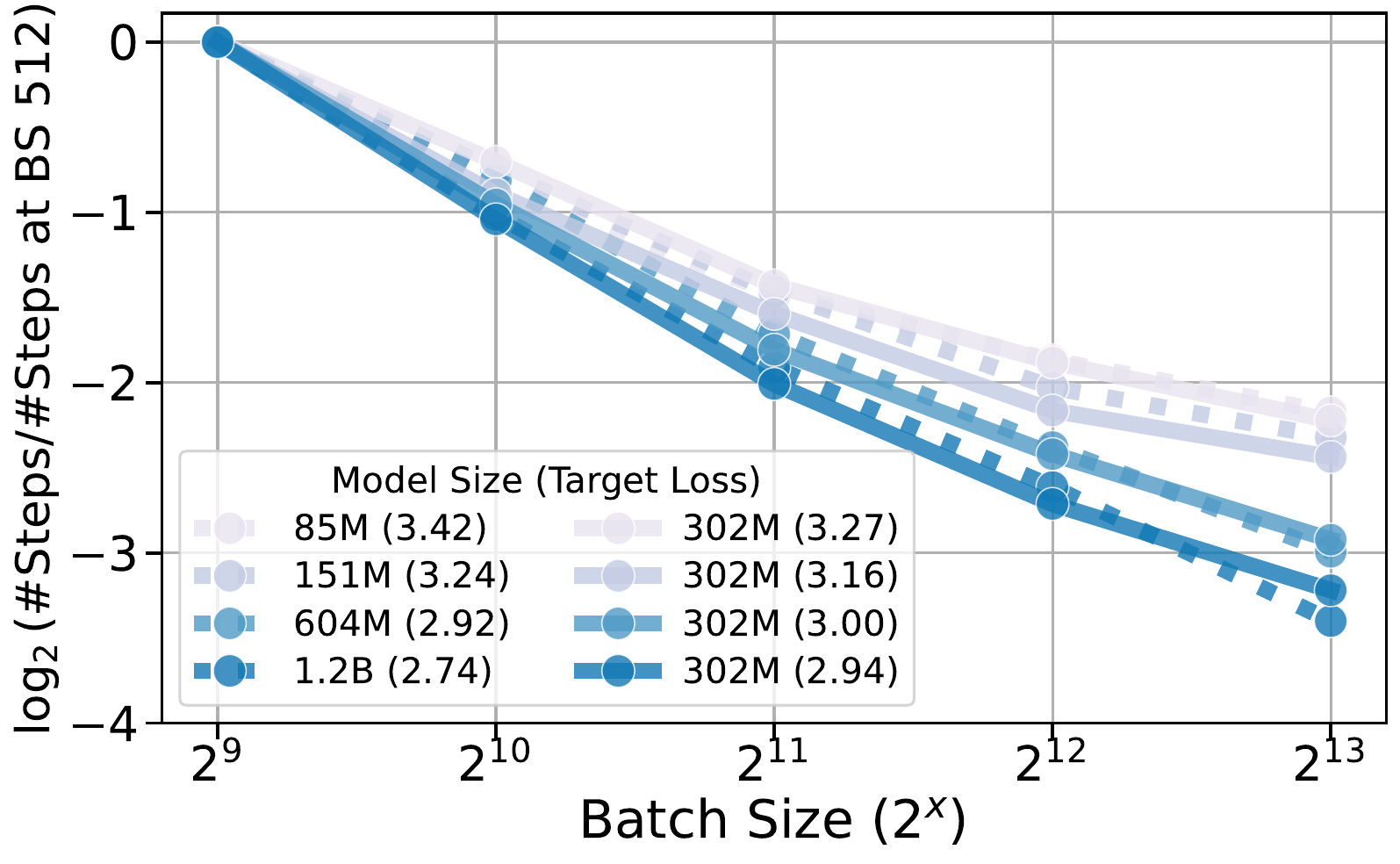}
    \end{subfigure}
    \caption{ %
    Controlled comparison by training \textbf{302M} models on varying amounts of tokens and then comparing with other model sizes trained in similar amounts of tokens.
    Models with the same color or positioned in the same row of the legend represent this comparison.
    For a fixed token count of 3.072B, we measure the target loss at that step for each model size.
    }
    \label{fig:contrl4time}
\end{figure}

\takeawaybox[scaling laws for critical batch sizes]{Based on the scaling laws and controlled comparisons, we conclude that the increase in CBS in Chinchilla-optimal training is more strongly attributed to extended data size or training durations rather than the increase of model size. }

\section{Theory on Scaling of Critical Batch Size}
\label{sec:theory}

Our experimental results show that CBS increases with larger data sizes but remains (nearly) invariant when scaling up the model size.
We now formally investigate this observation using theoretical analysis for both scenarios.
\subsection{Fixed Data Size and Scaling Up Model Size}

Various previous works have established infinite width limits of neural networks \citep{yang21, blake22}. For initializations and architectures that obey these limits, we can theoretically claim, that for a fixed training duration and batch size, the performance of the neural networks asymptotes with increasing width. The formal statement is provided below:

\begin{theorem}\label{thm:mup}
    For SGD with a given batch size $B$ (or for gradient descent, i.e., $B \rightarrow \infty$), training iterations $t$, an error tolerance $\epsilon > 0$, fixed learning rate schedule and data ordering, for any network and initialization satisfying Master Theorem (Theorem G.4) in \citet{yang21}, there exists a width $w$ such that for any two networks $M_1, M_2$ having widths $w_1, w_2 > w$, $|\mathcal{R}(M_1, t) - \mathcal{R}(M_2, t)| \leq \epsilon$, where $\mathcal{R}(M,t)$ denotes the loss of network $M$ at time $t$.
\end{theorem}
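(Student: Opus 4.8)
The plan is to reduce \Cref{thm:mup} to the almost-sure convergence of the training-then-evaluation pipeline guaranteed by the Master Theorem of \citet{yang21}, and then close with the Cauchy criterion. First I would encode the entire finite-horizon computation as a single Tensor Program: the forward pass on each relevant input, the backward pass producing the gradients, and the $t$ optimizer updates (SGD with the fixed batch size $B$, or the $B\to\infty$ gradient-descent specialization). Because the learning-rate schedule, the data ordering, the batch size, and the iteration count $t$ are all held fixed, this program has a fixed, finite length that does not grow with the width; the only quantity indexing the sequence is the width $w$. Under the abc-parameterization hypotheses assumed in the statement (i.e., the network and initialization satisfy the conditions of Theorem~G.4), the program is admissible, so the Master Theorem applies to it.

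Second, I would invoke the Master Theorem to conclude that every scalar read-out of the program converges almost surely to a deterministic limit as $w\to\infty$. Concretely, for each fixed evaluation input the trained network's output after $t$ steps converges to a deterministic value dictated by the infinite-width dynamics, with no residual dependence on $w$. Writing the loss $\mathcal{R}(M,t)$ as a fixed continuous function of these finitely many output scalars, continuity combined with the almost-sure convergence of the outputs gives $\mathcal{R}(M_w,t)\to\mathcal{R}_\infty(t)$ almost surely for a deterministic limit $\mathcal{R}_\infty(t)$ that is independent of the width.

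The conclusion is then the standard Cauchy property of a convergent sequence. Given $\epsilon>0$, convergence supplies a width $w$ beyond which $|\mathcal{R}(M_w,t)-\mathcal{R}_\infty(t)|\le \epsilon/2$; hence any two networks $M_1,M_2$ of widths $w_1,w_2>w$ obey $|\mathcal{R}(M_1,t)-\mathcal{R}(M_2,t)|\le|\mathcal{R}(M_1,t)-\mathcal{R}_\infty(t)|+|\mathcal{R}_\infty(t)-\mathcal{R}(M_2,t)|\le\epsilon$ by the triangle inequality, which is exactly the claim. The hard part will be the first step, namely verifying that the combined training-and-evaluation procedure really is one of the admissible Tensor Programs covered by Theorem~G.4 — checking that SGD with a fixed batch size, the chosen nonlinearities, and the loss-gradient map all lie within the allowed operations. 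A secondary subtlety arises if $\mathcal{R}$ denotes a population loss rather than a finite-sample average: then one additionally needs the per-example loss to be continuous and uniformly integrable (e.g., bounded) in the outputs so that the almost-sure convergence of individual predictions transfers to convergence of the expectation. Everything past that reduction is bookkeeping and the triangle inequality.
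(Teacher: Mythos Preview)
Your proposal is correct and follows essentially the same approach as the paper: the paper's proof is a one-liner observing that the trajectory of the network approaches a limit as width tends to infinity (via the Master Theorem), and then invokes the definition of a limit to conclude the Cauchy-type statement. You have simply unpacked this more carefully---explicitly encoding the finite-horizon training-and-evaluation pipeline as a Tensor Program, noting the continuity step for the loss, and spelling out the triangle-inequality argument---and flagged the admissibility check and the population-loss subtlety, neither of which the paper addresses.
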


\begin{proof}[Proof of \Cref{thm:mup}]
    The proof follows from the fact that the trajectory of the network approaches a limit as width tends to $\infty$, and thus, by definition of limits, there exists a width $w$, such that for any two networks with a width greater than $w$, their loss at time $t$ differs by at most $\epsilon$.
\end{proof}

Note that the assumption of a fixed learning rate schedule with increasing width might seem strong, but recent works \citep{yang2022tensor} have shown, that one of these initializations, termed as Maximal Update Parameterization ($\mu$P), exhibits hyperparameter transfer with width. This initialization scheme has also recently gained popularity because of this property and has been used by many open-source implementations \citep{dey2023cerebrasgptopencomputeoptimallanguage, dey2023btlm3b8k7bparameterperformance, liu2023llm360fullytransparentopensource, hu2024minicpm}. Moreover, works \citep{yang2022tensor, vyas2023featurelearning} have empirically demonstrated that with $\mu$P, networks start exhibiting consistent loss curves at practical widths.

Moreover, as the above theorem holds for a fixed batch size $B$ as well as $B \to \infty$, we expect that there exists a finite width $w$ such that the above theorem holds for all batch sizes $B$. \textbf{Thus, for fixed training tokens, we would expect that the critical batch size won't scale with model width beyond a point.} Although we have mostly talked about scaling model width, note that some recent results have also established such limits for infinite depth ResNets and transformers \citep{yang2024tensor, bordelon2024infinitelimitsmultiheadtransformer, bordelon2024depthwise}, and thus the arguments above also hold for these networks.

\subsection{Fixed Model Size and Scaling up Data Size}\label{sec:least-square}
We now turn to studying the impact of data size in mini-batch SGD for a well-specified Gaussian linear regression problem. 
Let $(\xB,y)$ be a pair of covariates and responses from a population distribution. Let the population risk and the population distribution be 
\begin{align*}
    \Rcal (\wB) := \Ebb (\xB^\top \wB - y)^2,\quad \xB \sim \Ncal(0,\HB),\quad y | \xB \sim \Ncal(\xB^\top \wB^*, \sigma^2),
\end{align*}
where $\wB$ is the trainable parameter, the expectation is over the population distribution, and $(\HB, \wB^*, \sigma^2)$ specify the population distribution.
Given $D$ independent samples $(\xB_i, y_i)_{i=1}^D$ from the population distribution, we consider an estimate given by mini-batch SGD,
\begin{align*}
\wB_0=0,\quad 
    \wB_{t+1} = \wB_t - \gamma \frac{1}{B} \sum_{j=t B}^{(t+1)B-1} (\xB_{j}^\top \wB_t - y_j) \xB_{j},\quad t=0,\dots,n-1,
\end{align*}
where $\gamma>0$ is a constant learning rate, $B$ is the batch size, $n:= D/B$ is the number of steps, $\wB_0=0$ is the initialization (without loss of generality), and the output is the average of the iterates, $\bar \wB := \frac{1}{n}\sum_{t=0}^{n-1}\wB_t$.
Then the following theorem provides a tight bound on the excess risk achieved by the average of the mini-batch SGD iterates. 

We write $f(D)\lesssim g(D)$ if there is a positive constant $c$ such that $f(D) \le c g(D)$ for every $D\ge 1$. We write $f(D) \eqsim g(D)$ if $f(D) \lesssim g(D) \lesssim f(D)$.
The proofs are all deferred to \Cref{app:sec:least-square}.

\begin{theorem}\label{thm:mini-batch}
Let $(\lambda_i)_{i>0}$ be the eigenvalues of $\HB$ in nonincreasing order.
Assume that $\|\wB_0 - \wB^*\|_{\HB}^2 \lesssim \sigma^2$.
Then for every $\gamma \lesssim \min\{B/\tr(\HB), 1/\|\HB\|_2\}$, we have
\begin{align*}
  \Ebb \Rcal(\bar \wB)- \sigma^2 \eqsim  \bigg(\frac{B}{D \gamma}\bigg)^2 \|\wB_0 - \wB^*\|^2_{\HB^{-1}_{0:k^*}} + \|\wB_0 - \wB^*\|^2_{\HB_{k^*:\infty}} 
  + \sigma^2 \frac{k^* + (D \gamma / B)^2 \sum_{i>k^*} \lambda_i^2}{D},
\end{align*} 
where $k^* := \max\{k: \lambda_k \ge B/(D\gamma)\}$ and the expectation is over the randomness of $\bar \wB$.
\end{theorem}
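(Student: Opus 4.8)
The plan is to follow the bias--variance operator framework for constant-stepsize SGD on least squares, adapted to track the mini-batch size $B$. First I would pass to the centered error process $\etaB_t := \wB_t - \wB^*$, so that the update becomes $\etaB_{t+1} = (\IB - \gamma\widehat{\HB}_t)\etaB_t + \gamma \frac1B\sum_j \xi_j \xB_j$, where $\widehat{\HB}_t = \frac1B\sum_j \xB_j\xB_j^\top$ is the per-batch empirical covariance and $\xi_j := y_j - \xB_j^\top\wB^*\sim\Ncal(0,\sigma^2)$ is the label noise, independent of $\xB_j$. Since the recursion is affine and the noise is mean-zero and independent across batches, the averaged iterate splits as $\bar\wB - \wB^* = \etaB^{\mathrm{b}} + \etaB^{\mathrm{v}}$, where the bias part $\etaB^{\mathrm{b}}$ runs the recursion with $\xi_j\equiv 0$ from $\etaB_0 = \wB_0-\wB^*$, and the variance part $\etaB^{\mathrm{v}}$ runs it with $\etaB_0 = 0$ driven only by noise. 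The cross term vanishes in expectation, giving the decomposition $\Ebb\Rcal(\bar\wB)-\sigma^2 = \langle \HB, \SB^{\mathrm{b}}\rangle + \langle\HB,\SB^{\mathrm{v}}\rangle$ with $\SB^{\mathrm{b}} := \Ebb[\etaB^{\mathrm{b}}(\etaB^{\mathrm{b}})^\top]$ and likewise $\SB^{\mathrm{v}}$; the two terms are bounded separately and matched to the two stated groups (the $\|\cdot\|$-dependent terms and the $\sigma^2$-term).

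The core technical object is the linear second-moment operator on PSD matrices induced by one step. Using that $\xB$ is Gaussian and the $B$ samples in a batch are independent, one computes, for symmetric $\AB$,
\[
\Ebb[\widehat{\HB} \AB \widehat{\HB}] = \HB\AB\HB + \tfrac1B\big(\HB\AB\HB + \tr(\HB\AB)\,\HB\big),
\]
so the per-step map is $\SB \mapsto \SB - \gamma(\HB\SB + \SB\HB) + \gamma^2\big(\HB\SB\HB + \tfrac1B(\HB\SB\HB + \tr(\HB\SB)\HB)\big)$, plus the additive injection $\tfrac{\gamma^2\sigma^2}{B}\HB$ from the noise covariance $\Ebb[\xi^2\xB\xB^\top]=\sigma^2\HB$ averaged over $B$ samples. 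The role of the hypothesis $\gamma\lesssim\min\{B/\tr(\HB),1/\|\HB\|_2\}$ is precisely to make this operator a contraction: $\gamma\le 1/\|\HB\|_2$ keeps $\IB-\gamma\HB$ PSD, while $\gamma\tr(\HB)/B\lesssim 1$ controls the mini-batch fourth-moment term $\tfrac{\gamma^2}{B}\tr(\HB\SB)\HB$ against the leading contraction $\gamma(\HB\SB+\SB\HB)$. This is exactly where $B$ enters the learning-rate budget: averaging over $B$ samples shrinks the multiplicative gradient noise by $1/B$ and relaxes the single-sample constraint $\gamma\lesssim1/\tr(\HB)$ by a factor $B$.

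Next I would bound each moment matrix by splitting the spectrum at the threshold $\lambda = B/(D\gamma) = 1/(n\gamma)$, i.e.\ at index $k^*$, the inverse of the effective horizon $n\gamma = D\gamma/B$. For the bias, head directions ($\lambda_i \ge 1/(n\gamma)$) contract to order $(1/(n\gamma))^2 = (B/(D\gamma))^2$ after averaging, producing the $\HB^{-1}_{0:k^*}$-weighted term, while tail directions ($\lambda_i < 1/(n\gamma)$) barely move and contribute the $\HB_{k^*:\infty}$-weighted term. For the variance, iterating the noise injection to its approximate stationary level and averaging yields an effective-dimension count: each head direction contributes $\Theta(\sigma^2/D)$, giving $\sigma^2 k^*/D$, whereas tail directions accumulate to $\sigma^2 (n\gamma)^2\sum_{i>k^*}\lambda_i^2/D = \sigma^2(D\gamma/B)^2\sum_{i>k^*}\lambda_i^2/D$. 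The assumption $\|\wB_0-\wB^*\|_\HB^2\lesssim\sigma^2$ lets the bias-driven cross contributions be absorbed into the variance scale so the constants align.

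The main obstacle is proving the bound is two-sided ($\eqsim$) and not merely an upper bound. The upper bound follows from monotone operator comparisons (dominating the true second-moment map by a diagonalizable surrogate and summing a geometric-type series), but the matching lower bound requires showing each contribution is genuinely attained: one must lower-bound the same PSD recursions, which is delicate because the mini-batch fourth-moment term and the averaging over $n$ iterates can only be dropped in a controlled way. I expect the bulk of the work to be (i) establishing sharp two-sided control of the second-moment operator's iterates under the stated learning-rate condition, and (ii) verifying that the single threshold $B/(D\gamma)$ simultaneously governs both the bias and variance terms with matching constants, so no direction is double-counted or lost. Once the operator bounds are sharp in both directions, summing the head and tail contributions and using $\|\wB_0-\wB^*\|_\HB^2 \lesssim \sigma^2$ to fix the relative scales yields the claimed equivalence.
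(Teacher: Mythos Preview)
Your proposal is correct and follows essentially the same operator-theoretic bias--variance route as the paper: center to $\etaB_t$, split into bias and variance processes, compute the mini-batch second-moment map (your formula $\Ebb[\widehat{\HB}\AB\widehat{\HB}] = \HB\AB\HB + \tfrac1B(\HB\AB\HB + \tr(\HB\AB)\HB)$ is exactly the Gaussian fourth-moment calculation the paper performs), and split the spectrum at the threshold $1/(n\gamma)=B/(D\gamma)$. The one presentational difference is that the paper does not redo the sharp two-sided operator estimates from scratch; instead it observes that once the mini-batch recursion for $\BB_t^{\mathrm{B}},\CB_t^{\mathrm{B}}$ is written in the form $(\Ical-\gamma\widetilde{\Tcal})\circ\SB + \tfrac{c\gamma^2}{B}\tr(\HB\SB)\HB$ with noise injection $\tfrac{\gamma^2\sigma^2}{B}\HB$, the entire analysis of \citet{zou2023benign} for batch size $1$ applies verbatim after the substitutions $\alpha\to 2/B$ (their fourth-moment constant), $\sigma^2\to\sigma^2/B$, and $n\to D/B$, which immediately imports both their upper bound (Theorem 2.1) and their matching lower bound (Theorem 2.2). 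So your worry about the $\eqsim$ direction is handled by this reduction rather than a fresh argument; if you carry out your plan as written you will effectively be reproving those two theorems, which is fine but heavier than necessary.
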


The proof of \Cref{thm:mini-batch} is motivated by \citet{zou2023benign}.
We focus on well-specified Gaussian data distribution for simplicity, but this can be relaxed to misspecified cases under fourth-moment conditions following the results in \citet{zou2023benign}.
\Cref{thm:mini-batch} suggests that for a fixed data size $D$, the excess risk depends on the batch size $B$ and the learning rate $\gamma$ only through their ratio $\gamma/B$.
Moreover, a large $\gamma /B$ tends to decrease the bias error (the terms depending on $\wB^*$) but increase the variance error (the terms depending on $\sigma^2$), and vice versa. 
This observation is exploited in the following corollary, where we compute the CBS that minimizes the sequential running time without sacrificing the rate of the attained excess risk.

\begin{corollary}\label{thm:cbs-linear-regression}
Under the settings of \Cref{thm:mini-batch},
additionally assume $\sigma^2 \eqsim 1$ and the following capacity and source conditions:
\begin{align*}
\text{for}\ a,b>1:\quad  
    \lambda_i \eqsim i^{-a},\quad \Ebb \lambda_i \langle \vB_i, \wB^*_i\rangle^2 \eqsim i^{-b},\quad \Ebb \lambda_i \langle \vB_i, \wB^*_i\rangle \langle \vB_j, \wB^*_j\rangle = 0 \ \text{for}\  i\ne j,
\end{align*}
where $(\lambda_i, \vB_i)_{i>0}$ are the eigenvalues and the corresponding eigenvectors of $\HB$, and the expectation is over a prior of $\wB^*$. Then we have
\begin{enumerate}[leftmargin=*]
    \item When $b\le a$, the optimal hyper-parameters (that minimize the expected excess risk up to constant factors) are $\gamma^*\eqsim 1$ and $B^*=1$.
    \item When $b>a$, the optimal hyper-parameters are $\gamma^*$ and $ B^*$ such that
\begin{align*}
    0<\gamma^*\lesssim 1,\quad 
    1\le B^* \le D, \quad
    \gamma^*/B^* \eqsim  D^{\frac{a}{\min\{b, 2a+1\}}-1}.
\end{align*}
Therefore, the CBS is $B^*\eqsim D^{1-a/\min\{b, 2a+1\}}$, which (along with $\gamma^*\eqsim 1$) allows mini-batch SGD output $\bar \wB$ to attain the optimal rate of the expected excess risk (as data size $D$ grows) with the smallest number of steps $n$.
\end{enumerate}
\end{corollary}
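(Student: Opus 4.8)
The plan is to collapse the problem to a single effective variable and then carry out a bias--variance balance. The key starting observation (from \Cref{thm:mini-batch}) is that the excess-risk bound depends on $(\gamma, B, D)$ only through the ratio $s := D\gamma/B$, since $B/(D\gamma) = 1/s$, the cutoff is $k^* = \max\{k : \lambda_k \ge 1/s\}$, and $(D\gamma/B)^2 = s^2$. I would therefore first regard $s$ as the single free parameter and write the bound as a function $\Ecal(s; D)$, optimize over $s$, and only at the end split $s$ back into an admissible pair $(\gamma^*, B^*)$.

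The second step is to evaluate each of the three terms under the power-law conditions with $\wB_0 = 0$. The capacity condition $\lambda_i \eqsim i^{-a}$ gives $k^* \eqsim s^{1/a}$. The tail bias is $\|\wB^*\|^2_{\HB_{k^*:\infty}} \eqsim \sum_{i>k^*} i^{-b} \eqsim (k^*)^{1-b} \eqsim s^{-(b-1)/a}$ by the source condition. The variance term is $\eqsim (k^* + s^2 \sum_{i>k^*}\lambda_i^2)/D \eqsim s^{1/a}/D$, where a short computation shows the two contributions are of the same order. The head bias is $s^{-2}\|\wB^*\|^2_{\HB^{-1}_{0:k^*}} \eqsim s^{-2} \sum_{i \le k^*} i^{2a-b}$; here the partial sum grows like $(k^*)^{2a-b+1}$ when $b < 2a+1$ and is bounded when $b > 2a+1$, yielding $s^{-(b-1)/a}$ in the former case and $s^{-2}$ in the latter. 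Writing $\tilde b := \min\{b, 2a+1\}$, these combine into a clean estimate so that $\Ecal(s; D) \eqsim s^{-(\tilde b - 1)/a} + s^{1/a}/D$.

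The third step is the optimization and the translation back to $(\gamma, B)$. As the bias decreases in $s$ and the variance increases, balancing $s^{-(\tilde b - 1)/a} \eqsim s^{1/a}/D$ gives the unconstrained optimum $s^* \eqsim D^{a/\tilde b}$ and optimal rate $\eqsim D^{-(\tilde b - 1)/\tilde b}$. Because $a > 1$ makes $\tr(\HB) \eqsim \|\HB\|_2 \eqsim 1$, the admissibility constraint of \Cref{thm:mini-batch} reduces to $\gamma \lesssim 1$. Since $\Ecal$ depends only on $s = D\gamma/B$ while the serial runtime is governed by $n = D/B$, minimizing $n$ (i.e.\ maximizing $B$) at the target $s \eqsim s^*$ forces $\gamma$ to its cap, giving $\gamma^* \eqsim 1$ and $B^* = D\gamma^*/s^* \eqsim D^{1 - a/\tilde b}$, hence $\gamma^*/B^* \eqsim D^{a/\tilde b - 1}$. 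When $b > a$ this $B^*$ lies in $[1, D]$ and is the CBS: any larger $B$ would, with $\gamma$ capped, force $s < s^*$ and worsen the rate. When $b \le a$ the balance point satisfies $s^* \gtrsim D$, which is infeasible under $\gamma \le 1$ and $B \ge 1$; there the bias dominates across the entire feasible range and is decreasing in $s$, so the optimum sits at the corner $s = D$, namely $B^* = 1$ with $\gamma^* \eqsim 1$.

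The step I expect to be the main obstacle is the head-bias term $\|\wB^*\|^2_{\HB^{-1}_{0:k^*}}$. Its order changes qualitatively at $b = 2a+1$, where the partial sum $\sum_{i \le k^*} i^{2a-b}$ transitions from growing like $(k^*)^{2a-b+1}$ to being bounded; this crossover is precisely what produces the $\min\{b, 2a+1\}$ in the final exponent. The delicate points are establishing a matching lower bound so the rate is genuinely tight (not merely an upper bound) and handling the logarithmic boundary case $b = 2a+1$ without disturbing the polynomial rate; by contrast, the variance and tail-bias estimates are routine tail-sum comparisons.
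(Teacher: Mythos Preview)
Your proposal is correct and follows essentially the same route as the paper: reduce everything to the single parameter $s = D\gamma/B$, evaluate each term of the \Cref{thm:mini-batch} bound under the power-law assumptions to obtain $\Ecal(s;D) \eqsim \max\{s^{(1-b)/a}, s^{-2}\} + s^{1/a}/D$, and then balance bias against variance case by case. Your packaging via $\tilde b = \min\{b,2a+1\}$ is a clean way to merge the paper's second and third cases, and your explicit argument that maximizing $B$ at fixed $s^*$ forces $\gamma$ to its cap (hence identifies the CBS rather than merely an optimal $\gamma/B$ ratio) makes a point the paper leaves implicit in the corollary statement.
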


The capacity and source conditions are from the nonparametric linear regression literature \citep{caponnetto2007optimal} and are recently used to study scaling laws theory \citep{bordelon2024dynamical,lin2024scaling,paquette20244+}.
According to \Cref{thm:cbs-linear-regression},
when $b\le a$, the bias error tends to dominate the variance error, in this case, the CBS is $B^*=1$ to allow a maximum number of optimization steps.
When $b>a$, 
the variance error tends to dominate the bias error, and the optimal choices of batch size and learning rate balance these two errors. 
While one can use $B^*=1$ and a small $\gamma^*$
to achieve the best excess risk rate, this leads to a suboptimal sequential runtime ($n=D/B$). 
In this case, the CBS is $B^*\eqsim D^{1-a/\min\{b, 2a+1\}}$, which achieves the optimal excess risk rate while minimizing the sequential runtime.

\takeawaybox[the theory of batch size scaling when scaling up data and model size]{
\begin{itemize}[leftmargin=*]
\item As we scale up model size while keeping the data size fixed, $\mu P$ suggests that critical batch size does not scale with model width beyond a point. %
\item Fixing the model size, the critical batch size increases with the training duration. In the context of high-dimensional linear regression, where the variance error dominates the bias error, it is possible to choose a large batch size (as a function of the data size) for mini-batch SGD, allowing for reduced sequential runtime without compromising the rate at which excess risk is minimized.
\end{itemize}
}

\section{Related Work}
\label{sec:related}
\paragraph{Scaling laws.} Scaling laws describe the parametric relationships among key factors involved in training neural networks: model size $N$, dataset size $D$, training cost $C$, and final training loss $\mathcal{R}$. These laws enable the prediction of training loss $\mathcal{R}$ based on available resources, making it possible to optimize resource allocation for efficient model training.
For example, \citet{hestness2017deep} found $\mathcal{R} \propto D^{-\alpha}$, with $\alpha \in[0.07,0.35]$. Of the factors they varied, only tasks can change the exponent $\alpha$. Changing the architecture optimizers, regularizers, and loss functions, would only change the proportionality factor, not the exponent; \citet{henighan2020scaling} studied statistical relations between $N, D, C, \mathcal{R}$, over a wide range of values and found similar scaling laws, over the range of $N \in\left[10^3, 10^9\right], C \in\left[10^{12}, 10^{21}\right]$, and over multiple modalities (text, video, image, text to image, etc.).
\citep{kaplan2020scaling} states that $N$ should be scaled faster than $D$. However, Chinchilla scaling \citep{hoffmann2022training} found that models are under-trained, and then suggests that when given an increased budget (in FLOPs), to achieve compute-optimal, model size $N$ and data size $D$ should scale in approximately equal proportions.
Recent efforts \citep{pearce2024reconciling, besiroglu2024chinchilla, porian2024resolving} have been made in reproducing the scaling laws from \citep{hoffmann2022training} and the \citep{kaplan2020scaling}. 
Different from our focus on measuring the efficiency notion of CBS, most of them focus on deriving optimal hyper-parameters \citep{bi2024deepseek, porian2024resolving} including learning rate and batch size from small-scale training given a fixed compute budget FLOPs $\approx 6ND$ without decoupling the effects of model size and data size.
\vspace{-3mm}
\paragraph{Optimization and critical batch size.} Previous studies have shown that increasing batch sizes can be offset by a proportional adjustment to the learning rate in small-scale regimes \citep{mccandlish2018empirical, zhang2019algorithmic, kaplan2020scaling, li2021validity}. \citet{mccandlish2018empirical} introduce the gradient noise scale, a measure that captures the variation in gradients across different training examples, which helps predict the critical batch size (CBS). Their findings also suggest that small-batch training is more compute-efficient, while large-batch training requires fewer optimizer steps.
Momentum-based methods extend scaling to larger batch sizes but converge to the performance of standard SGD at smaller batch sizes \citep{shallue2019measuring}. Additionally, \citet{zhang2019algorithmic} analyze the impact of curvature on CBS using a noisy quadratic model, demonstrating that preconditioning techniques can increase the CBS.
\citet{golmant2018computational} show that the size of the dataset plays a smaller role in determining training efficiency compared to factors like model architecture and data complexity. In contrast, \citet{hilton2022batch} examine how performance can be maintained at smaller batch sizes. Meanwhile, \citet{smith2017don, smith2017bayesian} empirically investigated how the optimal learning rate changes based on momentum and training set size.
Theoretical work has further sought to characterize CBS by analyzing SGD behavior in least-squares linear regression, especially in over-parameterized settings \citep{jain2018parallelizing, ma2018power}. 
\citet{filatov2024time} concurrently find that optimal batch size and CBS scale with data size. However, they do not explore how CBS scales with model size for models beyond 354M parameters, nor do they provide theoretical justifications or address the challenge of selecting optimal runs across a broad range of hyperparameters.
Our work advances the optimization literature by formalizing CBS and quantifying its growth \textit{w.r.t.} data size and emphasizing the importance of common hyper-parameter choices. It also provides strategies for studying large-scale pre-training beyond fixed training durations.

\vspace{-3mm}
\vspace{-1mm}
\section{Concluding Remarks}
\vspace{-3mm}
In conclusion, this study provides an extensive examination of the scaling laws for critical batch size in large-scale autoregressive language model pre-training. By systematically analyzing the relationship between model size, data size, and CBS, we found that while CBS increases with data size, it remains relatively invariant to model size. 
This finding suggests training on more data may enable greater data parallelism in pre-training. We further emphasize the role of key hyperparameters and exponential weight averaging, which can match the performance of cosine scheduling without requiring fixed training durations.
These insights offer practical strategies for scaling models while maintaining efficiency, which is critical in resource-constrained scenarios.

\section*{Acknowledgements}
We thank Dhruv Madeka. We thank Qirong Ho for discussions on distributed training; Jeremy Berstein for discussions on optimization; Dirk Groeneveld and Luca Soldaini for discussions on OLMo implementations. HZ is supported by an Eric and Susan Dunn Graduate Fellowship. SK and DM acknowledge the Chan Zuckerberg Initiative Foundation to establish the Kempner Institute for the Study of Natural and Artificial Intelligence; SK acknowledges the support from the Office of Naval Research under award N00014-22-1-2377, and the National Science Foundation Grant under award \#IIS 2229881. NV and DM are supported by a Simons Investigator Fellowship, NSF grant DMS-2134157, DARPA grant W911NF2010021, and DOE grant DE-SC0022199.

\bibliographystyle{unsrtnat}
\bibliography{ref}

\begin{thebibliography}{67}
\providecommand{\natexlab}[1]{#1}
\providecommand{\url}[1]{\texttt{#1}}
\expandafter\ifx\csname urlstyle\endcsname\relax
  \providecommand{\doi}[1]{doi: #1}\else
  \providecommand{\doi}{doi: \begingroup \urlstyle{rm}\Url}\fi

\bibitem[McCandlish et~al.(2018)McCandlish, Kaplan, Amodei, and Team]{mccandlish2018empirical}
Sam McCandlish, Jared Kaplan, Dario Amodei, and OpenAI~Dota Team.
\newblock An empirical model of large-batch training.
\newblock \emph{arXiv preprint arXiv:1812.06162}, 2018.

\bibitem[Shoeybi et~al.(2019)Shoeybi, Patwary, Puri, LeGresley, Casper, and Catanzaro]{shoeybi2019megatron}
Mohammad Shoeybi, Mostofa Patwary, Raul Puri, Patrick LeGresley, Jared Casper, and Bryan Catanzaro.
\newblock Megatron-lm: Training multi-billion parameter language models using model parallelism.
\newblock \emph{arXiv preprint arXiv:1909.08053}, 2019.

\bibitem[Kaplan et~al.(2020)Kaplan, McCandlish, Henighan, Brown, Chess, Child, Gray, Radford, Wu, and Amodei]{kaplan2020scaling}
Jared Kaplan, Sam McCandlish, Tom Henighan, Tom~B Brown, Benjamin Chess, Rewon Child, Scott Gray, Alec Radford, Jeffrey Wu, and Dario Amodei.
\newblock Scaling laws for neural language models.
\newblock \emph{arXiv preprint arXiv:2001.08361}, 2020.

\bibitem[You et~al.(2017)You, Gitman, and Ginsburg]{you2017large}
Yang You, Igor Gitman, and Boris Ginsburg.
\newblock Large batch training of convolutional networks.
\newblock \emph{arXiv preprint arXiv:1708.03888}, 2017.

\bibitem[Goyal et~al.(2018)Goyal, Dollár, Girshick, Noordhuis, Wesolowski, Kyrola, Tulloch, Jia, and He]{goyal2017accurate}
Priya Goyal, Piotr Dollár, Ross Girshick, Pieter Noordhuis, Lukasz Wesolowski, Aapo Kyrola, Andrew Tulloch, Yangqing Jia, and Kaiming He.
\newblock Accurate, large minibatch sgd: Training imagenet in 1 hour, 2018.

\bibitem[Shallue et~al.(2019)Shallue, Lee, Antognini, Sohl-Dickstein, Frostig, and Dahl]{shallue2019measuring}
Christopher~J Shallue, Jaehoon Lee, Joseph Antognini, Jascha Sohl-Dickstein, Roy Frostig, and George~E Dahl.
\newblock Measuring the effects of data parallelism on neural network training.
\newblock \emph{Journal of Machine Learning Research}, 20\penalty0 (112):\penalty0 1--49, 2019.

\bibitem[Vaswani(2017)]{vaswani2017attention}
A~Vaswani.
\newblock Attention is all you need.
\newblock \emph{Advances in Neural Information Processing Systems}, 2017.

\bibitem[Radford et~al.(2018)Radford, Narasimhan, Salimans, and Sutskever]{alec2018gpt1}
Alec Radford, Karthik Narasimhan, Tim Salimans, and Ilya Sutskever.
\newblock Improving language understanding by generative pre-training.
\newblock \emph{OpenAI Blog Post}, 2018.
\newblock URL \url{https://openai.com/index/language-unsupervised/}.

\bibitem[Hoffmann et~al.(2022{\natexlab{a}})Hoffmann, Borgeaud, Mensch, Buchatskaya, Cai, Rutherford, de~Las~Casas, Hendricks, Welbl, Clark, et~al.]{hoffmann2022empirical}
Jordan Hoffmann, Sebastian Borgeaud, Arthur Mensch, Elena Buchatskaya, Trevor Cai, Eliza Rutherford, Diego de~Las~Casas, Lisa~Anne Hendricks, Johannes Welbl, Aidan Clark, et~al.
\newblock An empirical analysis of compute-optimal large language model training.
\newblock \emph{Advances in Neural Information Processing Systems}, 35:\penalty0 30016--30030, 2022{\natexlab{a}}.

\bibitem[DeepSeek-AI et~al.(2024)DeepSeek-AI, :, Bi, Chen, Chen, Chen, Dai, Deng, Ding, Dong, Du, Fu, Gao, Gao, Gao, Ge, Guan, Guo, Guo, Hao, Hao, He, Hu, Huang, Li, Li, Li, Li, Li, Liang, Lin, Liu, Liu, Liu, Liu, Liu, Liu, Lu, Lu, Luo, Ma, Nie, Pei, Piao, Qiu, Qu, Ren, Ren, Ruan, Sha, Shao, Song, Su, Sun, Sun, Tang, Wang, Wang, Wang, Wang, Wang, Wu, Wu, Xie, Xie, Xie, Xiong, Xu, Xu, Xu, Yang, You, Yu, Yu, Zhang, Zhang, Zhang, Zhang, Zhang, Zhang, Zhang, Zhang, Zhao, Zhao, Zhou, Zhou, Zhu, and Zou]{deepseekai2024deepseek}
DeepSeek-AI, :, Xiao Bi, Deli Chen, Guanting Chen, Shanhuang Chen, Damai Dai, Chengqi Deng, Honghui Ding, Kai Dong, Qiushi Du, Zhe Fu, Huazuo Gao, Kaige Gao, Wenjun Gao, Ruiqi Ge, Kang Guan, Daya Guo, Jianzhong Guo, Guangbo Hao, Zhewen Hao, Ying He, Wenjie Hu, Panpan Huang, Erhang Li, Guowei Li, Jiashi Li, Yao Li, Y.~K. Li, Wenfeng Liang, Fangyun Lin, A.~X. Liu, Bo~Liu, Wen Liu, Xiaodong Liu, Xin Liu, Yiyuan Liu, Haoyu Lu, Shanghao Lu, Fuli Luo, Shirong Ma, Xiaotao Nie, Tian Pei, Yishi Piao, Junjie Qiu, Hui Qu, Tongzheng Ren, Zehui Ren, Chong Ruan, Zhangli Sha, Zhihong Shao, Junxiao Song, Xuecheng Su, Jingxiang Sun, Yaofeng Sun, Minghui Tang, Bingxuan Wang, Peiyi Wang, Shiyu Wang, Yaohui Wang, Yongji Wang, Tong Wu, Y.~Wu, Xin Xie, Zhenda Xie, Ziwei Xie, Yiliang Xiong, Hanwei Xu, R.~X. Xu, Yanhong Xu, Dejian Yang, Yuxiang You, Shuiping Yu, Xingkai Yu, B.~Zhang, Haowei Zhang, Lecong Zhang, Liyue Zhang, Mingchuan Zhang, Minghua Zhang, Wentao Zhang, Yichao Zhang, Chenggang Zhao, Yao Zhao, Shangyan Zhou, Shunfeng
  Zhou, Qihao Zhu, and Yuheng Zou.
\newblock Deepseek llm: Scaling open-source language models with longtermism, 2024.

\bibitem[Besiroglu et~al.(2024)Besiroglu, Erdil, Barnett, and You]{besiroglu2024chinchilla}
Tamay Besiroglu, Ege Erdil, Matthew Barnett, and Josh You.
\newblock Chinchilla scaling: A replication attempt.
\newblock \emph{arXiv preprint arXiv:2404.10102}, 2024.

\bibitem[Porian et~al.(2024)Porian, Wortsman, Jitsev, Schmidt, and Carmon]{porian2024resolving}
Tomer Porian, Mitchell Wortsman, Jenia Jitsev, Ludwig Schmidt, and Yair Carmon.
\newblock Resolving discrepancies in compute-optimal scaling of language models.
\newblock \emph{arXiv preprint arXiv:2406.19146}, 2024.

\bibitem[Yang and Hu(2021)]{yang21}
Greg Yang and Edward~J. Hu.
\newblock Tensor programs iv: Feature learning in infinite-width neural networks.
\newblock 2021.

\bibitem[Raffel et~al.(2020)Raffel, Shazeer, Roberts, Lee, Narang, Matena, Zhou, Li, and Liu]{raffel2020exploring}
Colin Raffel, Noam Shazeer, Adam Roberts, Katherine Lee, Sharan Narang, Michael Matena, Yanqi Zhou, Wei Li, and Peter~J Liu.
\newblock Exploring the limits of transfer learning with a unified text-to-text transformer.
\newblock \emph{Journal of machine learning research}, 21\penalty0 (140):\penalty0 1--67, 2020.

\bibitem[Kingma(2014)]{kingma2014adam}
Diederik~P Kingma.
\newblock Adam: A method for stochastic optimization.
\newblock \emph{arXiv preprint arXiv:1412.6980}, 2014.

\bibitem[Loshchilov and Hutter(2022)]{loshchilov2022sgdr}
Ilya Loshchilov and Frank Hutter.
\newblock Sgdr: Stochastic gradient descent with warm restarts.
\newblock In \emph{International Conference on Learning Representations}, 2022.

\bibitem[Hu et~al.(2024)Hu, Tu, Han, Cui, He, Zhao, Long, Zheng, Fang, Huang, Zhang, Thai, Wang, Yao, Zhao, Zhou, Cai, Zhai, Ding, Jia, Zeng, dahai li, Liu, and Sun]{hu2024minicpm}
Shengding Hu, Yuge Tu, Xu~Han, Ganqu Cui, Chaoqun He, Weilin Zhao, Xiang Long, Zhi Zheng, Yewei Fang, Yuxiang Huang, Xinrong Zhang, Zhen~Leng Thai, Chongyi Wang, Yuan Yao, Chenyang Zhao, Jie Zhou, Jie Cai, Zhongwu Zhai, Ning Ding, Chao Jia, Guoyang Zeng, dahai li, Zhiyuan Liu, and Maosong Sun.
\newblock Mini{CPM}: Unveiling the potential of small language models with scalable training strategies.
\newblock 2024.

\bibitem[Hägele et~al.(2024)Hägele, Bakouch, Kosson, Allal, Werra, and Jaggi]{hägele2024scaling}
Alexander Hägele, Elie Bakouch, Atli Kosson, Loubna~Ben Allal, Leandro~Von Werra, and Martin Jaggi.
\newblock Scaling laws and compute-optimal training beyond fixed training durations, 2024.

\bibitem[Defazio et~al.(2024)Defazio, Mehta, Mishchenko, Khaled, Cutkosky, et~al.]{defazio2024road}
Aaron Defazio, Harsh Mehta, Konstantin Mishchenko, Ahmed Khaled, Ashok Cutkosky, et~al.
\newblock The road less scheduled.
\newblock \emph{arXiv preprint arXiv:2405.15682}, 2024.

\bibitem[Polyak and Juditsky(1992)]{polyak1992acceleration}
Boris~T Polyak and Anatoli~B Juditsky.
\newblock Acceleration of stochastic approximation by averaging.
\newblock \emph{SIAM journal on control and optimization}, 30\penalty0 (4):\penalty0 838--855, 1992.

\bibitem[Zhai et~al.(2022)Zhai, Kolesnikov, Houlsby, and Beyer]{zhai2022scaling}
Xiaohua Zhai, Alexander Kolesnikov, Neil Houlsby, and Lucas Beyer.
\newblock Scaling vision transformers.
\newblock In \emph{Proceedings of the IEEE/CVF conference on computer vision and pattern recognition}, pages 12104--12113, 2022.

\bibitem[Morwani et~al.(2025)Morwani, Vyas, Zhang, and Kakade]{morwani2025connections}
Depen Morwani, Nikhil Vyas, Hanlin Zhang, and Sham Kakade.
\newblock Connections between schedule-free optimizers, ademamix, and accelerated sgd variants.
\newblock \emph{arXiv preprint arXiv:2502.02431}, 2025.

\bibitem[Izmailov et~al.(2018)Izmailov, Podoprikhin, Garipov, Vetrov, and Wilson]{izmailov2018averaging}
Pavel Izmailov, Dmitrii Podoprikhin, Timur Garipov, Dmitry Vetrov, and Andrew~Gordon Wilson.
\newblock Averaging weights leads to wider optima and better generalization.
\newblock \emph{arXiv preprint arXiv:1803.05407}, 2018.

\bibitem[Karras et~al.(2024)Karras, Aittala, Lehtinen, Hellsten, Aila, and Laine]{karras2024analyzing}
Tero Karras, Miika Aittala, Jaakko Lehtinen, Janne Hellsten, Timo Aila, and Samuli Laine.
\newblock Analyzing and improving the training dynamics of diffusion models.
\newblock In \emph{Proceedings of the IEEE/CVF Conference on Computer Vision and Pattern Recognition}, pages 24174--24184, 2024.

\bibitem[Touvron et~al.(2023)Touvron, Lavril, Izacard, Martinet, Lachaux, Lacroix, Rozi{\`e}re, Goyal, Hambro, Azhar, et~al.]{touvron2023llama}
Hugo Touvron, Thibaut Lavril, Gautier Izacard, Xavier Martinet, Marie-Anne Lachaux, Timoth{\'e}e Lacroix, Baptiste Rozi{\`e}re, Naman Goyal, Eric Hambro, Faisal Azhar, et~al.
\newblock Llama: Open and efficient foundation language models.
\newblock \emph{arXiv preprint arXiv:2302.13971}, 2023.

\bibitem[Erdil(2024)]{epoch2024datamovement}
Ege Erdil.
\newblock Data movement bottlenecks to large-scale model training: Scaling past 1e28 flop, 2024.
\newblock URL \url{https://epochai.org/blog/data-movement-bottlenecks-scaling-past-1e28-flop}.
\newblock Accessed: 2024-11-03.

\bibitem[McLeish et~al.(2025)McLeish, Kirchenbauer, Miller, Singh, Bhatele, Goldblum, Panda, and Goldstein]{mcleish2025gemstones}
Sean McLeish, John Kirchenbauer, David~Yu Miller, Siddharth Singh, Abhinav Bhatele, Micah Goldblum, Ashwinee Panda, and Tom Goldstein.
\newblock Gemstones: A model suite for multi-faceted scaling laws.
\newblock \emph{arXiv preprint arXiv:2502.06857}, 2025.

\bibitem[Bordelon and Pehlevan(2022)]{blake22}
Blake Bordelon and Cengiz Pehlevan.
\newblock Self-consistent dynamical field theory of kernel evolution in wide neural networks.
\newblock 2022.

\bibitem[Yang et~al.(2022)Yang, Hu, Babuschkin, Sidor, Liu, Farhi, Ryder, Pachocki, Chen, and Gao]{yang2022tensor}
Greg Yang, Edward~J Hu, Igor Babuschkin, Szymon Sidor, Xiaodong Liu, David Farhi, Nick Ryder, Jakub Pachocki, Weizhu Chen, and Jianfeng Gao.
\newblock Tensor programs v: Tuning large neural networks via zero-shot hyperparameter transfer.
\newblock \emph{arXiv preprint arXiv:2203.03466}, 2022.

\bibitem[Dey et~al.(2023{\natexlab{a}})Dey, Gosal, Zhiming, Chen, Khachane, Marshall, Pathria, Tom, and Hestness]{dey2023cerebrasgptopencomputeoptimallanguage}
Nolan Dey, Gurpreet Gosal, Zhiming, Chen, Hemant Khachane, William Marshall, Ribhu Pathria, Marvin Tom, and Joel Hestness.
\newblock Cerebras-gpt: Open compute-optimal language models trained on the cerebras wafer-scale cluster, 2023{\natexlab{a}}.

\bibitem[Dey et~al.(2023{\natexlab{b}})Dey, Soboleva, Al-Khateeb, Yang, Pathria, Khachane, Muhammad, Zhiming, Chen, Myers, Steeves, Vassilieva, Tom, and Hestness]{dey2023btlm3b8k7bparameterperformance}
Nolan Dey, Daria Soboleva, Faisal Al-Khateeb, Bowen Yang, Ribhu Pathria, Hemant Khachane, Shaheer Muhammad, Zhiming, Chen, Robert Myers, Jacob~Robert Steeves, Natalia Vassilieva, Marvin Tom, and Joel Hestness.
\newblock Btlm-3b-8k: 7b parameter performance in a 3b parameter model, 2023{\natexlab{b}}.

\bibitem[Liu et~al.(2023)Liu, Qiao, Neiswanger, Wang, Tan, Tao, Li, Wang, Sun, Pangarkar, Fan, Gu, Miller, Zhuang, He, Li, Koto, Tang, Ranjan, Shen, Ren, Iriondo, Mu, Hu, Schulze, Nakov, Baldwin, and Xing]{liu2023llm360fullytransparentopensource}
Zhengzhong Liu, Aurick Qiao, Willie Neiswanger, Hongyi Wang, Bowen Tan, Tianhua Tao, Junbo Li, Yuqi Wang, Suqi Sun, Omkar Pangarkar, Richard Fan, Yi~Gu, Victor Miller, Yonghao Zhuang, Guowei He, Haonan Li, Fajri Koto, Liping Tang, Nikhil Ranjan, Zhiqiang Shen, Xuguang Ren, Roberto Iriondo, Cun Mu, Zhiting Hu, Mark Schulze, Preslav Nakov, Tim Baldwin, and Eric~P. Xing.
\newblock Llm360: Towards fully transparent open-source llms, 2023.

\bibitem[Vyas et~al.(2023)Vyas, Atanasov, Bordelon, Morwani, Sainathan, and Pehlevan]{vyas2023featurelearning}
Nikhil Vyas, Alexander Atanasov, Blake Bordelon, Depen Morwani, Sabarish Sainathan, and Cengiz Pehlevan.
\newblock Feature-learning networks are consistent across widths at realistic scales.
\newblock 2023.

\bibitem[Yang et~al.(2024)Yang, Yu, Zhu, and Hayou]{yang2024tensor}
Greg Yang, Dingli Yu, Chen Zhu, and Soufiane Hayou.
\newblock Tensor programs {VI}: Feature learning in infinite depth neural networks.
\newblock 2024.

\bibitem[Bordelon et~al.(2024{\natexlab{a}})Bordelon, Chaudhry, and Pehlevan]{bordelon2024infinitelimitsmultiheadtransformer}
Blake Bordelon, Hamza~Tahir Chaudhry, and Cengiz Pehlevan.
\newblock Infinite limits of multi-head transformer dynamics, 2024{\natexlab{a}}.

\bibitem[Bordelon et~al.(2024{\natexlab{b}})Bordelon, Noci, Li, Hanin, and Pehlevan]{bordelon2024depthwise}
Blake Bordelon, Lorenzo Noci, Mufan~Bill Li, Boris Hanin, and Cengiz Pehlevan.
\newblock Depthwise hyperparameter transfer in residual networks: Dynamics and scaling limit.
\newblock 2024{\natexlab{b}}.

\bibitem[Zou et~al.(2023)Zou, Wu, Braverman, Gu, and Kakade]{zou2023benign}
Difan Zou, Jingfeng Wu, Vladimir Braverman, Quanquan Gu, and Sham~M Kakade.
\newblock Benign overfitting of constant-stepsize sgd for linear regression.
\newblock \emph{Journal of Machine Learning Research}, 24\penalty0 (326):\penalty0 1--58, 2023.

\bibitem[Caponnetto and De~Vito(2007)]{caponnetto2007optimal}
Andrea Caponnetto and Ernesto De~Vito.
\newblock Optimal rates for the regularized least-squares algorithm.
\newblock \emph{Foundations of Computational Mathematics}, 7:\penalty0 331--368, 2007.

\bibitem[Bordelon et~al.(2024{\natexlab{c}})Bordelon, Atanasov, and Pehlevan]{bordelon2024dynamical}
Blake Bordelon, Alexander Atanasov, and Cengiz Pehlevan.
\newblock A dynamical model of neural scaling laws.
\newblock In \emph{Forty-first International Conference on Machine Learning}, 2024{\natexlab{c}}.

\bibitem[Lin et~al.(2024)Lin, Wu, Kakade, Bartlett, and Lee]{lin2024scaling}
Licong Lin, Jingfeng Wu, Sham~M Kakade, Peter~L Bartlett, and Jason~D Lee.
\newblock Scaling laws in linear regression: Compute, parameters, and data.
\newblock In \emph{The Thirty-eighth Annual Conference on Neural Information Processing Systems}, 2024.

\bibitem[Paquette et~al.(2024)Paquette, Paquette, Xiao, and Pennington]{paquette20244+}
Elliot Paquette, Courtney Paquette, Lechao Xiao, and Jeffrey Pennington.
\newblock 4+ 3 phases of compute-optimal neural scaling laws.
\newblock In \emph{The Thirty-eighth Annual Conference on Neural Information Processing Systems}, 2024.

\bibitem[Hestness et~al.(2017)Hestness, Narang, Ardalani, Diamos, Jun, Kianinejad, Patwary, Yang, and Zhou]{hestness2017deep}
Joel Hestness, Sharan Narang, Newsha Ardalani, Gregory Diamos, Heewoo Jun, Hassan Kianinejad, Md~Mostofa~Ali Patwary, Yang Yang, and Yanqi Zhou.
\newblock Deep learning scaling is predictable, empirically.
\newblock \emph{arXiv preprint arXiv:1712.00409}, 2017.

\bibitem[Henighan et~al.(2020)Henighan, Kaplan, Katz, Chen, Hesse, Jackson, Jun, Brown, Dhariwal, Gray, et~al.]{henighan2020scaling}
Tom Henighan, Jared Kaplan, Mor Katz, Mark Chen, Christopher Hesse, Jacob Jackson, Heewoo Jun, Tom~B Brown, Prafulla Dhariwal, Scott Gray, et~al.
\newblock Scaling laws for autoregressive generative modeling.
\newblock \emph{arXiv preprint arXiv:2010.14701}, 2020.

\bibitem[Hoffmann et~al.(2022{\natexlab{b}})Hoffmann, Borgeaud, Mensch, Buchatskaya, Cai, Rutherford, Casas, Hendricks, Welbl, Clark, et~al.]{hoffmann2022training}
Jordan Hoffmann, Sebastian Borgeaud, Arthur Mensch, Elena Buchatskaya, Trevor Cai, Eliza Rutherford, Diego de~Las Casas, Lisa~Anne Hendricks, Johannes Welbl, Aidan Clark, et~al.
\newblock Training compute-optimal large language models.
\newblock \emph{arXiv preprint arXiv:2203.15556}, 2022{\natexlab{b}}.

\bibitem[Pearce and Song(2024)]{pearce2024reconciling}
Tim Pearce and Jinyeop Song.
\newblock Reconciling kaplan and chinchilla scaling laws.
\newblock \emph{arXiv preprint arXiv:2406.12907}, 2024.

\bibitem[Bi et~al.(2024)Bi, Chen, Chen, Chen, Dai, Deng, Ding, Dong, Du, Fu, et~al.]{bi2024deepseek}
Xiao Bi, Deli Chen, Guanting Chen, Shanhuang Chen, Damai Dai, Chengqi Deng, Honghui Ding, Kai Dong, Qiushi Du, Zhe Fu, et~al.
\newblock Deepseek llm: Scaling open-source language models with longtermism.
\newblock \emph{arXiv preprint arXiv:2401.02954}, 2024.

\bibitem[Zhang et~al.(2019)Zhang, Li, Nado, Martens, Sachdeva, Dahl, Shallue, and Grosse]{zhang2019algorithmic}
Guodong Zhang, Lala Li, Zachary Nado, James Martens, Sushant Sachdeva, George Dahl, Chris Shallue, and Roger~B Grosse.
\newblock Which algorithmic choices matter at which batch sizes? insights from a noisy quadratic model.
\newblock \emph{Advances in neural information processing systems}, 32, 2019.

\bibitem[Li et~al.(2021)Li, Malladi, and Arora]{li2021validity}
Zhiyuan Li, Sadhika Malladi, and Sanjeev Arora.
\newblock On the validity of modeling sgd with stochastic differential equations (sdes).
\newblock \emph{Advances in Neural Information Processing Systems}, 34:\penalty0 12712--12725, 2021.

\bibitem[Golmant et~al.(2018)Golmant, Vemuri, Yao, Feinberg, Gholami, Rothauge, Mahoney, and Gonzalez]{golmant2018computational}
Noah Golmant, Nikita Vemuri, Zhewei Yao, Vladimir Feinberg, Amir Gholami, Kai Rothauge, Michael~W Mahoney, and Joseph Gonzalez.
\newblock On the computational inefficiency of large batch sizes for stochastic gradient descent.
\newblock \emph{arXiv preprint arXiv:1811.12941}, 2018.

\bibitem[Hilton et~al.(2022)Hilton, Cobbe, and Schulman]{hilton2022batch}
Jacob Hilton, Karl Cobbe, and John Schulman.
\newblock Batch size-invariance for policy optimization.
\newblock \emph{Advances in Neural Information Processing Systems}, 35:\penalty0 17086--17098, 2022.

\bibitem[Smith et~al.(2017)Smith, Kindermans, Ying, and Le]{smith2017don}
Samuel~L Smith, Pieter-Jan Kindermans, Chris Ying, and Quoc~V Le.
\newblock Don't decay the learning rate, increase the batch size.
\newblock \emph{arXiv preprint arXiv:1711.00489}, 2017.

\bibitem[Smith and Le(2017)]{smith2017bayesian}
Samuel~L Smith and Quoc~V Le.
\newblock A bayesian perspective on generalization and stochastic gradient descent.
\newblock \emph{arXiv preprint arXiv:1710.06451}, 2017.

\bibitem[Jain et~al.(2018)Jain, Kakade, Kidambi, Netrapalli, and Sidford]{jain2018parallelizing}
Prateek Jain, Sham~M Kakade, Rahul Kidambi, Praneeth Netrapalli, and Aaron Sidford.
\newblock Parallelizing stochastic gradient descent for least squares regression: mini-batching, averaging, and model misspecification.
\newblock \emph{Journal of machine learning research}, 18\penalty0 (223):\penalty0 1--42, 2018.

\bibitem[Ma et~al.(2018)Ma, Bassily, and Belkin]{ma2018power}
Siyuan Ma, Raef Bassily, and Mikhail Belkin.
\newblock The power of interpolation: Understanding the effectiveness of sgd in modern over-parametrized learning.
\newblock In \emph{International Conference on Machine Learning}, pages 3325--3334. PMLR, 2018.

\bibitem[Filatov et~al.(2024)Filatov, Ebert, Wang, and Kesselheim]{filatov2024time}
Oleg Filatov, Jan Ebert, Jiangtao Wang, and Stefan Kesselheim.
\newblock Time transfer: On optimal learning rate and batch size in the infinite data limit.
\newblock \emph{arXiv preprint arXiv:2410.05838}, 2024.

\bibitem[Kosson et~al.(2023)Kosson, Messmer, and Jaggi]{kosson2023rotational}
Atli Kosson, Bettina Messmer, and Martin Jaggi.
\newblock Rotational equilibrium: How weight decay balances learning across neural networks.
\newblock \emph{arXiv preprint arXiv:2305.17212}, 2023.

\bibitem[Loshchilov(2017)]{loshchilov2017decoupled}
I~Loshchilov.
\newblock Decoupled weight decay regularization.
\newblock \emph{arXiv preprint arXiv:1711.05101}, 2017.

\bibitem[Goh(2017)]{goh2017why}
Gabriel Goh.
\newblock Why momentum really works.
\newblock \emph{Distill}, 2017.
\newblock \doi{10.23915/distill.00006}.
\newblock URL \url{http://distill.pub/2017/momentum}.

\bibitem[Smith et~al.(2022)Smith, Patwary, Norick, LeGresley, Rajbhandari, Casper, Liu, Prabhumoye, Zerveas, Korthikanti, et~al.]{smith2022using}
Shaden Smith, Mostofa Patwary, Brandon Norick, Patrick LeGresley, Samyam Rajbhandari, Jared Casper, Zhun Liu, Shrimai Prabhumoye, George Zerveas, Vijay Korthikanti, et~al.
\newblock Using deepspeed and megatron to train megatron-turing nlg 530b, a large-scale generative language model.
\newblock \emph{arXiv preprint arXiv:2201.11990}, 2022.

\bibitem[Wortsman et~al.(2023)Wortsman, Liu, Xiao, Everett, Alemi, Adlam, Co-Reyes, Gur, Kumar, Novak, Pennington, Sohl-dickstein, Xu, Lee, Gilmer, and Kornblith]{wortsman2023smallscale}
Mitchell Wortsman, Peter~J. Liu, Lechao Xiao, Katie Everett, Alex Alemi, Ben Adlam, John~D. Co-Reyes, Izzeddin Gur, Abhishek Kumar, Roman Novak, Jeffrey Pennington, Jascha Sohl-dickstein, Kelvin Xu, Jaehoon Lee, Justin Gilmer, and Simon Kornblith.
\newblock Small-scale proxies for large-scale transformer training instabilities, 2023.

\bibitem[Groeneveld et~al.(2024)Groeneveld, Beltagy, Walsh, Bhagia, Kinney, Tafjord, Jha, Ivison, Magnusson, Wang, et~al.]{groeneveld2024olmo}
Dirk Groeneveld, Iz~Beltagy, Pete Walsh, Akshita Bhagia, Rodney Kinney, Oyvind Tafjord, Ananya~Harsh Jha, Hamish Ivison, Ian Magnusson, Yizhong Wang, et~al.
\newblock Olmo: Accelerating the science of language models.
\newblock \emph{arXiv preprint arXiv:2402.00838}, 2024.

\bibitem[Robbins and Monro(1951)]{robbins1951stochastic}
Herbert Robbins and Sutton Monro.
\newblock A stochastic approximation method.
\newblock \emph{The annals of mathematical statistics}, pages 400--407, 1951.

\bibitem[Muennighoff et~al.(2023)Muennighoff, Rush, Barak, Scao, Piktus, Tazi, Pyysalo, Wolf, and Raffel]{muennighoff2023scaling}
Niklas Muennighoff, Alexander~M Rush, Boaz Barak, Teven~Le Scao, Aleksandra Piktus, Nouamane Tazi, Sampo Pyysalo, Thomas Wolf, and Colin Raffel.
\newblock Scaling data-constrained language models.
\newblock \emph{arXiv preprint arXiv:2305.16264}, 2023.

\bibitem[Dehghani et~al.(2023)Dehghani, Djolonga, Mustafa, Padlewski, Heek, Gilmer, Steiner, Caron, Geirhos, Alabdulmohsin, et~al.]{dehghani2023scaling}
Mostafa Dehghani, Josip Djolonga, Basil Mustafa, Piotr Padlewski, Jonathan Heek, Justin Gilmer, Andreas~Peter Steiner, Mathilde Caron, Robert Geirhos, Ibrahim Alabdulmohsin, et~al.
\newblock Scaling vision transformers to 22 billion parameters.
\newblock In \emph{International Conference on Machine Learning}, pages 7480--7512. PMLR, 2023.

\bibitem[Zhai et~al.(2023)Zhai, Likhomanenko, Littwin, Busbridge, Ramapuram, Zhang, Gu, and Susskind]{zhai23stabilizing}
Shuangfei Zhai, Tatiana Likhomanenko, Etai Littwin, Dan Busbridge, Jason Ramapuram, Yizhe Zhang, Jiatao Gu, and Joshua~M. Susskind.
\newblock Stabilizing transformer training by preventing attention entropy collapse.
\newblock In Andreas Krause, Emma Brunskill, Kyunghyun Cho, Barbara Engelhardt, Sivan Sabato, and Jonathan Scarlett, editors, \emph{Proceedings of the 40th International Conference on Machine Learning}, volume 202 of \emph{Proceedings of Machine Learning Research}, pages 40770--40803. PMLR, 23--29 Jul 2023.

\bibitem[Chowdhery et~al.(2022)Chowdhery, Narang, Devlin, Bosma, Mishra, Roberts, Barham, Chung, Sutton, Gehrmann, Schuh, Shi, Tsvyashchenko, Maynez, Rao, Barnes, Tay, Shazeer, Prabhakaran, Reif, Du, Hutchinson, Pope, Bradbury, Austin, Isard, Gur-Ari, Yin, Duke, Levskaya, Ghemawat, Dev, Michalewski, Garcia, Misra, Robinson, Fedus, Zhou, Ippolito, Luan, Lim, Zoph, Spiridonov, Sepassi, Dohan, Agrawal, Omernick, Dai, Pillai, Pellat, Lewkowycz, Moreira, Child, Polozov, Lee, Zhou, Wang, Saeta, Diaz, Firat, Catasta, Wei, Meier-Hellstern, Eck, Dean, Petrov, and Fiedel]{chowdhery2022palm}
Aakanksha Chowdhery, Sharan Narang, Jacob Devlin, Maarten Bosma, Gaurav Mishra, Adam Roberts, Paul Barham, Hyung~Won Chung, Charles Sutton, Sebastian Gehrmann, Parker Schuh, Kensen Shi, Sasha Tsvyashchenko, Joshua Maynez, Abhishek Rao, Parker Barnes, Yi~Tay, Noam Shazeer, Vinodkumar Prabhakaran, Emily Reif, Nan Du, Ben Hutchinson, Reiner Pope, James Bradbury, Jacob Austin, Michael Isard, Guy Gur-Ari, Pengcheng Yin, Toju Duke, Anselm Levskaya, Sanjay Ghemawat, Sunipa Dev, Henryk Michalewski, Xavier Garcia, Vedant Misra, Kevin Robinson, Liam Fedus, Denny Zhou, Daphne Ippolito, David Luan, Hyeontaek Lim, Barret Zoph, Alexander Spiridonov, Ryan Sepassi, David Dohan, Shivani Agrawal, Mark Omernick, Andrew~M. Dai, Thanumalayan~Sankaranarayana Pillai, Marie Pellat, Aitor Lewkowycz, Erica Moreira, Rewon Child, Oleksandr Polozov, Katherine Lee, Zongwei Zhou, Xuezhi Wang, Brennan Saeta, Mark Diaz, Orhan Firat, Michele Catasta, Jason Wei, Kathy Meier-Hellstern, Douglas Eck, Jeff Dean, Slav Petrov, and Noah Fiedel.
\newblock Palm: Scaling language modeling with pathways, 2022.

\bibitem[Zhang et~al.(2022)Zhang, Chen, Shi, Sun, and Luo]{zhang2023adamconvergemodificationupdate}
Yushun Zhang, Congliang Chen, Naichen Shi, Ruoyu Sun, and Zhi-Quan Luo.
\newblock Adam can converge without any modification on update rules.
\newblock \emph{Neural Information Processing Systems}, 2022.

\end{thebibliography}
\newpage
\appendix
\addcontentsline{toc}{section}{Appendix} %
\renewcommand \thepart{} %
\renewcommand \partname{}
\part{\Large{\centerline{Appendix}}}
\parttoc
\newpage
\section{Training Dynamics}
\label{app:training_dynamics}
\textbf{A simple strategy for setting warmup steps.}
To further prove that the critical batch size actually exists and the saturation of large batch sizes is not an artifact of not training well with proper hyper-paragrams, we take into account the warmup fraction in training as well: 
\begin{wrapfigure}{r}{0.4\textwidth}
    \includegraphics[width=.4\textwidth]{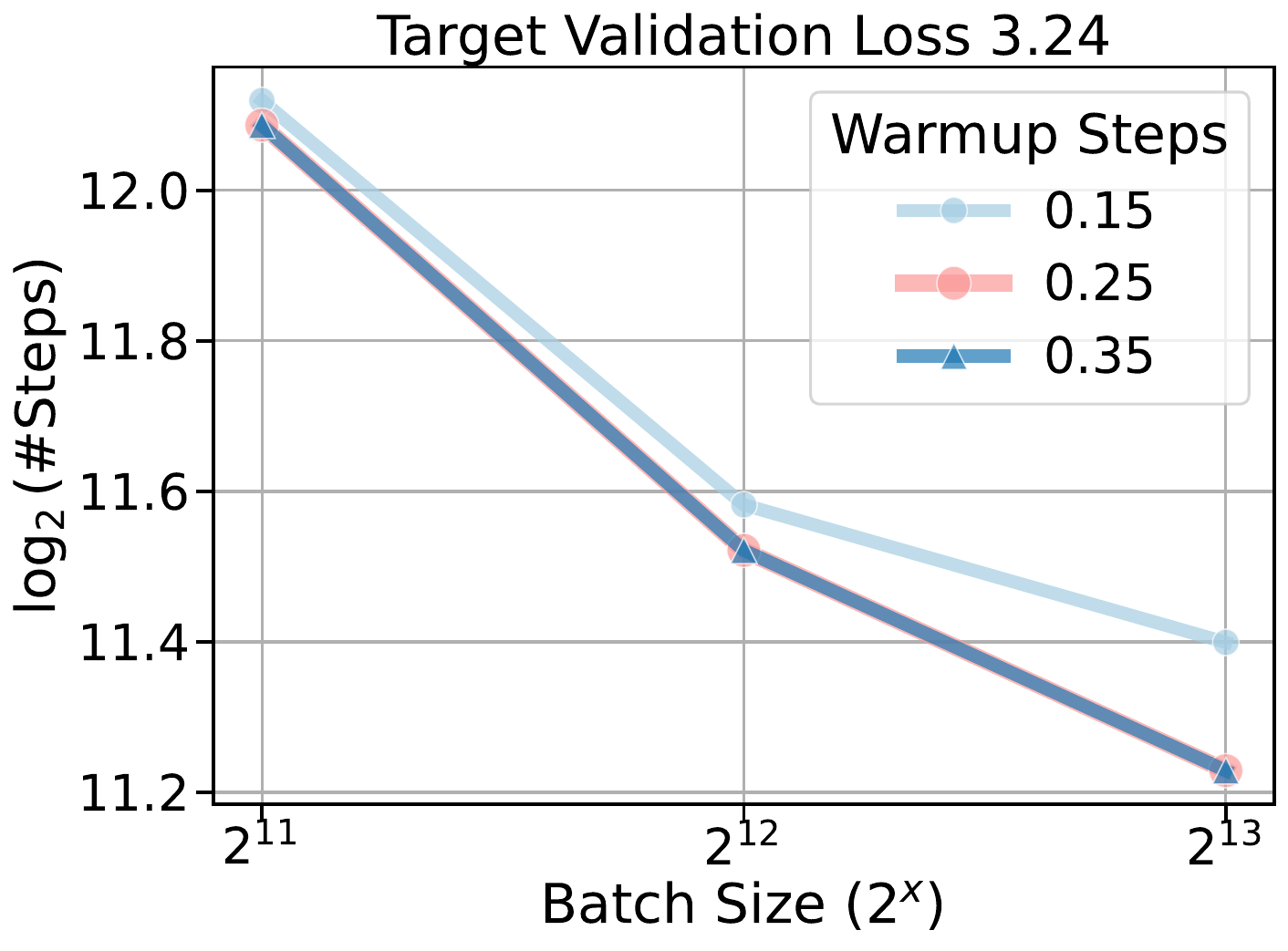}
    \caption{Ablation of warmup steps used in the linear LR warmup stage for large batch sizes.} %
    \label{fig:warmup}
\end{wrapfigure}

We sweep over warmup step ratios (how many fraction of training steps do we need to linearly scale the learning rate from zero) over 0.25 and 0.1 and find that 0.25 works best for 85M models. Therefore, we fix this number of warmup steps to be 0.25 of the $t_{\textrm{Chin}}$ for future experiments.
For 151M models, we sweep over the fraction of warmup steps in $\{0.15, 0.25, 0.35\}$. We show in \Cref{fig:warmup} that using a warmup ratio of \textbf{0.25} can be a reasonable design choice as it enjoys consistently better performance than 0.15 yet only slightly underperforms 0.35.
After we find that setting the warmup steps according to this heuristic, we use the ratio proportionally for all the other model sizes. This strategy has also been shown to be effective in \citep{porian2024resolving}.

\textbf{Examining the last part of training.}
By closely examining the final stages of the training process (\Cref{fig:ewa_and_beta2}), it becomes apparent that applying Exponentially Weighted Averages (EWA) can help smooth out noise, allowing the optimization to converge to the target loss more efficiently.
For example, a very high EWA decay rate would be needed even for a 1.2B model with a moderate batch size of 1024.
Moreover, we observe that the optimization process is notably influenced by the final phase of training. For instance, by step 10,000, most runs achieve a validation loss below 3.2 (\Cref{fig:ewa}), and similarly, a loss below 2.8 is reached by step 30,000 (\Cref{fig:beta2}). 
However, to reach the target loss of 2.736, the difference between the best and second-best runs grows substantially, with the best run requiring over 5,000 fewer steps.
\begin{figure}[h]
    \centering
    \begin{subfigure}[b]{0.48\textwidth}
    \centering
    \includegraphics[width=\textwidth]{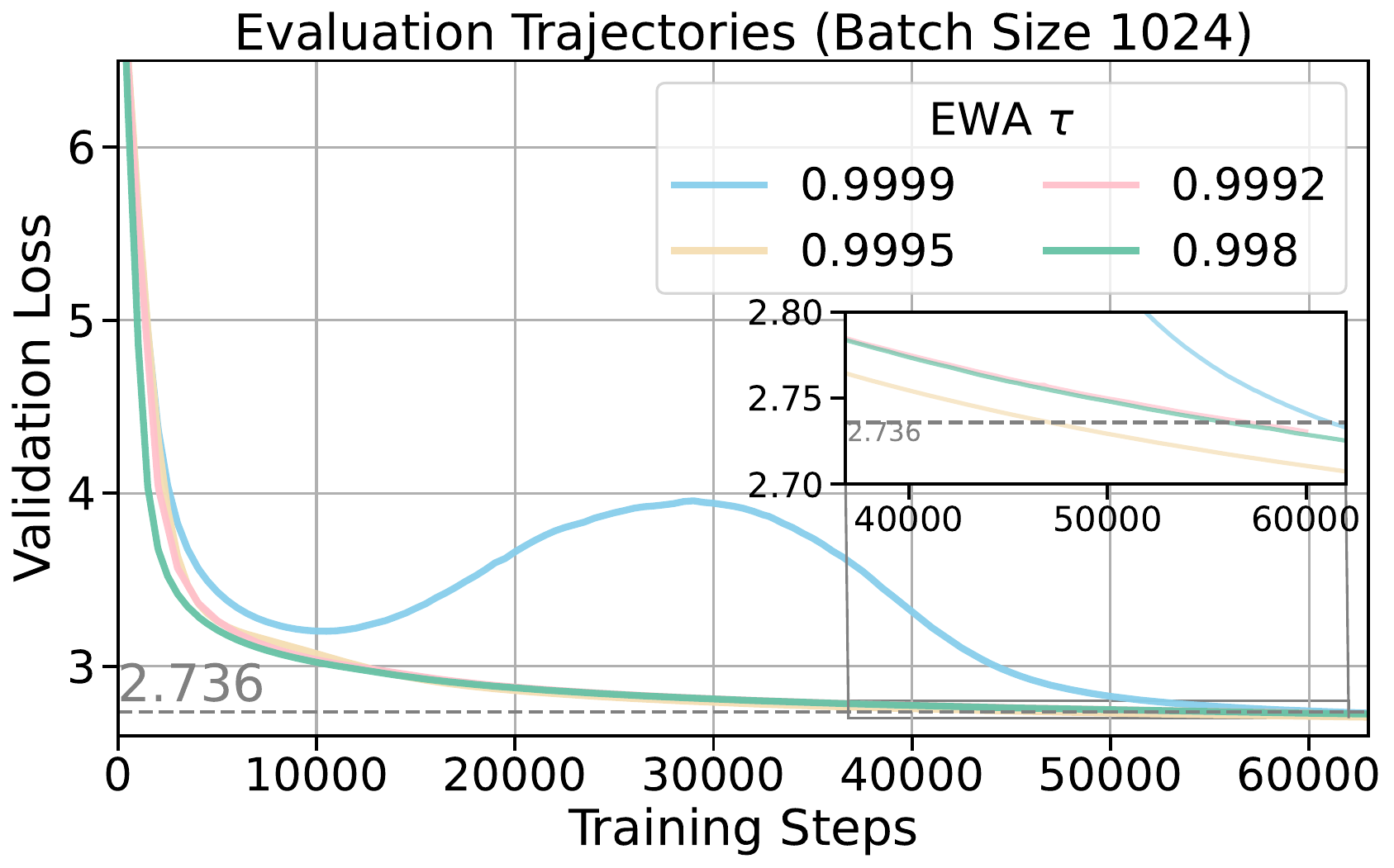}
    \caption{EWA Ablation}
    \label{fig:ewa}
    \end{subfigure}
    \begin{subfigure}[b]{0.48\textwidth}
    \centering
    \includegraphics[width=\textwidth]{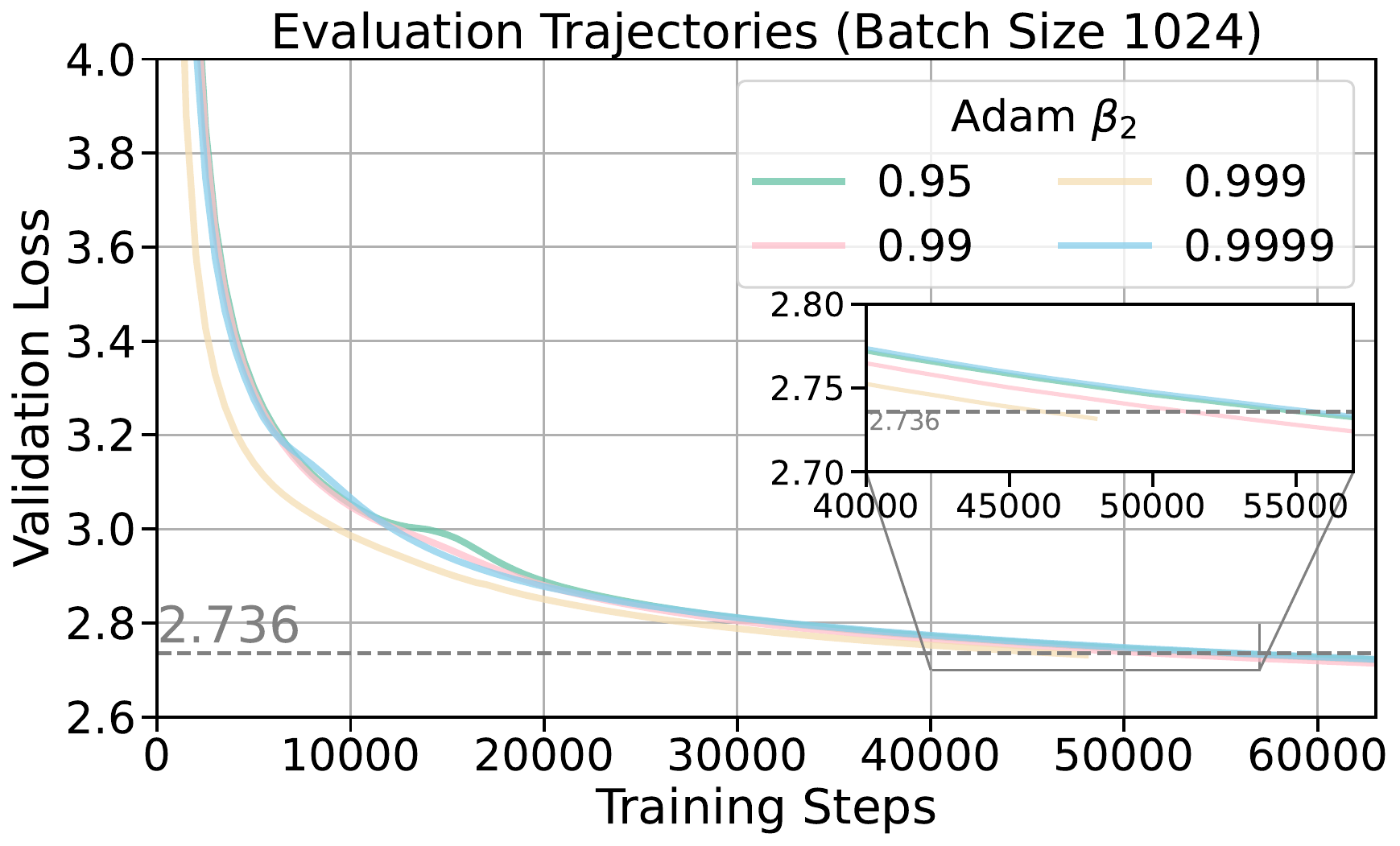}
    \caption{$\beta_2$ Ablation}
    \label{fig:beta2}
    \end{subfigure}
    \caption{\textbf{A large enough EWA decay rate $\tau$ and Adam $\beta_2$ is important for long-duration training.} We plot the evaluation curves of 1.2B models, as in Chinchilla settings, we scale up data size proportionally to model size. When increasing the number of training tokens, it is crucial to carefully set appropriate values for both $\beta_2$ and $\tau$ to effectively account for efficiency. 
    }
    \label{fig:ewa_and_beta2}
\end{figure}

\textbf{Scheduler comparison for other batch sizes}.
In \Cref{fig:small_batch_scheduler}, we include more comparisons on different schedulers that are reported in the main text (\Cref{fig:teaser}). %
Overall, our Constant+EWA performs competitively with cosine scheduling and outperforms WSD scheduling, especially for large batch size regimes.
Note that we sweep over the decay steps as $0.1, 0.2, 0.3 \times$ total training steps for WSD scheduling. 
We tune cosine scheduling by conducting sweeps over various maximum optimization steps to identify the optimal value, and then rerun the training using this step count. This approach ensures that the model reaches the target loss near the end of training, optimizing the performance of learning rate decay.
For schedule-free optimizers, we tune the $\beta_1$ 0.9, 0.95, 0.98. 
Under small batch sizes, the schedule-free optimizer \citep{defazio2024road} is a competitive baseline but it is significantly worse for batch sizes larger than 1024.

\begin{figure}[h]
    \centering
    \begin{subfigure}[b]{0.24\textwidth}
        \centering
        \includegraphics[width=\textwidth]{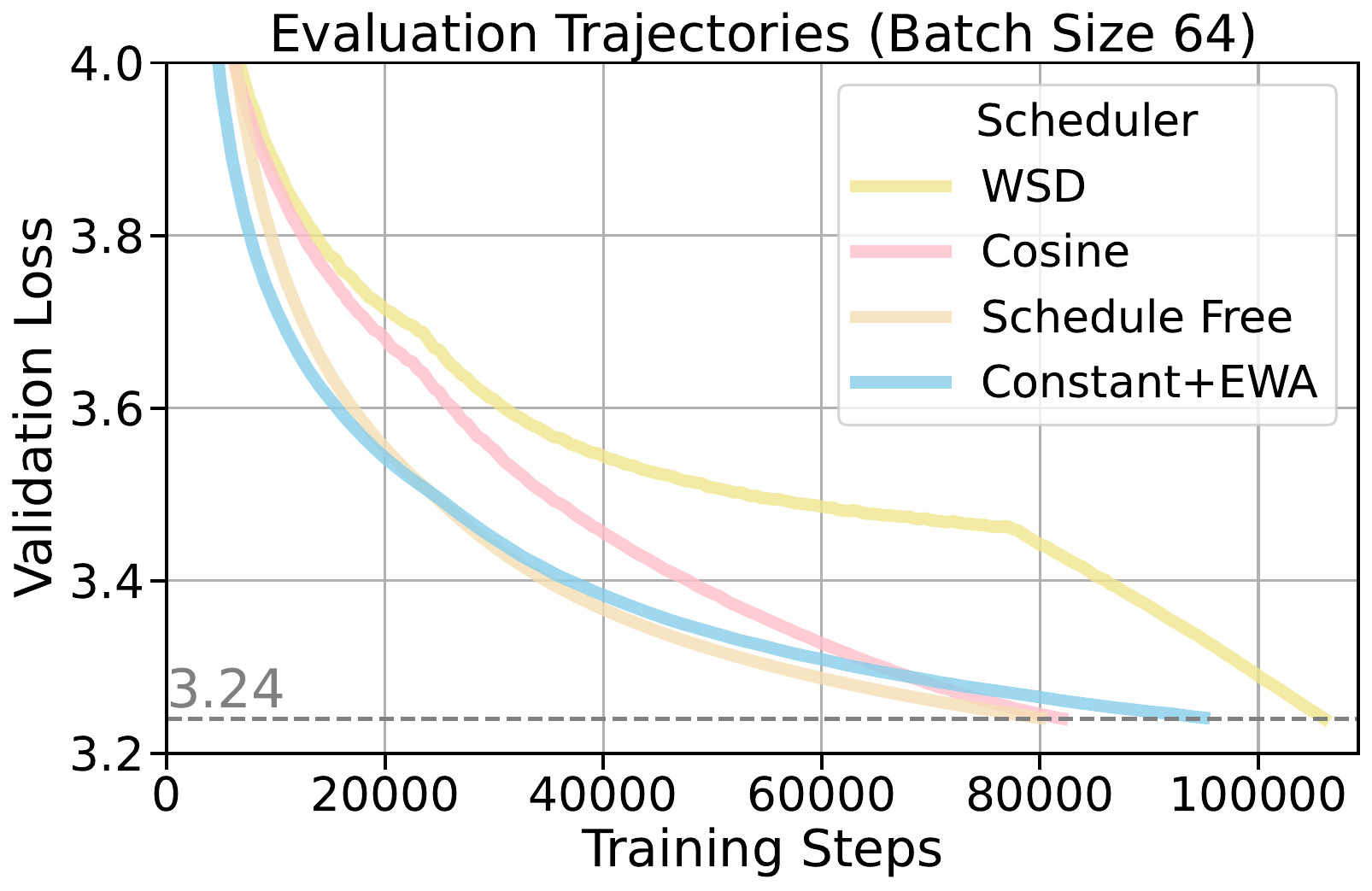}
        \caption{Batch size 64}
    \end{subfigure}
    \hfill
    \begin{subfigure}[b]{0.247\textwidth}
        \centering
        \includegraphics[width=\textwidth]{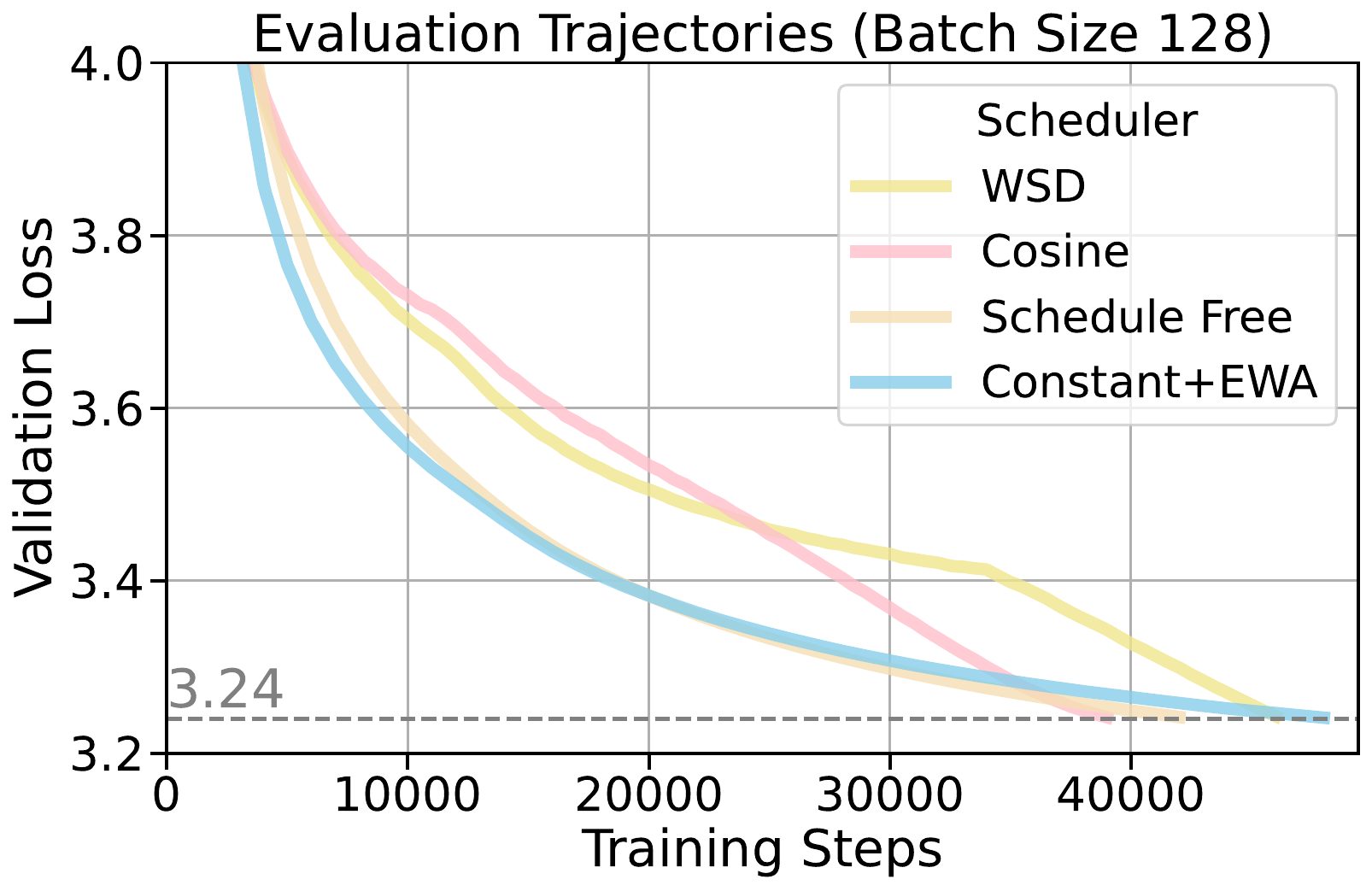}
        \caption{Batch size 128}
    \end{subfigure}
    \hfill
    \begin{subfigure}[b]{0.24\textwidth}
        \centering
        \includegraphics[width=\textwidth]{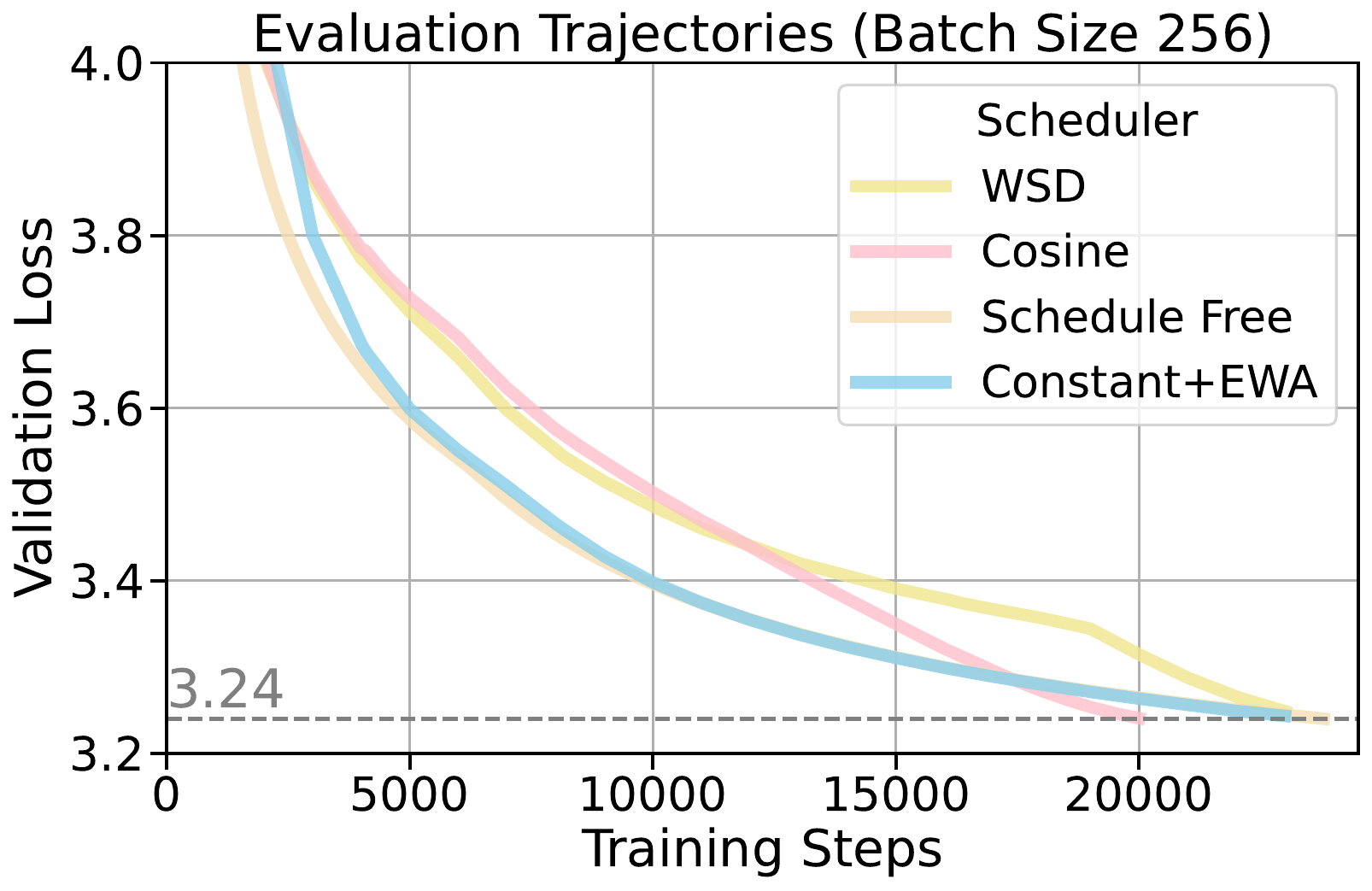}
        \caption{Batch size 256}
    \end{subfigure}
    \hfill
    \begin{subfigure}[b]{0.24\textwidth}
        \centering
        \includegraphics[width=\textwidth]{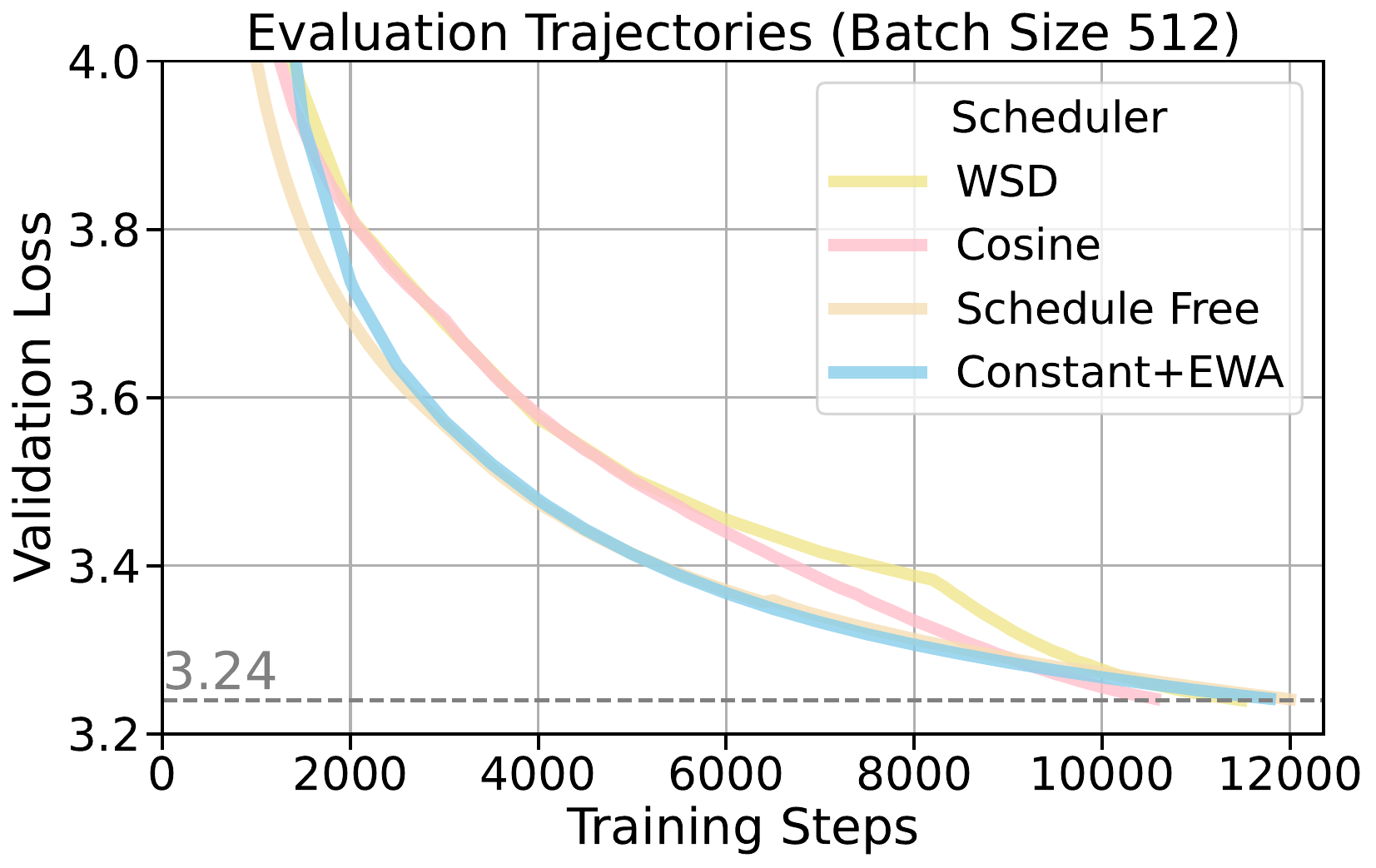}
        \caption{Batch size 512}
    \end{subfigure}
    \hfill
    \begin{subfigure}[b]{0.31\textwidth}
        \centering
        \includegraphics[width=\textwidth]{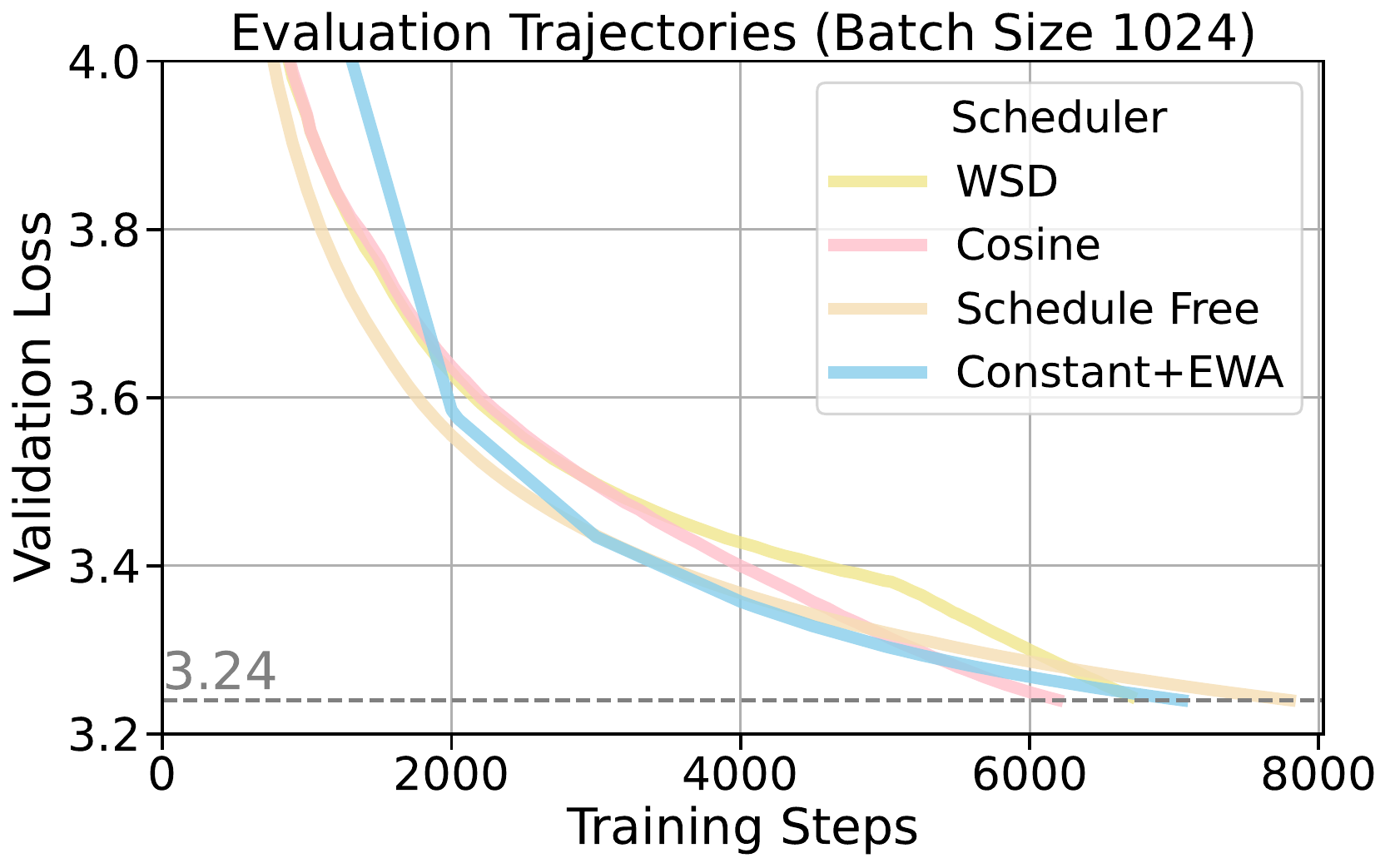}
        \caption{Batch size 1024}
    \end{subfigure}
    \hfill
    \begin{subfigure}[b]{0.31\textwidth}
        \centering
        \includegraphics[width=\textwidth]{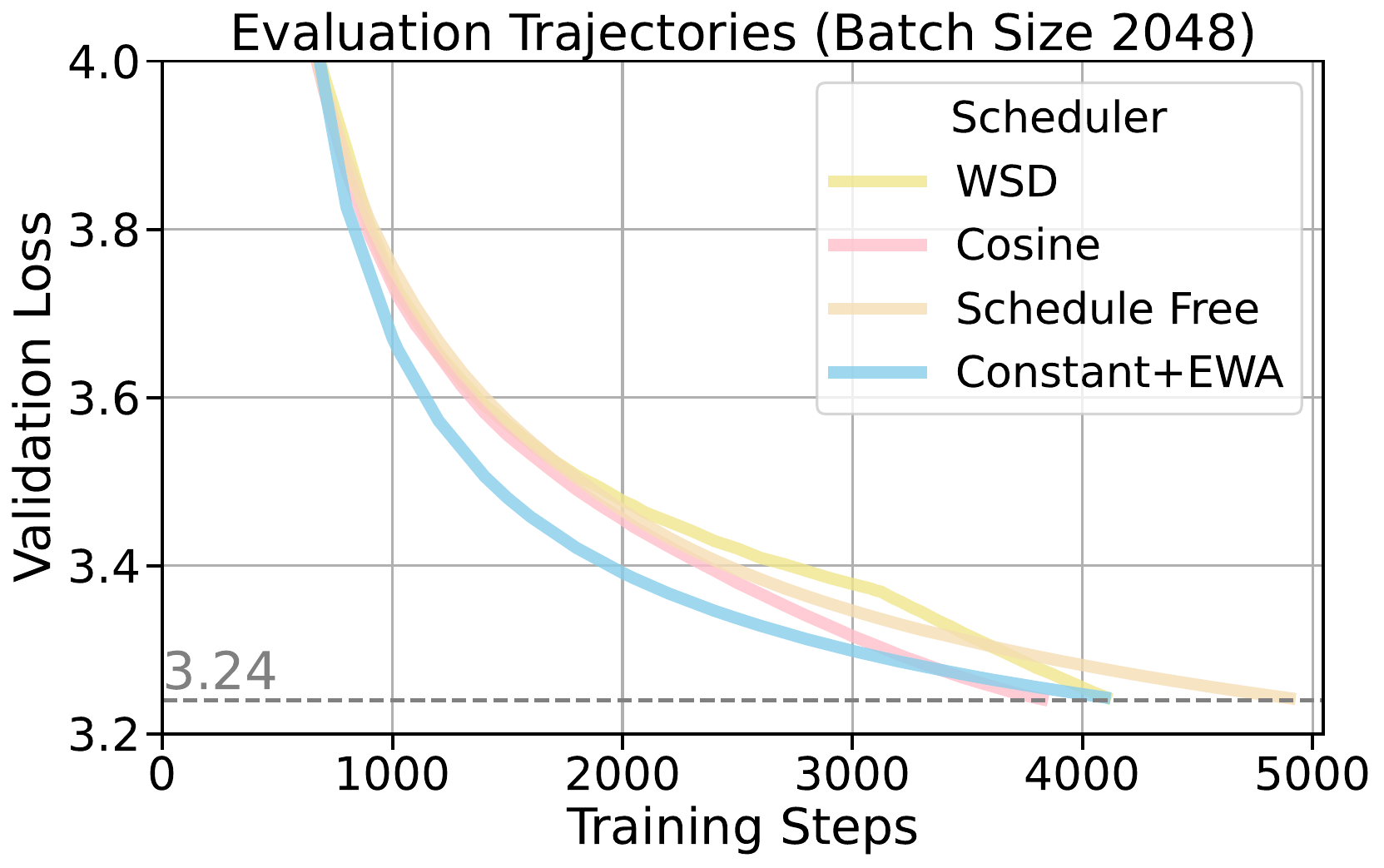}
        \caption{Batch size 2048}
    \end{subfigure}
    \hfill
    \begin{subfigure}[b]{0.31\textwidth}
        \centering
        \includegraphics[width=\textwidth]{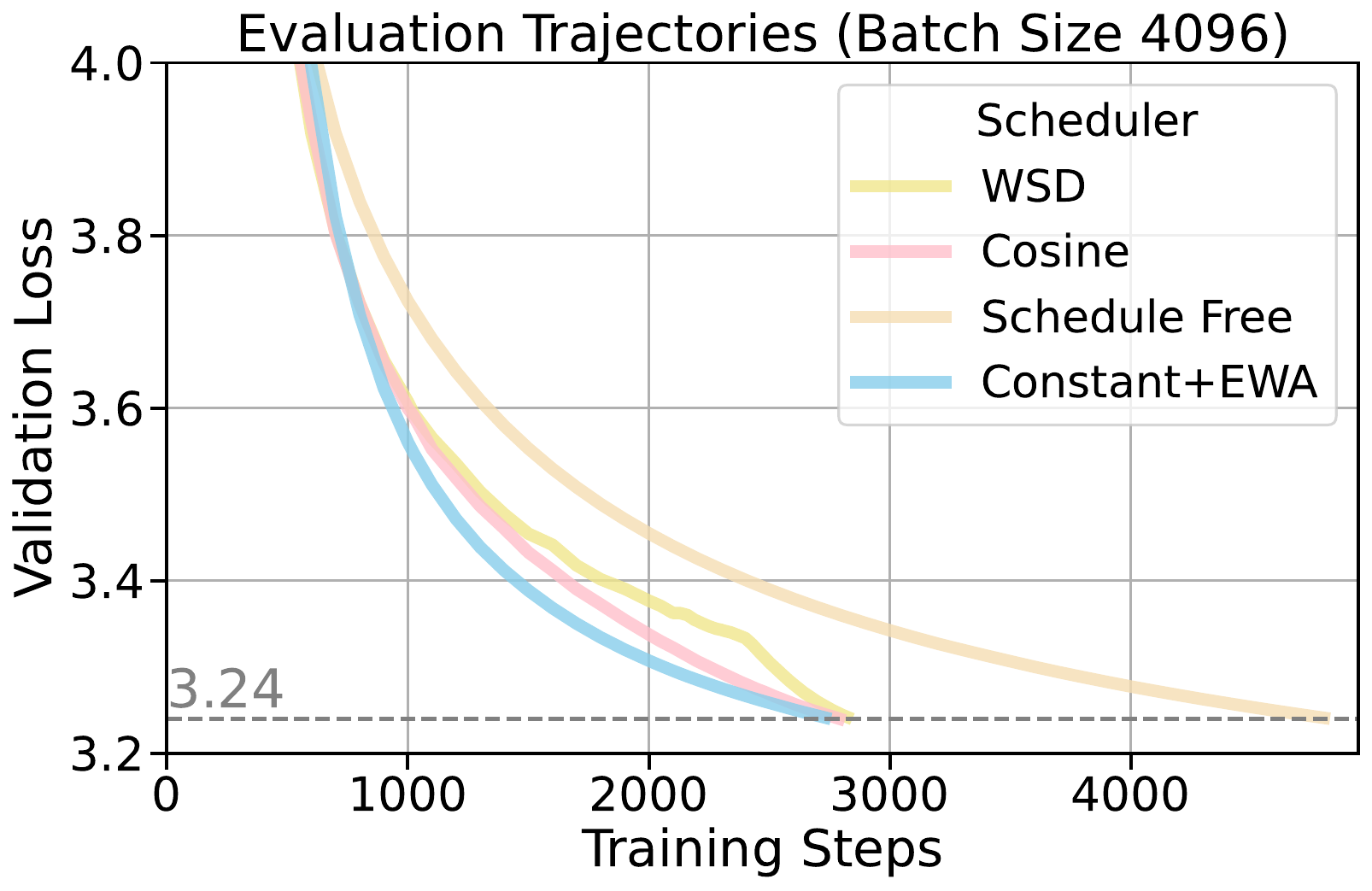}
        \caption{Batch size 4096}
    \end{subfigure}
    \caption{Scheduler comparison for different batch sizes. All are with model size 151M.}
    \label{fig:small_batch_scheduler}
\end{figure}

\textbf{Longer training requires higher EWA decay rate $\tau$.}
Throughout the paper, we adopt a learning rate of $0.00316$ across most experiments, but it is unclear whether that would be sub-optimal, especially since training with longer duration may require a lower learning rate as suggested in \citep{deepseekai2024deepseek}. 
Therefore, we justify our design decision on tuning EWA decay rate for simulating learning rate decay on different training durations by conducting the following experiments on a series of 151M models with
(a) batch size 256, 0.5$\times$ Chinchilla tokens; 
(b) batch size 256, 20$\times$ Chinchilla tokens;
(c) batch size 2048, 20$\times$ Chinchilla tokens, all with learning rate swept over $\{0.00316, 0.00158, 0.01264, 0.00632, 0.00075\}$ and EWA decay rate $\tau$ over $\{0.99, 0.9968, 0.999, 0.99968, 0.9999\}$.
We set the number of warmup steps to be 0.25 of the total steps for (a) and 0.05 for (b) and (c).
The results in \Cref{fig:lr-heatmap} denote the validation loss at the end of the training, which consistently show that within each group of experiments, a learning rate of $0.00316$ we use throughout the paper is consistently the best.
Moreover, when enlarging the training data size from 0.5$\times$ Chinchilla tokens to 20$\times$, the optimal EWA decay rate value $\tau$ would increase as well.
This is also justified in the results presented in \Cref{fig:ewa_and_beta2} and \Cref{tab:optimal_hyerparam} which indicate that longer training durations may benefit from a higher EWA decay rate to improve optimization performance.

\begin{figure*}[h]
    \centering
    \includegraphics[width=\linewidth]{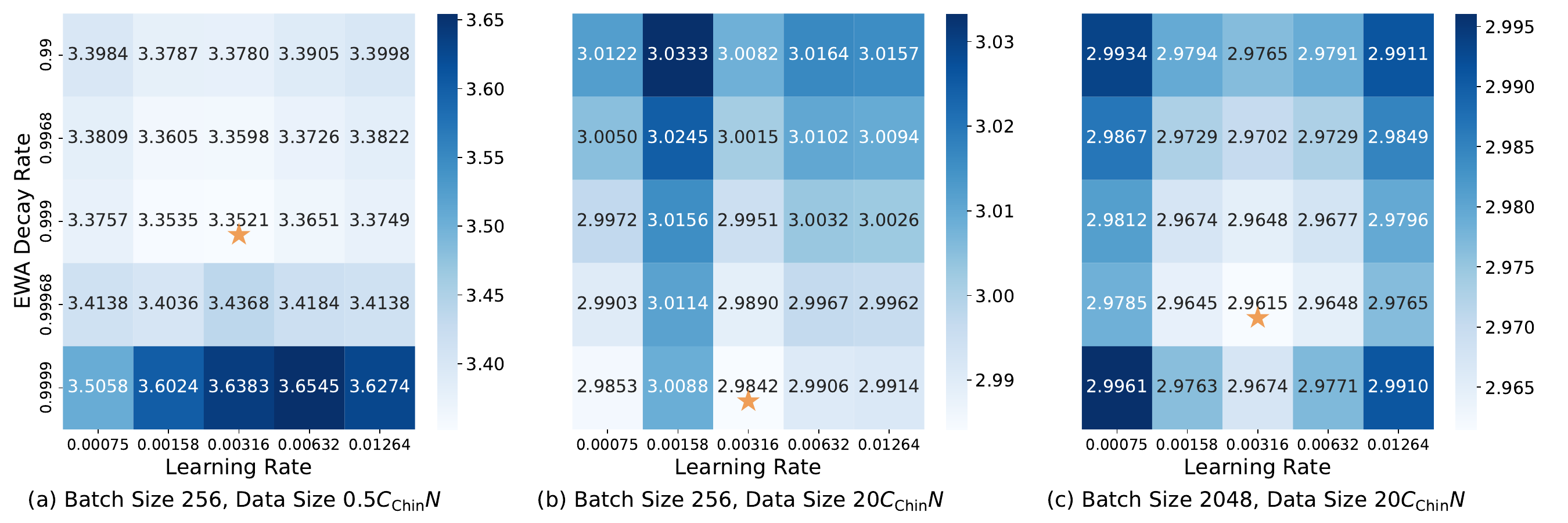}
    \caption{Impact of learning rate and EWA decay rate across various training durations. We report the validation loss at the end of training for each hyper-parameter combination. $N$ denotes the model size. The best loss is marked by a \textcolor{cadmiumorange}{\scalebox{1}{★}} symbol. Longer training durations, as seen in (b) and (c), necessitate a higher EWA decay rate for a given learning rate. } %
\label{fig:lr-heatmap}
\end{figure*}

\newpage
\textbf{Effects of weight decay.}
Though weight decay (WD) does not provide generalization benefits for pre-training, previous works have shown that it might improve convergence in language model training \citep{hoffmann2022training, kosson2023rotational}.
We adopt the default decoupled weight decay \citep{loshchilov2017decoupled} implementation in PyTorch and sweep over weight decay rate $\{0.01, 0.0316, 0.1\}$ for LR $0.01$, $\{0.0316, 0.1, 0.316\}$ for LR $0.00316$. 
We show in \Cref{fig:weight_decay} that for constant learning rate with EWA, while weight decay offers a slight performance improvement, 
it has little effect on the critical batch size, which remains our primary focus. Consequently, we disable weight decay throughout the paper.

\section{Additional Ablation Studies on Studying Adam Optimizer} %
\label{app:ablation}

We employ Adam as the default optimizer for large-scale model training throughout the paper. \begin{wrapfigure}{r}{.34\textwidth}
    \includegraphics[width=.34\textwidth]{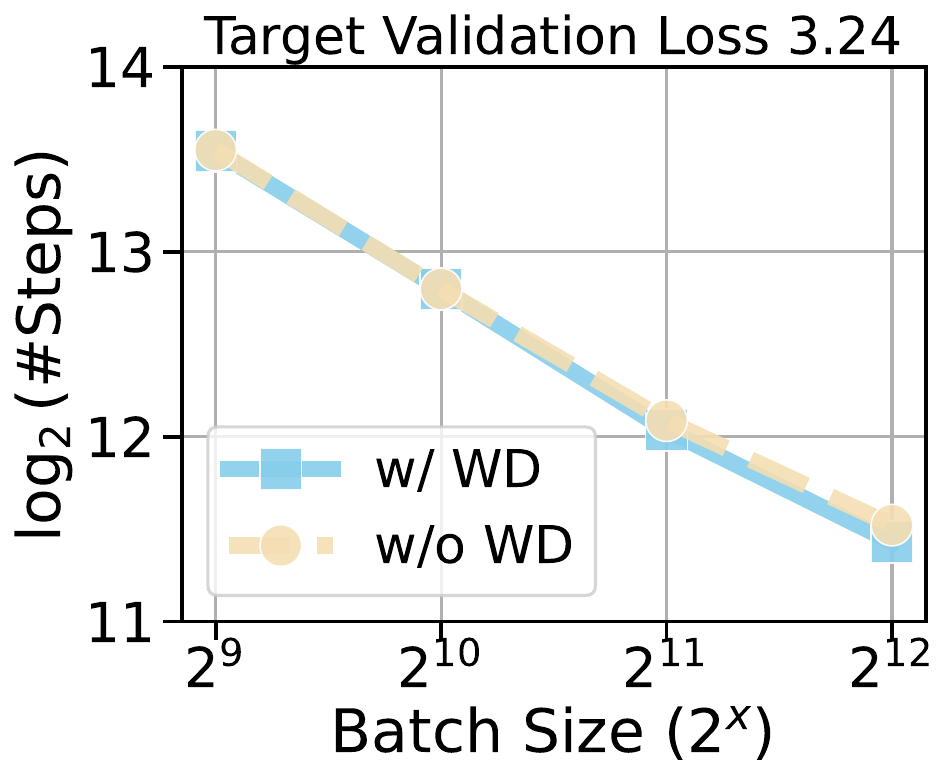}
    \caption{Comparison of efficiency with and without weight decay} 
    \label{fig:weight_decay}
\end{wrapfigure}
 In this section, we focus on two key hyper-parameters that significantly affect optimization efficiency and examine their impact in detail.

\textbf{The effect of momentum $\beta_1$ of Adam on CBS}.
We sweep over several momentum $\beta_1$ values in Adam for all learning rates and batch sizes: [0, 0.8, 0.9, \textbf{0.95}, 0.975]. 
Overall, \Cref{fig:momentum} shows that language model pre-training may need a large momentum value to be efficient and $\beta_1$ = 0.95 is slightly better ($<$0.02 gain on eval loss) than 0.9 for batch sizes. 
We observe that in small batch size regimes like $2^6$, the performance gap between optimizing with and without momentum $\beta_1$ is small while the gap increases as we double the batch size \citep{shallue2019measuring}.
Moreover, we show that momentum 0.9 and 0.975 have similar effects on the number of steps needed to reach a target validation loss and critical batch sizes. On the other hand, small momentum, especially no momentum, would hurt the optimization.
This aligns well with the extensively studied acceleration of momentum in SGD with momentum \citep{goh2017why}.
\begin{wrapfigure}{r}{0.45\textwidth}
    \centering
    \includegraphics[width=\linewidth]{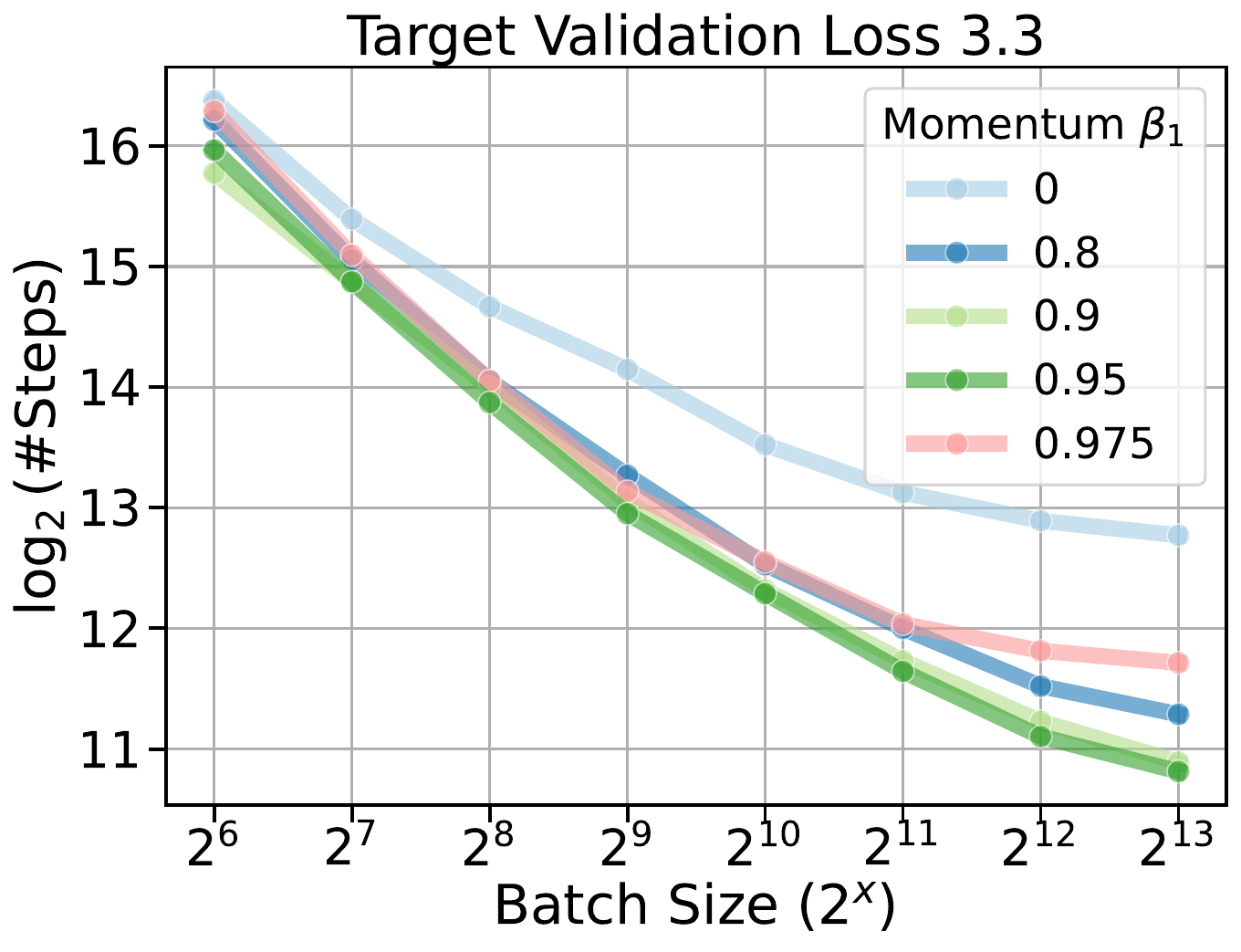}
    \caption{Ablation results on \textbf{momentum}. All data points are trained using 151M models and the total number of optimization steps to reach a fixed target loss is reported.}
    \label{fig:momentum}
\end{wrapfigure}

\textbf{The effect of the second moment decay rate $\beta_2$ in Adam}. As reported in Appendix \Cref{tab:optimal_hyerparam}, we found that $\beta_2$ in Adam, the exponential decay rate of the second momentum estimate of gradients to smooth out the model update, also has significant effects on training for small batch sizes. This might be because gradients in small-batch training are sparser. 
Specifically, we ablate $\beta_2 \in [0.95, 0.99, 0.999]$ for all model sizes and batch sizes in $[64, 128, 256, 512]$. 
We find that the default value $0.95$ in previous works that are set for millions of tokens batch size training might be sub-optimal \citep{smith2022using, wortsman2023smallscale, groeneveld2024olmo}. 
For large batch sizes $[1024, 2048, 4096, 8192]$, we experiment with a small $\beta_2=0.9$ with the model size 151M, finding that it is worse than the default $0.95$ we choose.
When training a larger model with a longer duration (e.g. Chinchilla settings in Appendix \Cref{fig:beta2}), a high enough $\beta_2$ is necessary.

\takeawaybox[Adam optimizer]{
\begin{itemize}[leftmargin=*]
    \item Momentum $\beta_1$ is important in improving training efficiency: a value of $0.95$ consistently performs well across various model sizes and batch sizes. However, setting it too high ($0.975$) or too low ($0.8$) leads to sub-optimal results.
    \item Smaller $\beta_2=0.95$ is helpful for large-batch training over short durations, while a large $\beta_2=0.99, 0.999$ or $0.9995$ is helpful for long-duration training and substantially improves small batch size training ($<$262k tokens).
\end{itemize} }

\section{Results Including Small Batch Sizes}
\label{app:all_bs}
For completeness, we demonstrate linear scaling behavior in small-batch regimes across all model sizes (\Cref{fig:small_batch_size}). This shows that all models exhibit linear scaling (with reasonable deviations) with a batch size ranging from $2^6$ to $2^{10}$, where doubling the batch size roughly halves the number of steps needed to reach a target validation loss, as determined by the optimal run with a batch size of 256 at the Chinchilla step. \begin{figure}[h]
    \centering
    \includegraphics[width=.7\textwidth]{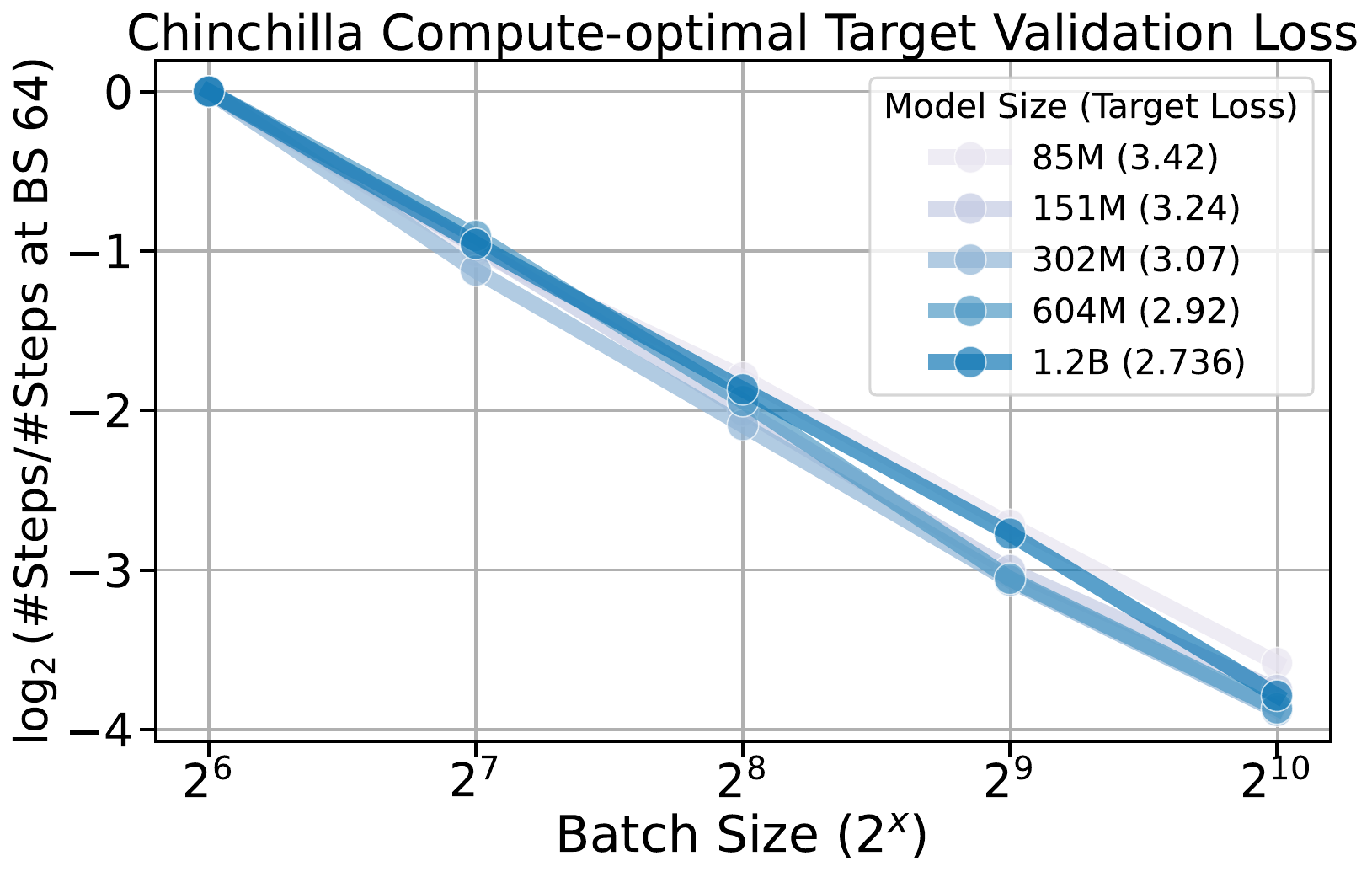}
    \caption{\textbf{Linear scaling regimes}: doubling the batch size can halve the optimization steps to reach the target loss.
    }
    \label{fig:small_batch_size}
\end{figure}

Moreover, we include all the results that contain the smallest several batch sizes (\Cref{fig:model_size}).
Note that the denominator is the number of steps to reach target loss at batch size \textbf{64} instead of 256 now.
Now we can observe clear linear scaling of all the model sizes till around $2^{10}$ for model sizes, while the largest three model sizes maintain linear scaling till almost $2^{11}$.
There are minor differences with the main plot in \Cref{fig:teaser} because of the difficulty of optimizing with very small batch sizes like 64 but it does not affect the conclusions and takeaways we would like to convey. 
Since our focus is primarily on large batch sizes, we consistently use $2^9$ as the starting batch size throughout the main text.
Furthermore, recall that we set $B_{\text{opt}}=2^8$ when selecting the target loss. \Cref{fig:small_batch_size} and \Cref{fig:model_size} confirm that $2^8$ falls within the linear scaling regime, which justifies our design choice.

\begin{figure}[h]
    \centering
    \includegraphics[width=.8\textwidth]{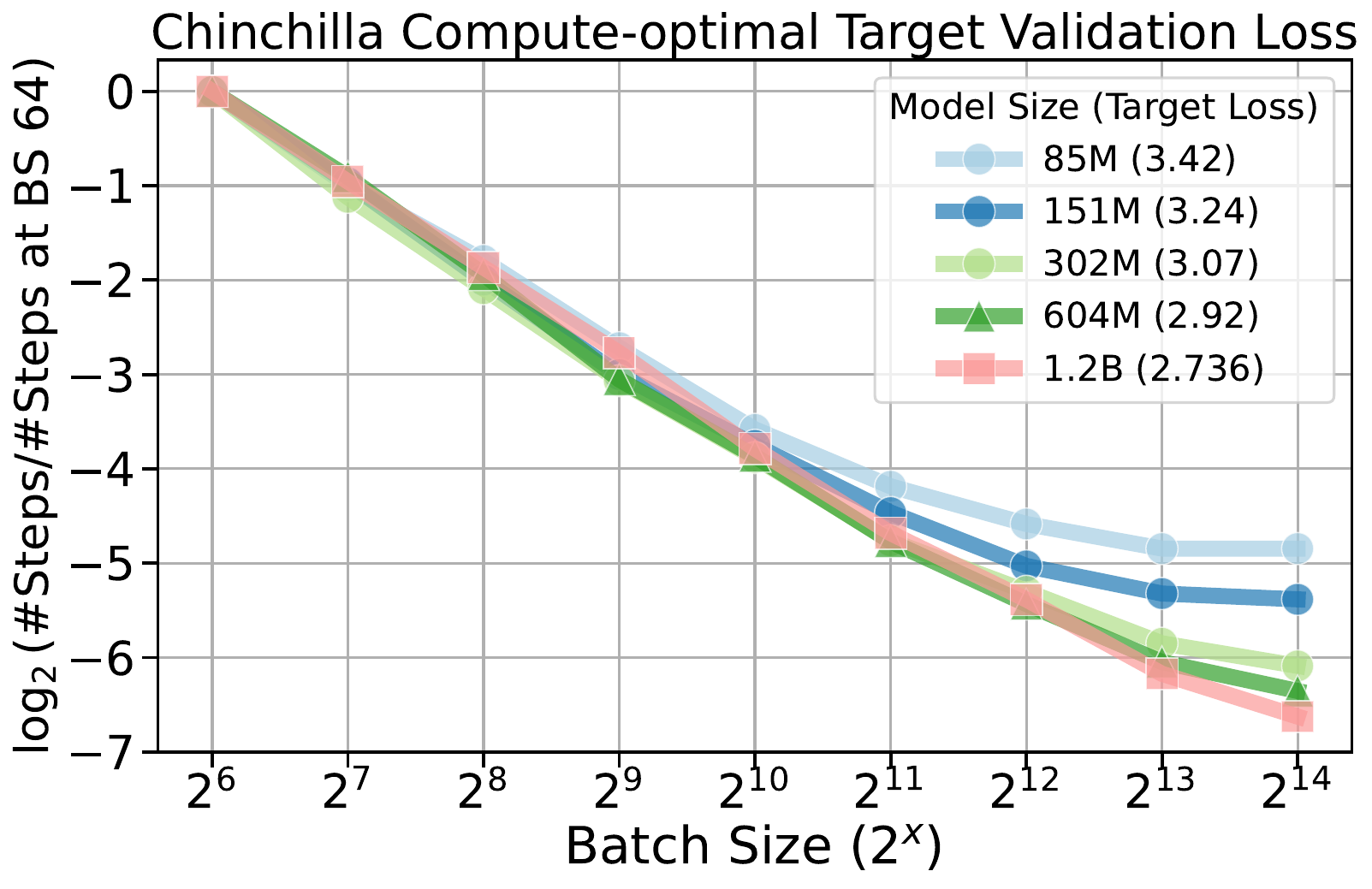}
    \caption{\textbf{Full results for different models in Chinchilla settings}. 
    We include both a largest batch size $2^{14}$ start the plot from several small batch sizes $2^6, 2^7, 2^8$. Relative number of steps \textit{w.r.t.} batch size $\mathbf{2^6}$ is reported. }
    \label{fig:model_size}
\end{figure}

\section{Additional Details on Experiments}
\label{app:exp_details}
\textbf{Optimizer setup}.
For optimizers, we try both SGD \citep{robbins1951stochastic} and Adam \citep{kingma2014adam} and find that SGD without momentum is significantly worse so we use Adam only for all the experiments. 
We disable weight decay in Adam as we observe that it does not significantly affect critical batch size (\Cref{fig:weight_decay}). 
For generality, though the training set C4 might contain low-quality or duplicated documents that can potentially lead to training instability \citep{muennighoff2023scaling, wortsman2023smallscale}, we observed that these issues did not affect our primary target of interest—namely, the final optimization efficiency. As a result, we didn't explicitly adopt additional normalization like QK normalization \citep{dehghani2023scaling, zhai23stabilizing} or a $\text{z-loss}$ \citep{chowdhery2022palm} to mitigate loss spikes.\footnote{We observe irregular loss spikes despite adopting gradient clipping, but most runs can still be optimized well in the end.} We set $\epsilon$ to be 1e-8 by default, and refer to momentum as $\beta_1$ in Adam by default throughout the paper. %

\textbf{Chinchilla steps for batch size 256 that determine the target losses.}
For each model size, we aim to establish a target validation loss by training with a global batch size of 256 on a Chinchilla-optimal amount of tokens. 
Given the context length of 512 used throughout, we can determine the number of training steps required based on the following \Cref{tab:target_steps}. 
We use a token-to-model size ratio $C_{\textrm{Chin}}$ of approximately $20.34$ to study the halving effects of doubling the batch size and to observe its impact on the critical batch size.

\begin{table*}[ht]
\centering
\caption{Chinchilla steps for determining the target loss for each model size.}
\begin{minipage}{\linewidth}
\centering
\begin{tabular}{c c c c c c}
    \toprule
    \toprule
    Model Size & 85M & 151M & 302M & 604M & 1.2B \\
    \hline
    Chinchilla Step & 13193 & 23438 & 46875 & 93750 & 187500 \\
    \bottomrule
    \bottomrule
\end{tabular}
\end{minipage}
\label{tab:target_steps}
\end{table*}

\textbf{Evaluation data size and frequency}. \label{app:validation}
To ensure frequent model evaluation on a holdout C4 validation set, it is important to maintain a balance between reliability and efficiency. A larger evaluation set size can provide more stable and reliable performance metrics, but it must also be efficient to maintain practicality in each run.
Using 151M models, we evaluated the variance across different token counts: 2.17e-4 for 327,680 tokens, 4.53e-5 for 1,638,400 tokens, and 7.65e-6 for 3,276,800 tokens. Based on these results, we have set the default number of evaluation batches to 100.
It is important to note that the total number of training steps varies across different batch sizes. To address this, we implement a hybrid evaluation protocol: the model is evaluated at intervals of $2^i$ (where $i \in \mathbb{Z}$ ), every 1,000 steps, and at $0.7 n, 0.75 n, 0.8 n, \ldots$, up to $n$ during the last $30 \%$ of the total steps $n$. This approach ensures more frequent evaluations toward the end of training, allowing for a more accurate assessment of the total training steps needed to achieve a target evaluation loss.

\textbf{Hyper-parameter search details}.
Due to compute constraints, we cannot perform an exhaustive search over all hyper-parameter configurations. Instead, as suggested by the ablation studies in the main text, we gain insights into hyper-parameters by training smaller proxy models (151M parameters).
We optimize the following hyper-parameters in sequence: learning rate, momentum $\left(\beta_1\right)$, warmup steps, scheduler, and context length. Additionally, we tune $\beta_2$ and $\tau$ for each model size and batch size. Specifically, for large batch sizes ($>$1024), a smaller $\beta_2$ and larger $\tau$ tend to be more effective, while the opposite holds true for smaller batch sizes, aligning with findings in \citep{porian2024resolving, zhang2023adamconvergemodificationupdate}.

Below we show the hyper-parameter choices (\Cref{tab:sweep_setting}) and optimal ones (\Cref{tab:optimal_hyerparam}) we report in our main plot for studying CBS with respect to model sizes.
Additionally, \Cref{tab:width-depth-config} presents various model size configurations and scaling methods, with models in \textbf{bold} indicating those used in our controlled experiments.

\begin{table*}[ht]
\centering
\caption{Model architecture details. } \label{tab:model_arch}
\adjustbox{max width=\textwidth}{
\begin{tabular}{c c c c c}
    \toprule
    \toprule
    Model Size & $n_{\text{heads}}$ & $n_{\text{layers}}$ & $d_{\text{model}}$ & Hidden size of MLPs \\
    \hline
    85M & 12 & 12 & 768 & 3072 \\
    151M & 16 & 12 & 1024 & 4096 \\
    302M & 16 & 24 & 1024 & 4096 \\
    604M & 16 & 12 & 2048 & 8192 \\
    1.2B & 32 & 24 & 2048 & 8192 \\
    \bottomrule
    \bottomrule 
\end{tabular}
}
\end{table*}

\begin{table*}[ht]
\centering
\caption{Sweeping experiments settings. Default values after the hyper-parameter search are in \textbf{Bold} font. Bold font means the default hyper-parameters that can closely reproduce our results without extensive tuning. Not bolding implies a full sweep for each model scale. The values in parentheses were not used for every sweep: for the 151M models, we tested learning rates of 3.16e-4 and 1e-2, but found that with EWA, these performed worse than 3.16e-3. The EWA decay rate of 0.99995 is only used for long 1.2B runs. } \label{tab:sweep_setting}
\adjustbox{max width=\textwidth}{
\begin{tabular}{c c}
    \toprule
    \toprule
    Hyper-parameter & Values \\
    \hline
    Model Size & 85M, 151M, 302M, 604M, 1.2B \\
    Batch size & $2^6\sim2^{14}$ \\
    Learning rate & (3.16e-4), 1e-3, \textbf{3.16e-3}, (1e-2) \\
    Learning rate scheduler & \textbf{constant+EWA}, cosine, WSD, schedule free \\
    Warmup fraction & 0.15, \textbf{0.25}, 0.35 \\
    Momentum $\beta_1$ & 0, 0.8, 0.9, \textbf{0.95}, 0.975 \\
    Adam $\beta_2$ & 0.95, 0.99, 0.995, 0.999, 0.9995 \\ 
    EWA decay rate $\tau$ & 0.95, 0.98, 0.99, 0.995, 0.998, 0.999, 0.9995, (0.99995) \\ 
    Context Length & \textbf{512}, 1024, 2048, 4096 \\
    Grad clipping norm & 1.0 \\
    \bottomrule
    \bottomrule 
\end{tabular}
}
\end{table*}

\begin{table}[h]
\centering
\caption{\textbf{Optimal} hyper-parameters for different model sizes. The optimal means the number of steps to reach a target validation loss. We refer $\beta_2$ as the exponential decay rate for the second-moment estimates in Adam, $\tau$ as the interpolation parameters in EWA ($\xi_{t+1}=\tau \cdot \xi_t+\left(1-\tau\right) \cdot \theta_t$). All the optimal runs are trained with momentum $\beta_1=0.95$ and learning rate 3.16e-3. }

\begin{minipage}[b]{0.25\linewidth}
    \centering
    \begin{tabular}{c c c}
    \toprule
    Batch Size & $\beta_2$ &  $\tau$ \\ 
    \midrule
    \multicolumn{3}{c}{85M} \\ \hline
    64   & 0.999  & 0.9995  \\
    128  & 0.999  & 0.9995  \\
    256  & 0.999  & 0.9995  \\
    512  & 0.999  & 0.998   \\
    1024 & 0.95   & 0.99    \\
    2048 & 0.95   & 0.99    \\
    4096 & 0.95   & 0.99    \\
    8192 & 0.95   & 0.98    \\
    16384& 0.95   & 0.98    \\
    \bottomrule
    \end{tabular}
\end{minipage}%
\hspace{0.05\linewidth}
\begin{minipage}[b]{0.3\linewidth}
    \centering
    \begin{tabular}{c c c}
    \toprule
    Batch Size & $\beta_2$ &  $\tau$ \\ 
    \midrule
    \multicolumn{3}{c}{151M} \\ \hline
    64   & 0.99   & 0.998  \\
    128  & 0.99   & 0.998  \\
    256  & 0.99   & 0.998  \\
    512  & 0.99   & 0.998  \\
    1024 & 0.95   & 0.95   \\
    2048 & 0.99   & 0.99   \\
    4096 & 0.99   & 0.99   \\
    8192 & 0.95   & 0.95   \\
    16384& 0.99   & 0.99   \\
    \bottomrule
    \end{tabular}
\end{minipage}%
\vspace{0.2cm}

\begin{minipage}[b]{0.3\linewidth}
    \centering
    \begin{tabular}{c c c}
    \toprule
    Batch Size & $\beta_2$ &  $\tau$ \\ 
    \midrule
    \multicolumn{3}{c}{302M} \\ \hline
    64   & 0.999  & 0.9995  \\
    128  & 0.999  & 0.9995  \\
    256  & 0.995  & 0.9995  \\
    512  & 0.99   & 0.9995  \\
    1024 & 0.99   & 0.999   \\
    2048 & 0.95   & 0.998   \\
    4096 & 0.95   & 0.995   \\
    8192 & 0.95   & 0.99    \\
    16384& 0.99   & 0.99    \\
    \bottomrule
    \end{tabular}
\end{minipage}%
\hspace{0.01\linewidth}
\begin{minipage}[b]{0.3\linewidth}
    \centering
    \begin{tabular}{c c c}
    \toprule
    Batch Size & $\beta_2$ &  $\tau$ \\ 
    \midrule
    \multicolumn{3}{c}{604M} \\ \hline
    64   & 0.9995 & 0.9995  \\
    128  & 0.9995 & 0.9995  \\
    256  & 0.9995 & 0.9995  \\
    512  & 0.9995 & 0.9995  \\
    1024 & 0.999  & 0.999   \\
    2048 & 0.998  & 0.998   \\
    4096 & 0.995  & 0.995   \\
    8192 & 0.99   & 0.99    \\
    16384& 0.99   & 0.995   \\
    \bottomrule
    \end{tabular}
\end{minipage}%
\hspace{0.01\linewidth}
\begin{minipage}[b]{0.3\linewidth}
    \centering
    \begin{tabular}{c c c}
    \toprule
    Batch Size & $\beta_2$ &  $\tau$ \\ 
    \midrule
    \multicolumn{3}{c}{1.2B} \\ \hline
    64   & 0.999  & 0.9995 \\
    128  & 0.999  & 0.9995 \\
    256  & 0.995  & 0.9995 \\
    512  & 0.99   & 0.9995 \\
    1024 & 0.99   & 0.999  \\
    2048 & 0.95   & 0.998  \\
    4096 & 0.95   & 0.995  \\
    8192 & 0.95   & 0.99   \\
    16384& 0.95   & 0.99   \\
    \bottomrule
    \end{tabular}
\end{minipage}

\label{tab:optimal_hyerparam}
\end{table}

\begin{table*}[ht]
\centering
\caption{Model architectures of the ablation study on the scaling of \textcolor{candypink}{Depth} and \textcolor{mayablue}{Width}. Only models highlighted in \textbf{bold} are used, as they are more comparable in terms of model size.}
\begin{minipage}{.43\linewidth}
\begin{tabular}{c c c c c}
    \toprule
    \toprule
    Model Size & $n_{\text{heads}}$ & $\mathbf{n_{\textbf{layers}}}$ & $d_{\text{model}}$ & $\text{MLP}_{\text{hidden}}$ \\
    \hline
    \textbf{151M} & 16 & \textcolor{candypink}{12} & 1024 & 4096 \\ %
    302.09M & 16 & \textcolor{candypink}{24} & 1024 & 4096 \\
    \textbf{604.18M} & 16 & \textcolor{candypink}{48} & 1024 & 4096 \\
    1.208B & 16 & \textcolor{candypink}{96} & 1024 & 4096 \\
    \bottomrule
    \bottomrule
\end{tabular}
\end{minipage}

\begin{minipage}{.43\linewidth}
\begin{tabular}{c c c c c}
    \toprule
    \toprule
    Model Size & $n_{\text{heads}}$ & $n_{\text{layers}}$ & $\mathbf{d_{\textbf{model}}}$ & $\text{\textbf{MLP}}_{{\textbf{hidden}}}$ \\ %
    \hline
    \textbf{151M} & 16 & 12 & \textcolor{mayablue}{1024} & \textcolor{mayablue}{4096} \\
    339.81M & 24 & 12 & \textcolor{mayablue}{1536} & \textcolor{mayablue}{6144} \\
    \textbf{604.08M} & 32 & 12 & \textcolor{mayablue}{2048} & \textcolor{mayablue}{8192} \\
    943.84M & 40 & 12 & \textcolor{mayablue}{2560} & \textcolor{mayablue}{10240} \\
    \bottomrule
    \bottomrule 
\end{tabular}
\end{minipage}
\label{tab:width-depth-config}
\end{table*}

\section{Additional Details on Scaling Laws}

We first present the fitted power law relationship between the number of optimization steps required to reach the target loss and the batch size (\Cref{tab:scaling_cbs}). 
All the results are obtained by solving the equation in \Cref{sec:closed_form} via \texttt{scipy.optimize.fsolve} using default hyper-parameters.

\begin{table*}[ht]
\centering
\caption{Fitted scaling law parameters for Chinchilla settings when fixing \textbf{$\alpha=1$}: $\log(Y) = \log(a + \frac{b}{B^{\alpha}})$, where $Y$ is the number of steps to reach Chinchilla target loss, $B$ denotes the batch size, and the critical batch size is solved as $B^*=(\frac{b+5a\times1.2\times B_{\text{opt}}}{5a})^\frac{1}{\alpha}$, $B_{\text{opt}}=256$.
}
\label{tab:scaling_cbs}
\begin{subtable}{0.47\textwidth}
    \centering
    \caption{Fixed $\alpha=1$ (default)}
    \adjustbox{max width=\textwidth}{
    \begin{tabular}{c c c c c}
        \toprule
        Model Size & $a$ & $b$ & $\alpha$ & $\log_2(B^*)$ \\
        \midrule
        85M  & 1293.83 & 2834258.08 & 1 & 9.54 \\
        151M & 1752.42 & 5677478.78 & 1 & 9.90 \\
        302M & 2095.35 & 11383269.89 & 1 & 10.44 \\
        604M & 2459.93 & 19449688.59 & 1 & 10.88 \\
        1.2B & 3897.31 & 43381130.22 & 1 & 11.31 \\
        \bottomrule
    \end{tabular}}
\end{subtable}
\hfill
\begin{subtable}{0.49\textwidth}
    \centering
    \caption{Fitted $\alpha$}
    \adjustbox{max width=\textwidth}{
    \begin{tabular}{c c c c c}
        \toprule
        Model Size & $a$ & $b$ & $\alpha$ & $\log_2(B^*)$ \\
        \midrule
        85M  & 1348.31 & 3386537.23 & 1.03 & 9.34 \\ 
        151M & 1943.53 & 8259867.95 & 1.07 & 9.51 \\
        302M & 2281.48 & 13977184.09 & 1.04 & 10.20 \\
        604M & 2733.81 & 23738850.26 & 1.04 & 10.62 \\
        1.2B & 3388.53 & 36748556.71 & 0.97 & 11.62 \\
        \bottomrule
    \end{tabular}}
\end{subtable}
\end{table*}

We report forecasting results for various model and token sizes, extending beyond the plots presented in the main text (\Cref{tab:more_forecast}). 
For each row, increasing either model size or token size shows that the forecasting results remain comparable.

\begin{table*}[h]
    \centering
    \caption{Additional forecasted CBS results for larger scale. Recall that we fit $B^* = 93.20 \times N^{0.47}$, $B^* = 22.91 \times D^{0.47}$ where model size $N$ is in millions and data size $D$ is in billions.}
    \begin{tabular}{c c c}
    \toprule
    \toprule
    Model Size & Forecasted CBS & $\log_2(B^*)$ \\
    \hline
    1.5B & 2862.17 & 11.48 \\
    2B & 3274.93 & 11.68 \\
    2.5B & 3635.65 & 11.83 \\
    3B & 3959.69 & 11.95 \\
    3.5B & 4256.09 & 12.06 \\
    4B & 4530.72 & 12.15 \\
    4.5B & 4787.63 & 12.23 \\
    5B & 5029.77 & 12.30 \\
    5.5B & 5259.34 & 12.36 \\
    6B & 5478.06 & 12.42 \\
    \bottomrule
    \bottomrule
    \end{tabular}
    \begin{tabular}{c c c}
    \toprule
    \toprule
    Token Size & Forecasted CBS & $\log_2(B^*)$ \\
    \hline
    30B & 2833.31 & 11.47 \\
    40B & 3240.99 & 11.66 \\
    50B & 3597.20 & 11.81 \\
    60B & 3917.12 & 11.94 \\
    70B & 4209.70 & 12.04 \\
    80B & 4480.76 & 12.13 \\
    90B & 4734.29 & 12.21 \\
    100B & 4973.22 & 12.28 \\
    110B & 5199.73 & 12.34 \\
    120B & 5415.52 & 12.40 \\
    \bottomrule
    \bottomrule
    \end{tabular}
    \label{tab:more_forecast}
\end{table*}

Note that our definition of CBS and its scaling law have a similar interpretation with the one in \citep{mccandlish2018empirical, kaplan2020scaling} as $\frac{E_{\min }}{S_{\min }}$, ${S_{\min }}$ denotes the minimum possible number of steps taken to reach target loss and $E_{\min }$ is the minimum possible number of training examples processed to reach target loss. 
In particular, recall that critical batch size can be analytically derived as $B^*=\frac{b}{5a}+1.2 B_{\mathrm{opt}}$. 
This relationship reflects the point where batch size scaling incurs a 20\% overhead when the batch size is doubled while \citep{mccandlish2018empirical}.
Here, the parameter $b$ plays a role analogous to $E_{\min }$, while $a$ corresponds to $S_{\min }$, depending on the specific overhead chosen to characterize the diminishing returns from increasing the batch size.
We also note that the diminishing return overhead can vary, leading to the following observations:
$10\%: B^* = 20.67 \times D^{0.48}, 20\%: B^* = 22.91 \times D^{0.47}, 50\%: B^* = 30.50 \times D^{0.44}$.

\section{Reproducibility}
In our training environment, we verify that, across multiple model sizes (2.4M, 9.4M, 19M, 42M, 85M, 151M, 302M), we can (approximately) reproduce the final evaluation loss of Figure 1 in \citep{wortsman2023smallscale}.
We use nodes equipped with 8 A100 GPUs, each with 80GiB of memory, for model training.
We built our training framework using the Olmo training suite \citep{groeneveld2024olmo}.

\section{Complete Proofs in \texorpdfstring{\Cref{sec:least-square}}{Section 5.2}}\label{app:sec:least-square}

\begin{proof}[Proof of \Cref{thm:mini-batch}]
The work by \citet{zou2023benign} studied SGD with batch size $1$ for linear regression and established matching (up to a constant factor) upper and lower bounds on the excess risk. Our theorem generalizes theirs by further considering the effect of batch size. 
Our analysis uses their intermediate results through appropriate reductions.
We first define a set of operations on PSD matrices as follows:
\begin{gather*}%
    \mathcal I = \mathbf I \otimes \mathbf I,\quad
    \mathcal M^{\mathrm{B}} = \mathbb E \bigg[ \bigg(\frac{1}{B}\sum_{i\in\mathcal I}\mathbf x_i\mathbf x_i^\top\bigg) \otimes \bigg(\frac{1}{B}\sum_{i\in\mathcal I}\mathbf x_i\mathbf x_i^\top\bigg) \bigg],\quad
    \widetilde{\mathcal M} = \mathbf H \otimes \mathbf H, \\
    \mathcal T^{\mathrm{B}} = \mathbf H \otimes \mathbf I + \mathbf I \otimes \mathbf H - \gamma\mathcal M^{\mathrm{B}}, \quad
    \widetilde{\mathcal T} = \mathbf H \otimes \mathbf I + \mathbf I \otimes \mathbf H - \gamma\mathbf H\otimes\mathbf H,
\end{gather*}
where $\mathcal I$ is an index set of $B$ independent data.
Note that
\begin{align*}
\big(\mathcal M^{\mathrm{B}}-\widetilde {\mathcal M}\big)\circ \mathbf A = \mathrm{Cov}\bigg(\frac{1}{B}\sum_{i\in\mathcal I}\mathbf x_i\mathbf x_i^\top \mathbf A^{1/2} \bigg) = \frac{1}{B}\mathrm{Cov}(\mathbf x\mathbf x^\top\mathbf A^{1/2}).
\end{align*}
For Gaussian data $\mathbf x\in\mathcal N(0,\mathbf H)$, we have
\begin{align*}
\mathrm{Cov}(\mathbf x\mathbf x^\top\mathbf A^{1/2})= \mathbb E_{\mathbf x\in\mathcal N(0,\mathbf H)}\big[\mathbf x\mathbf x^\top\mathbf A\mathbf x\mathbf x^\top \big] - \mathbf H\mathbf A\mathbf H = 2 \tr(\mathbf H\mathbf A) \mathbf H.
\end{align*}
Together, we obtain
\begin{align*}
\big(\mathcal M^{\mathrm{B}}-\widetilde {\mathcal M}\big)\circ \mathbf A=\frac{2}{B} \mathrm{tr}(\mathbf H\mathbf A)\mathbf H.
\end{align*}
Now we compute the error propagation along the SGD steps. Let $\boldsymbol \eta_t=\mathbf w_t-\mathbf w^*$ be the error vector. 
For convenience, let $\mathbf G_t = \frac{1}{B}\sum_{i\in\mathcal I_t}\mathbf x_i\mathbf x_i^\top$ be the empirical covariance of an independent batch.
Then we can define the bias and variance iterates as
\begin{align*}
\boldsymbol \eta_t^{\mathrm{bias}} = \big(\mathbf I-\gamma \mathbf G_t\big)\boldsymbol \eta_{t-1}^{\mathrm{bias}},\quad t=1,\dots,n-1, \quad \boldsymbol \eta_0^{\mathrm{bias}}= \wB_0-\wB^*,
\end{align*}
and
\begin{align*}
\boldsymbol \eta_t^{\mathrm{variance}} = \big(\mathbf I-\gamma \mathbf G_t\big)\boldsymbol \eta_{t-1}^{\mathrm{variance}}+\gamma\cdot \frac{1}{B}\sum_{i\in\mathcal I_t}\xi_i\mathbf x_i, \quad t=1,\dots,n-1,\quad \boldsymbol \eta_0^{\mathrm{variance}}=\boldsymbol 0,
\end{align*}
where $\xi_i = y_i - \xB^\top_i \wB^* \sim \Ncal(0,\sigma^2)$.
We then compute the covariance matrices of these two error iterates 
\[\mathbf B_t^{\mathrm{B}}:=\mathbb E[\boldsymbol \eta_t^{\mathrm{bias}}\otimes \boldsymbol \eta_t^{\mathrm{bias}}],\quad \mathbf C_t^{\mathrm{B}}:=\mathbb E[\boldsymbol \eta_t^{\mathrm{variance}}\otimes \boldsymbol \eta_t^{\mathrm{variance}}].\] 
Using the operators, these covariance matrices take the following iterative updates:
\begin{align*}
\mathbf B_0^{\mathrm{B}} =\boldsymbol\eta_0\otimes\boldsymbol\eta_0,\quad 
\mathbf B_t^{\mathrm{B}} &= \mathbb E_{\mathbf G_t}\big[(\mathbf I-\gamma\mathbf G_t)\mathbf B_{t-1}^{\mathrm{B}}(\mathbf I-\gamma\mathbf G_t)\big]=\big(\mathcal I-\gamma\mathcal T^{\mathrm{B}}\big)\circ \mathbf B_{t-1}^{\mathrm{B}}, \\
\mathbf C_0^{\mathrm{B}} =\boldsymbol{0},\quad 
\mathbf C_t^{\mathrm{B}} &= \mathbb E_{\mathbf G_t}\big[(\mathbf I-\gamma\mathbf G_t)\mathbf C_{t-1}^{\mathrm{B}}(\mathbf I-\gamma\mathbf G_t)\big]+ \frac{\gamma^2}{B^2} \mathbb E\bigg[\bigg(\sum_{i\in\mathcal I_t}\xi_i\mathbf x_i\bigg)\bigg(\sum_{i\in\mathcal I_t}\xi_i\mathbf x_i\bigg)^\top\bigg]\\
&=\big(\mathcal I-\gamma\mathcal T^{\mathrm{B}}\big)\circ \mathbf C_{t-1}^{\mathrm{B}} + \frac{\gamma^2\sigma^2}{B} \mathbf H, 
\end{align*}
where the last equation is because
\begin{align*}
\mathbb E\bigg[\bigg(\sum_{i\in\mathcal I_t}\xi_i\mathbf x_i\bigg)\bigg(\sum_{i\in\mathcal I_t}\xi_i\mathbf x_i\bigg)^\top\bigg]=\mathbb E\bigg[\sum_{i\in\mathcal I_t}\xi_i^2\mathbf x_i\mathbf x_i^\top\bigg]=\sigma^2 B \mathbf H.
\end{align*}
Recall that $\bar {\mathbf w} =\frac{1}{n}\sum_{t=0}^{n-1}\mathbf w_t$. 
First, using Lemmas B.3 and C.1 in \citep{zou2023benign}, we get the following bias-variance decomposition (note that our setting is well-specified):
\begin{align*}
\mathbb E[\mathcal R(\bar{\mathbf w})]-\min\Rcal(\cdot) = \mathrm{bias} + \mathrm{variance}, 
\end{align*}
where
\begin{gather*}
    \mathrm{bias} :=  \frac{1}{2} \langle \HB, \Ebb[{\bar\etaB}^{\mathrm{bias}} \otimes {\bar\etaB}^{\mathrm{bias}}] \rangle 
    \begin{dcases}
        \le \frac{1}{n^2}\sum_{t=0}^{n-1}\sum_{k=t}^{n-1}\big\langle (\IB-\gamma\HB)^{k-t}\HB,\BB_t^{\mathrm{B}}\big\rangle,\\ 
        \ge \frac{1}{2n^2}\sum_{t=0}^{n-1}\sum_{k=t}^{n-1}\big\langle (\IB-\gamma\HB)^{k-t}\HB,\BB_t^{\mathrm{B}}\big\rangle,
    \end{dcases} \\
    \mathrm{variance} := \frac{1}{2} \langle \HB, \Ebb[{\bar\etaB}^{\mathrm{variance}} \otimes {\bar\etaB}^{\mathrm{variance}}] 
   \begin{dcases}
   \le \frac{1}{n^2}\sum_{t=0}^{n-1}\sum_{k=t}^{n-1}\big\langle (\IB-\gamma\HB)^{k-t}\HB,\CB_t^{\mathrm{B}}\big\rangle, \\
   \ge \frac{1}{2n^2}\sum_{t=0}^{n-1}\sum_{k=t}^{n-1}\big\langle (\IB-\gamma\HB)^{k-t}\HB,\CB_t^{\mathrm{B}}\big\rangle,
    \end{dcases} 
\end{gather*}
where 
\begin{align*}
    {\bar\etaB}^{\mathrm{bias}} := \frac{1}{n}\sum_{t=0}^{n-1} {\bar\etaB}^{\mathrm{bias}}_t,\quad  
    {\bar\etaB}^{\mathrm{variance}} := \frac{1}{n}\sum_{t=0}^{n-1}{\bar\etaB}^{\mathrm{variance}}_t.
\end{align*}
The remaining efforts are to characterize $\mathbf B_t^{\mathrm{B}}$ and $\mathbf C_t^{\mathrm{B}}$ for a batch size $B$. 
For the bias part, we have
\begin{align*}
\mathbf B_t^{\mathrm{B}} &= \big(\mathcal I-\gamma\mathcal T^{\mathrm{B}}\big)\circ \mathbf B_{t-1}^{\mathrm{B}}\\ 
&= \big(\mathcal I - \gamma \tilde{\mathcal T}\big)\circ \mathbf B_{t-1}^{\mathrm{B}} + \gamma^2 \big(\mathcal M^{\mathrm{B}}-\tilde {\mathcal M}\big)\circ \mathbf B_{t-1}^{\mathrm{B}}\\
& = \big(\mathcal I - \gamma \tilde{\mathcal T}\big)\circ \mathbf B_{t-1}^{\mathrm{B}} + \frac{2\gamma^2}{B} \tr\big(\mathbf H\mathbf B_{t-1}^{\mathrm{B}}\big) \mathbf H,\quad t=1,\dots,n-1.
\end{align*}
For the variance part, we have
\begin{align*}
\mathbf C_t^{\mathrm{B}} &= (\mathcal I-\gamma \mathcal T^{\mathrm{B}})\circ \mathbf C_{t-1}^{\mathrm{B}}+\frac{\gamma^2\sigma^2}{B}\mathbf H\\
& = (\mathcal I - \gamma \tilde{\mathcal T})\circ \mathbf C_{t-1}^{\mathrm{B}} + \gamma^2\big(\mathcal M^{\mathrm{B}}-\tilde{\mathcal M}\big)\circ\mathbf C_{t-1}+\frac{\gamma^2\sigma^2}{B}\mathbf H\\
&= (\mathcal I - \gamma \tilde{\mathcal T})\circ \mathbf C_{t-1}^{\mathrm{B}} + \frac{2\gamma^2}{B}\mathrm{tr}\big(\mathbf H\mathbf C_{t-1}^{\mathrm{B}}\big) \mathbf H+\frac{\gamma^2\sigma^2}{B}\mathbf H,\quad t=1,\dots,n-1.
\end{align*}
To obtain an upper bound on excess risk, we replace $\alpha$ in Assumption 2.2 of \citet{zou2023benign} with $2/B$, the number of steps with $n := D/B$, and the noise level $\sigma^2$ with $\sigma^2 / B$, then apply the proof of Theorem 2.1.
Similarly, for a lower bound on excess risk, we replace $\beta$ in Assumption 2.4 with $2/B$, the number of steps with $n := T/B$, and the noise level $\sigma^2$ with $\sigma^2 / B$, and apply the proof of Theorem 2.2.
By doing the above, we obtain the following matching up to constant factors upper and lower bounds on the excess risk for mini-batch SGD:
\begin{align*}
  \Ebb \Rcal(\bar \wB)- \min\Rcal(\cdot)
  &\eqsim \bigg(\frac{1}{n \gamma}\bigg)^2 \|\wB_0 - \wB^*\|^2_{\HB^{-1}_{0:k^*}} + \|\wB_0 - \wB^*\|^2_{\HB_{k^*:\infty}} \\
&\quad + \frac{1/B \big(  \|\wB_0 - \wB^*\|^2_{\IB_{0:k^*}}  + n\gamma  \|\wB_0 - \wB^*\|^2_{\HB_{k^*:\infty}} \big)}{n \gamma} \cdot \frac{k^* + (n \gamma)^2 \sum_{i>k^*} \lambda_i^2}{n} \\
&\quad + \frac{\sigma^2}{B}\cdot  \frac{k^* + (n \gamma)^2 \sum_{i>k^*} \lambda_i^2}{n},
\end{align*} 
where $k^* = \max\{k: \lambda_k \ge 1/(n\gamma)\}$, and a sufficient stepsize condition (see Lemma 4.1, Theorems 2.1 and 2.2 in \citet{zou2023benign}) is 
\begin{align*}
   0<  \gamma \lesssim \min\bigg\{ \frac{1}{\alpha \tr(\HB)}, \frac{1}{\|\HB\|_2}\bigg\} \eqsim \min\bigg\{ \frac{B}{\tr(\HB)}, \frac{1}{\|\HB\|_2}\bigg\}.
\end{align*}
The assumption $\|\mathbf w_0-\mathbf w^*\|_{\HB}^2\lesssim\sigma^2$ implies 
\[
 \frac{ \|\wB_0 - \wB^*\|^2_{\IB_{0:k^*}}  + n\gamma  \|\wB_0 - \wB^*\|^2_{\HB_{k^*:\infty}} }{n \gamma}\le \|\mathbf w_0-\mathbf w^*\|_{\HB}^2 \lesssim \sigma^2,
\]
which further simplifies the excess risk bounds to 
\begin{align*}
  \Ebb \Rcal(\bar \wB)- \min\Rcal(\cdot)
  \eqsim \bigg(\frac{1}{n \gamma}\bigg)^2 \|\wB_0 - \wB^*\|^2_{\HB^{-1}_{0:k^*}} + \|\wB_0 - \wB^*\|^2_{\HB_{k^*:\infty}}  + \frac{\sigma^2}{B}\cdot  \frac{k^* + (n \gamma)^2 \sum_{i>k^*} \lambda_i^2}{n}.
\end{align*} 
Finally, replacing $n=D/B$ in the bounds completes our proof.
\end{proof}

\begin{proof}[Proof of \Cref{thm:cbs-linear-regression}]
By $\lambda_i \eqsim i^{-a}$, we can solve for $k^*$ to obtain
\(
k^*\eqsim ({D\gamma}/{B})^{1/a}.
\)
We then calculate the expected excess risk by Theorem \ref{thm:mini-batch} using the capacity and source conditions:
\begin{align*}
     \Ebb \Rcal(\bar \wB)- \sigma^2 
     &\eqsim  \Ebb \bigg(\bigg(\frac{B}{D \gamma}\bigg)^2 \|\wB^*\|^2_{\HB^{-1}_{0:k^*}} + \|\wB^*\|^2_{\HB_{k^*:\infty}}  \bigg)
  +  \frac{k^* + (D \gamma / B)^2 \sum_{i>k^*} \lambda_i^2}{D} \\
  &\eqsim \bigg(\frac{B}{D \gamma}\bigg)^2 \sum_{i\le k^*} i^{-b+2a} + \sum_{i> k^*} i^{-b} + \frac{1}{D} \bigg( k^* + \bigg(\frac{D \gamma}{B}\bigg)^2  \sum_{i>k^*} i^{-2a} \bigg) \\
  &\eqsim \bigg(\frac{B}{D \gamma}\bigg)^2 \max\big\{ (k^*)^{1-b+2a}, 1\} +  (k^*)^{1-b} + \frac{1}{D} \bigg( k^* + \bigg(\frac{D \gamma}{B}\bigg)^2 (k^*)^{1-2a} \bigg) \\
  &\eqsim \max\bigg\{ \bigg(\frac{D \gamma}{B}\bigg)^{(1-b)/a},\, \bigg(\frac{D \gamma}{B}\bigg)^{-2} \bigg\} + \frac{1}{D} \bigg(\frac{D \gamma}{B}\bigg)^{1/a}.
\end{align*}
We then discuss three cases.
\begin{enumerate}[leftmargin=*]
\item When $b\le a$, we have \begin{align*}
     \Ebb \Rcal(\bar \wB)- \sigma^2 
  &\eqsim  \bigg(\frac{D \gamma}{B}\bigg)^{(1-b)/a}  + \frac{1}{D} \bigg(\frac{D \gamma}{B}\bigg)^{1/a} \eqsim \bigg(\frac{D \gamma}{B}\bigg)^{(1-b)/a},
\end{align*}
where the last equality is because $\gamma / B \lesssim 1$ so the first term dominates the second term. So the optimal hyper-parameters are $\gamma^*\eqsim 1$ and $B^*=1$.
    \item 
When $a< b < 2a+1$, we have 
\begin{align*}
     \Ebb \Rcal(\bar \wB)- \sigma^2 
  &\eqsim  \bigg(\frac{D \gamma}{B}\bigg)^{(1-b)/a}  + \frac{1}{D} \bigg(\frac{D \gamma}{B}\bigg)^{1/a},
\end{align*}
so the optimal hyper-parameters are 
\begin{align*}
    0<\gamma^*\lesssim 1,\quad 
    1\le B^* \le D, \quad
    \gamma^*/B^* \eqsim  D^{a/b-1}.
\end{align*}
\item When $b > 2a+1$, we have 
\begin{align*}
     \Ebb \Rcal(\bar \wB)- \sigma^2 
  &\eqsim \bigg(\frac{D \gamma}{B}\bigg)^{-2}  + \frac{1}{D} \bigg(\frac{D \gamma}{B}\bigg)^{1/a},
\end{align*}
so the optimal hyper-parameters are 
\begin{align*}
    0<\gamma^*\lesssim 1,\quad 
    1\le B^* \le D, \quad
    \gamma^*/B^* \eqsim  D^{a/(2a+1)-1}.
\end{align*}
\end{enumerate}
Combining the second and third cases completes the proof.
\end{proof}

\end{document}